\documentclass[11pt]{article}
\usepackage{amsmath,amssymb,amsthm,bm}
\usepackage[margin=1in]{geometry}
\usepackage{enumitem}
\usepackage{hyperref}
\usepackage{xcolor}
\usepackage{courier}
\usepackage{pgfplots}
\pgfplotsset{compat=1.18}
\usepackage{booktabs}

\definecolor{cblue}{HTML}{1f77b4}
\definecolor{cred}{HTML}{d62728}
\usepackage{xcolor}
\usepackage{pgfplots}
\pgfplotsset{compat=1.18}

\definecolor{cgray}{HTML}{888888}

\newcommand{\X}{\mathcal{X}}

\newcommand{\E}{\mathbb{E}}
\newcommand{\Prob}{\mathbb{P}}

\newcommand{\dkl}[2]{D_{\mathrm{KL}}\!\left(#1\,\|\,#2\right)}
\newcommand{\tv}[1]{\mathrm{TV}\!\left(#1\right)}

\usepackage{amsfonts}
\usepackage{times}
\usepackage{latexsym}
\usepackage{amssymb}
\usepackage{amsmath}
\usepackage{verbatim}
\newtheorem{theorem}{Theorem}
\newtheorem{assumption}[theorem]{Assumption}

\newtheorem{remark}{Remark}
\newtheorem{definition}[theorem]{Definition}
\newtheorem{lemma}[theorem]{Lemma}

\def\bb0{{\mathbb{0}}}

\def\bb{{\mathbf{b}}}

\def\b0{{\mathbf{0}}}

\def\b1{{\mathbf{1}}}

\def\bbP{{\mathbb{P}}}

\def\cF{\mathcal{F}}

\def\cP{\mathcal{P}}

\def\sf0{{\mathsf{0}}}

\title{Simultaneous Coverage and Efficiency Guarantee in  \\ Online Conformal Prediction}
\author{Rahul Vaze \\ School of Technology and Computer Science \\ Tata Institute of Fundamental Research, Mumbai\\ \texttt{\small rahul.vaze@gmail.com}}
\date{}

\begin{document}
\maketitle

\begin{abstract}
Adaptive conformal inference (ACI) of Gibbs and Cand{\`e}s \cite{gibbs2021adaptive} and its variants are the standard
approach to online conformal prediction under distribution shift, but they
suffer from three fundamental limitations. First, their guarantees control
only the \emph{signed} long-run coverage error: persistent miscoverage in
one direction can be masked by compensating errors later, so a method can
satisfy the theoretical guarantee while being badly wrong for extended
periods. Second, existing guarantees say nothing about prediction-set
size, so validity can be achieved trivially at the cost of unduly wide
prediction sets. Third, the efficiency guarantees that do exist compare
against a \emph{fixed} predictor chosen in hindsight, a benchmark that
becomes increasingly less meaningful once the data-generating distribution shifts,
since the very notion of an optimal threshold then changes over time.

We consider a unified online learning framework that simultaneously controls absolute, non-cancelling coverage violation and prediction-set efficiency against a dynamically evolving benchmark for three important models. In the fully adversarial setting, exploiting the fact that the standard ACI update  is exactly projected online gradient descent on the pinball loss, we derive  simultaneous coverage and efficiency guarantees for arbitrary monotone Lipschitz efficiency objectives, with no distributional or {\it convexity} assumptions. In the stochastic setting with full-score feedback, we propose a sliding-window quantile tracker and establish a matching minimax lower bound  showing our algorithm is rate-optimal. In the covariate-dependent stochastic setting, we develop a partitioned ACI algorithm that tracks a function-valued oracle threshold, and derive simultaneous coverage and efficiency guarantees. 

Together, these results give the first framework offering simultaneous, non-cancelling coverage and efficiency guarantees for online conformal prediction under non-stationarity, and precisely characterize how performance degrades as feedback becomes coarser and the target becomes covariate-dependent.\end{abstract}

\section{Introduction}
\label{sec:background}

Machine learning models are increasingly used in applications where uncertainty quantification is as important as accurate prediction. Rather than producing a single prediction, one seeks to construct a prediction set that contains the unknown response with high probability. Conformal prediction (CP) \cite{vovk1999machine,vovk2005algorithmic,shafer2008tutorial,angelopoulos2023gentle} provides a general framework for doing so, where  given a feature vector $X$, one constructs a prediction set
\[
C(X)\subseteq\mathcal Y\footnote{Set from which the response variable $Y$ takes values.},
\]
for the corresponding response variable $Y$. The objective is to construct prediction sets satisfying
\begin{equation}\label{defn:coveragecond}
\bbP\!\left(Y\in C(X)\right)\ge 1-\alpha,
\end{equation}
where $\alpha\in(0,1)$ is the target miscoverage level
\cite{vovk2005algorithmic,lei2018distribution}. Condition \eqref{defn:coveragecond} is defined as {\it coverage}.

CP constructs the prediction set using a \emph{conformity score}
\[
s:\mathcal X\times\mathcal Y\rightarrow\mathbb R,
\]
which measures the compatibility of a candidate response with the observed feature vector. Given a threshold $q$, the prediction set is defined as
\[
C(X)
=
\{y\in\mathcal Y:s(X,y)\le q\}.
\]
For regression, $C(X)$ is typically an interval, whereas for classification it is a subset of labels. A distinguishing feature of CP is that the conformity score can be constructed from virtually any underlying prediction model, making the framework largely model agnostic
\cite{lei2018distribution,romano2019conformalized,angelopoulos2023gentle}.

\subsection{Prediction Set Efficiency}

Coverage alone does not characterize the quality of a prediction set. For example, the trivial prediction set
\[
C(X)=\mathcal Y
\]
achieves perfect coverage but provides essentially no predictive information. Among all prediction sets achieving the desired coverage level, one generally prefers those that are as small as possible. Consequently, prediction efficiency has become a central objective in modern conformal prediction
\cite{romano2019conformalized,angelopoulos2021raps,gao2025volume}.
For regression, efficiency is commonly measured by the average prediction interval length,
\[
\frac1T\sum_{t=1}^T |C_t|,
\]
while for general prediction sets it is measured by their volume, cardinality, or an analogous notion of size
\cite{angelopoulos2023gentle,gao2025volume}.

Measuring efficiency directly through the geometry of the prediction set is intrinsic to the prediction problem and is independent of the particular parameterization used to construct the prediction set. This viewpoint has recently motivated optimization-based formulations of conformal prediction that explicitly minimize prediction-set volume subject to coverage constraints
\cite{gao2025volume}.

\subsection{Online Conformal Prediction}

Classical conformal prediction is fundamentally a \emph{static} inference problem. A prediction rule is calibrated once using an {\it exchangeable}\footnote{Formally, a sequence of random variables $Z_1, \dots, Z_n$ is exchangeable if their joint distribution is invariant under any permutation of their indices; i.e., $(Z_1, \dots, Z_n) \overset{d}{=} (Z_{\pi(1)}, \dots, Z_{\pi(n)})$ for any permutation $\pi$ of $\{1, \dots, n\}$. This is a strictly weaker assumption than being independent and identically distributed (i.i.d.).} calibration dataset and is then applied to future observations generated from the same underlying distribution.  Consequently, a single calibration threshold is sufficient to provide finite-sample coverage guarantees throughout the prediction process.

Many practical applications, however, violate this assumption. In streaming data analysis, autonomous systems, finance, healthcare, recommendation systems, and time-series forecasting, the data-generating mechanism evolves continuously over time. Rather than observing samples from a fixed distribution,
\[
(X_t,Y_t)\sim P,
\]
one instead observes a sequence
\[
(X_t,Y_t)\sim P_t,\qquad t=1,2,\ldots,
\]
where the distributions $\{P_t\}$ are allowed to change over time. As the underlying distribution drifts, a threshold calibrated using historical data gradually becomes obsolete, causing the empirical coverage of the resulting prediction sets to deteriorate.

Online conformal prediction studies precisely this sequential setting. At each round $t$, the learner observes the current feature vector $X_t$, constructs a prediction set $C_t(X_t)$ using only the information available up to time $t-1$, subsequently observes the true response $Y_t$, and updates its prediction rule before the next round. Given a sequence of observations generated from the time-varying distributions $\{P_t\}$, the objective is to construct prediction sets
\[
C_1,C_2,\ldots,C_T,
\]
whose empirical coverage satisfies
\[
\frac1T\sum_{t=1}^{T}
\mathbf1\{Y_t\in C_t(X_t)\}
\approx
1-\alpha,
\]
despite the underlying distribution changing over time.

Unlike classical conformal prediction, in the considered online setting, the learner must continually adapt its prediction rule while making predictions as the underlying distribution evolves. This creates a fundamental stability--adaptivity tradeoff. Updating too aggressively enables the prediction sets to respond rapidly to distribution shifts but increases sensitivity to noisy observations, whereas updating too conservatively produces more stable prediction sets that may fail to track changes in the underlying data-generating process. 

The modern study of online conformal prediction was initiated by Gibbs and Cand\`es through Adaptive Conformal Inference (ACI) \cite{gibbs2021adaptive}. Let
\[
z_t=\mathbf1\{Y_t\in C_t(X_t)\}
\]
denote the coverage indicator. Starting from an initial threshold $q_1$, ACI updates the conformity threshold according to
\[
q_{t+1}
=
\Pi_{\mathcal Q}
\!\left(
q_t+\eta\bigl((1-z_t)-\alpha\bigr)
\right),
\]
where $\eta>0$ is the learning rate and $\Pi_{\mathcal Q}$ denotes projection onto the admissible threshold set. Thus, whenever the prediction set fails to cover the response ($z_t=0$), the threshold is increased, whereas successful coverage ($z_t=1$) decreases the threshold, allowing the prediction sets to continually adapt over time.

Subsequent work has substantially expanded this framework through adaptive learning rates, multiscale adaptation, expert aggregation, online optimization techniques, and improved finite-time analyses
\cite{zaffran2022adaptive,bhatnagar2023improved,bhatnagar2024online,feldman2023online}. Related methods have also been developed for time-series forecasting, dependent observations, and other forms of distribution shift
\cite{xu2021enbpi,stankeviciute2021conformal,podkopaev2021distribution,barber2023conformal}. Despite their methodological differences, these approaches share a common objective: maintaining coverage under evolving data distributions, with theoretical guarantees typically expressed in terms of marginal coverage, empirical coverage, calibration error, or long-run coverage.

Recent work has connected online conformal prediction with online optimization. Areces et al.~\cite{pmlr-v267-areces25a} formulate online conformal prediction as an online optimization problem and establish coverage guarantees in adversarial and stochastic settings, while Ramalingam et al.~\cite{pmlr-v267-ramalingam25a} characterize the relationship between no-regret learning and online conformal prediction, showing how regret guarantees imply calibration and conditional coverage. 


\subsection{Fundamental Limitations}

\paragraph{Joint Coverage and Efficiency Guarantees.}

Conformal prediction has two natural objectives: coverage and prediction-set efficiency. 
When the underlying distribution remains unchanged, i.e., $P_t\equiv P$, the problem of simultaneously guaranteeing coverage while improving prediction-set efficiency has been studied extensively. Early works by Lei \emph{et al.}~\cite{lei2013distribution} and Sadinle \emph{et al.}~\cite{sadinle2019least} investigated minimum-volume prediction regions under structural assumptions. Subsequently, Izbicki, Shimizu, and Stern~\cite{izbicki2020flexible,izbicki2022conformal} developed conformal prediction procedures that produce shorter prediction intervals by leveraging increasingly accurate estimation of the conditional distribution while preserving finite-sample marginal coverage. Kiyani \emph{et al.}~\cite{kiyani2024} further studied prediction-length optimization in conformal regression. More recently, Gao \emph{et al.}~\cite{gao2025volume} established that unrestricted distribution-free volume optimality is impossible in general, and proposed a notion of restricted volume optimality over structured families of prediction sets, for which finite-sample guarantees can be obtained. 
Collectively, these works establish a rich theory of prediction-set efficiency for the classical exchangeable setting, but do not address sequential prediction under distribution shift.

Another line of work related to online conformal prediction was recently introduced by Srinivas~\cite{srinivas2025}, who studied an online interval prediction problem under arbitrary input sequences. At each round, the learner outputs a prediction interval before observing the response, with the objective of simultaneously maintaining long-run coverage and minimizing cumulative interval length relative to the best fixed interval satisfying the desired coverage level in hindsight. This work establishes fundamental tradeoffs between coverage and prediction efficiency in an adversarial online setting. However, it considers an abstract online interval prediction problem rather than adaptive conformal prediction based on conformity scores, and measures performance relative to a \emph{static} comparator, namely the single fixed prediction interval that minimizes cumulative interval length while satisfying the desired long-run coverage constraint in hindsight.


However, as far as we know, joint coverage and efficiency guarantees for online conformal prediction under distribution shift remains unavailable.

\paragraph{Signed versus feasibility-based coverage error.}

Existing adaptive conformal prediction methods quantify calibration through the signed coverage error
\[
\sum_{t=1}^{T}
\left(
\mathbf1\{Y_t\notin C_t(X_t)\}
-
\alpha
\right),
\]
or closely related notions of calibration error
\cite{gibbs2021adaptive}. Since positive and negative deviations cancel, prolonged periods of under-coverage may be offset by later periods of over-coverage, even though the coverage requirement has been violated during the earlier rounds. While this is a natural metric for calibration, it does not distinguish between satisfying the coverage constraint and violating it.


\paragraph{Static versus Dynamic Benchmarks.}

The final challenge concerns the choice of performance benchmark in online non-stationary environments. Existing optimization-based formulations compare the online algorithm with the best fixed predictor chosen in hindsight
\cite{gao2025volume,srinivas2025}. Such static benchmarks are natural when the underlying distribution remains unchanged, but become increasingly restrictive as the data-generating distribution evolves over time. In particular, if
\[
(X_t,Y_t)\sim P_t,
\]
the prediction threshold that simultaneously achieves the desired coverage level and minimizes prediction-set size generally depends on the current distribution $P_t$, and therefore cannot, in general, be represented by a single fixed predictor.


\subsection{Problem Formulation and Results}
To address the above mentioned limitations, we consider the following general online conformal prediction problem. At each round $t$, after observing the feature vector $X_t$, the learner selects a conformity threshold $q_t$ and constructs the prediction set $C_t(X_t)
=
\{y\in\mathcal Y:s(X_t,y)\le q_t\}.$
After the true response $Y_t$ is revealed, the realized conformity score $r_t=s(X_t,Y_t)$
becomes available.
The prediction set naturally induces an efficiency objective through its geometry. Accordingly, we define
$\Lambda(q_t)=\mu\!\left(C_t(X_t)\right)
=\mu(q,x):=\lambda(\{y:s(x,y)\le q\}),$
where $\mu(\cdot)$ denotes an intrinsic measure of prediction-set size, such as interval length, prediction-set volume, or cardinality. 
\begin{remark}[Efficiency cost need not be convex]
\label{rem:nonconvex}
The set-size function $\mu(q,x)$ is always
nondecreasing in $q$ (larger threshold $\Rightarrow$ larger set) but is
generically \emph{non-convex}. Our results require
only monotonicity and Lipschitz continuity. Thus $\Lambda$ is assumed to be monotonically non-decreasing and $L$-Lipschitz.
\end{remark}

The resulting online optimization problem is to minimize cumulative prediction-set size while maintaining the desired coverage constraint. To evaluate performance, we compare the learner against an optimal dynamic benchmark appropriate for the statistical model under consideration. Let
\begin{equation}\label{defn:optbenchmark}
\mathbf u^*
=
(u_1^*,\ldots,u_T^*)
\end{equation}
denote this optimal comparator sequence. The precise definition of $\mathbf u^*$ depends on the assumptions imposed on the conformity-score process and will be specified separately for each of the three models considered in the sequel.

The efficiency performance of an online algorithm is measured by the {\bf dynamic regret}
\begin{equation}\label{defn:regret}
R(T)
=
\sum_{t=1}^{T}|\Lambda(q_t)
-
\Lambda(u_t^*)|.
\end{equation}
while the coverage performance is measured by a {\bf cumulative coverage violation} functional
\begin{equation}\label{defn:coverage}
Q(T)
=
\sum_{t=1}^{T} c_t(q_t),
\end{equation}
where the nonnegative function $c_t(\cdot)$ quantifies the extent to which the prediction threshold fails to satisfy the desired coverage requirement at round $t$.


The objective is to simultaneously minimize both $R(T)$ and $Q(T)$. The optimization framework is identical throughout the paper; the only distinction between the three models lies in the assumptions imposed on the conformity-score process $\{r_t\}_{t=1}^T$, which determine the appropriate benchmark sequence $\mathbf u^*$ and the corresponding coverage violation function $c_t(\cdot)$.

\subsection{Model I: Adversarial Conformity Scores.}\label{subsec:model1}

The conformity scores
\[
r_1,\ldots,r_T
\]
are generated by an arbitrary adaptive adversary. In this setting, coverage is deterministic: the prediction set covers the response at round $t$ if and only if
\[
q_t\ge r_t.
\]
Accordingly, the optimal dynamic benchmark \eqref{defn:optbenchmark} is the  sequence
\[
\mathbf u^*
\in
\arg\min_{\{u_t\ge r_t\}}
\sum_{t=1}^T \Lambda(u_t),
\]
which, since each $\Lambda$ is monotone nondecreasing, satisfies $u_t^*=r_t$ for every $t$. Thus, dynamic regret \eqref{defn:regret}
\begin{equation}\label{defn:regretwc}
R(T)
=
\sum_{t=1}^{T}
|\Lambda(q_t)
-
\Lambda(r_t)|.
\end{equation}
Let $(.)_+$ denote $\max\{., 0\}$.
With respect to \eqref{defn:coverage}, the coverage violation function is
\[
c_t(q_t)=(r_t-q_t)_+,
\]
and
\begin{equation}\label{defn:coveragewc}
Q(T)=\sum_{t=1}^T(r_t-q_t)_+.
\end{equation}

We show that the classical Adaptive Conformal Inference (ACI) update of Gibbs and Cand\`es can be used to obtain simultaneous guarantees on prediction efficiency and coverage for any monotone Lipschitz (not necessarily convex) efficiency objective:
\[
Q(T),\;R(T)
=
O\!\left(\sqrt{T(1+\cP_T)}\right),
\]
where
\[
{\cal P}_T=\sum_{t=2}^T|r_t-r_{t-1}|
\]
is the path length of the optimal dynamic benchmark sequence. Thus, the standard ACI algorithm enjoys sublinear dynamic regret and sublinear cumulative coverage violation without any stochastic assumptions.

\subsection{Model II: Nonstationary Stochastic Conformity Scores.}\label{subsec:model2}

The conformity scores satisfy
\[
r_t\sim P_t,
\qquad t=1,\ldots,T,
\]
where the distributions
\[
P_1,P_2,\ldots,P_T
\]
are unknown and are allowed to vary arbitrarily over time. Let
\[
F_t(q)=\bbP_{P_t}(r_t\le q)
\]
denote the corresponding cumulative distribution function. In this setting, the desired coverage constraint is
\[
F_t(q_t)\ge 1-\alpha,
\]
so the optimal dynamic benchmark \eqref{defn:optbenchmark} is the oracle quantile sequence
\[
u_t^*
=
F_t^{-1}(1-\alpha).
\]
and dynamic regret is \eqref{defn:regret} with $u_t^*
=
F_t^{-1}(1-\alpha)$. 
The coverage violation function is
\[
c_t(q_t)
=
\mathbb{E}\!\left[\left|F_t(q_t)-(1-\alpha)\right|\right],
\]
so that \eqref{defn:coverage} is 
\begin{equation}\label{defn:coverageprob1}
Q(T)
=
\sum_{t=1}^T
\mathbb{E}\!\left[\left|F_t(q_t)-(1-\alpha)\right|\right].
\end{equation}

For this setting, we propose a sliding-window empirical quantile algorithm and characterize the optimal tradeoff between estimation error and distribution drift. The algorithm achieves
\[
R(T),\;Q(T)
=
O\!\left(T^{2/3}V_T^{1/3}\right),
\]
up to lower-order terms, where
\begin{definition}
\label{def:vt}
\[V_T
=
\sum_{t=1}^{T-1}
\sup_{q\in A}
|F_{t+1}(q)-F_t(q)|\]
denotes the variation budget of the evolving score distributions. 
\end{definition} 

We further establish the matching minimax lower bound
\[
\Omega\!\left(T^{2/3}V_T^{1/3}\right),
\]
showing that this rate is information-theoretically optimal for any online algorithm with access to the full conformity scores.

\paragraph{Relation to non-stationary stochastic optimization.}
Model~II shares the high-level variation-budget framework and the minimax rate $O(T^{2/3}V_T^{1/3})$ established by Besbes, Gur, and Zeevi (BGZ)~\cite{besbes2015nonstationary}. However, Model~II is not a special case of the BGZ framework; the two settings are fundamentally distinguished by their geometric assumptions, feedback structures, and regret formulations. Mathematically, BGZ strictly requires the convexity of the underlying cost functions and action space to guarantee gradient-based convergence. In contrast, Model~II makes no such geometric assumptions, tracking non-parametric quantiles over arbitrary, potentially non-convex conformity-score distributions. Furthermore, while BGZ minimizes unsigned, one-sided regret relative to a dynamic infimum (where the learner cannot outperform the oracle), Model~II must control signed tracking errors, since both positive (over-coverage) and negative (under-coverage) deviations penalize the absolute cumulative coverage violation. Consequently, standard online convex optimization (OCO) batch reductions designed for one-sided convex minimization cannot exploit Model~II's global, full-sample feedback. Instead, Model~II relies on non-convex empirical process theory, specifically, uniform concentration bounds and a quantile-crossing lemma, to simultaneously enforce prediction-set efficiency and absolute coverage, showing that while both models share an estimation--drift tradeoff, their algorithmic and analytical paradigms are distinct.


\subsection{Model III: Nonstationary Covariate-Dependent Conformity Scores.}\label{subsec:model3}
Model II is appropriate when only \emph{marginal} coverage guarantees are required. In many
applications, however, the conformity-score distribution depends on the observed covariates,
making a single global threshold overly conservative. Instead, one seeks \emph{conditional}
coverage by learning a threshold that varies with the covariates.

Accordingly, in Model III the feature-score pairs satisfy
\[
(X_t,r_t)\sim P_t,
\qquad t=1,\ldots,T,
\]
where each $P_t$ is an unknown joint distribution on
$\mathcal X\times\mathbb R$. Equivalently,
\[
r_t\,|\,X_t
\sim
P_t(\cdot\,|\,X_t),
\]
and the conditional distributions are allowed to evolve over time.

At each round $t$, the learner first observes
\[
X_t\in\mathcal X=[0,1]^d,
\]
and then predicts using a threshold function
\[
q_t:\mathcal X\rightarrow\mathbb R,
\]
through the prediction set
\[
C_t(X_t)
=
\{y:s(X_t,y)\le q_t(X_t)\}.
\]
The objective is to satisfy the conditional coverage requirement
\[
\mathbb P(Y_t\in C_t(X_t)\mid X_t=x)
\ge
1-\alpha,
\qquad
\forall x\in\mathcal X.
\]

Let
\[
F_t(q\mid x)
=
\bbP_{P_t}(r_t\le q\mid X_t=x)
\]
denote the conditional conformity-score distribution. The optimal dynamic benchmark is therefore the oracle threshold function
\[
u_t^*(x)
=
F_t^{-1}(1-\alpha\mid x),
\]
which varies over both the covariate space and time and  dynamic regret is \eqref{defn:regret} with $u_t^*(x)
=
F_t^{-1}(1-\alpha\mid x)$. Unlike Model II, where the learner tracks a single scalar threshold, Model III requires tracking an evolving function.

The corresponding coverage violation function is
\[
c_t(q_t)
=
\mathbb E\!\left[
\left(
1-\alpha-
F_t(q_t(X_t)\mid X_t)
\right)_+
\right],
\]
and
\begin{equation}\label{defn:coverageprob2}
Q(T)
=
\sum_{t=1}^T
c_t(q_t).
\end{equation}

We assume that the oracle threshold function $u_t^*$ is $\beta$-Hölder continuous \cite{tsybakov2008introduction}, with path-length budget
\begin{definition}\label{defn:ST}
\[
{\cal S}_T
=
\sum_{t=1}^{T-1}
\sup_{x\in\mathcal X}
|u_{t+1}^*(x)-u_t^*(x)|.\]
\end{definition}
Using a Hölder-Partitioned ACI (HP-ACI) algorithm over grid cells of side length $h$, we prove
\[
Q(T),\,R(T)
=
O\!\left(
T^{\frac{5\beta+d}{6\beta+d}}
{\cal S}_T^{\frac{2\beta}{6\beta+d}}
\right).
\]

\begin{remark}[A unified one-sided/two-sided coverage functional]
\label{rem:unified-coverage}
The three coverage functionals $c_t(\cdot)$ used in
Models I-III are instances of a single template
\[
c_t(q_t) \;=\; \mathbb E\big[\rho(\Delta_t(q_t))\big],
\qquad
\Delta_t(q_t) :=
\begin{cases}
r_t - q_t, & \text{Model I (deterministic)},\\[2pt]
(1-\alpha) - F_t(q_t), & \text{Model II},\\[2pt]
(1-\alpha) - F_t(q_t(X_t)\mid X_t), & \text{Model III},
\end{cases}
\]
with $\rho=\rho_+(u):=u_+$ (one-sided shortfall) in Models I and III, and
$\rho=\rho_{\mathrm{abs}}(u):=|u|$ (two-sided) in Model II. Since
$\rho_+\le\rho_{\mathrm{abs}}$ pointwise, the Model I/III choice is the
\emph{weaker} individual statement; it suffices there only because the paired
dynamic regret $R(T)$ already penalizes the opposite side of the deviation
(over-coverage inflates set size). Model II reports coverage and efficiency as
formally separate quantities against a comparator $F_t^{-1}(1-\alpha)$ with no
efficiency-side penalty for overshoot, so the two-sided metric is needed there to
rule out trivial gaming (e.g.\ always playing $q_t=Q_{\max}$).

\end{remark}

{\bf Summary.}
Together these results establish a unified optimization theory for online conformal prediction. Beyond recovering guarantees across increasingly realistic statistical models, they demonstrate that coverage and prediction-set efficiency can be analyzed simultaneously against dynamic benchmarks, providing a principled framework for uncertainty quantification under distribution shift.

%

\paragraph{Notation and conventions.}
Throughout: $\alpha\in(0,1)$ is the target miscoverage level;
$A=[0,Q_{\max}]$ is the threshold action space;
$\X=[0,1]^d$ is the covariate space; $T$ is the time horizon.
We define
\[
z_t := \mathbf{1}(Y_t\in C_t) = \mathbf{1}(s(X_t,Y_t)\le q_t(X_t)),
\]
so $z_t=1$ means ``covered'' and $\E[z_t|q_t,X_t]=F_t(q_t(X_t)|X_t)$.
\section{Model I: Adversarial Scores}
\label{sec:adversarial}


The model is as defined in Section \ref{subsec:model1}, and the objective is to minimize the dynamic regret \eqref{defn:regretwc} and the cumulative coverage violation \eqref{defn:coveragewc} simultaneously for every adversarial sequence $r_1,\ldots,r_T$.

In this model, 
\[
z_t=\mathbf 1(r_t\le q_t)
\]
denote the coverage indicator. The learner updates the threshold using the classical Adaptive Conformal Inference (ACI) rule of \cite{gibbs2021adaptive},
\begin{equation}
\label{defn:aciupdate}
q_{t+1}
=
\Pi_A\!\left[
q_t+\eta\bigl((1-z_t)-\alpha\bigr)
\right],
\end{equation}
where $\eta>0$ is the step size.

\begin{remark}[Optimal Dynamic Benchmark]
\label{rem:comp}
Since $u_t^*=r_t$, the path length of the optimal dynamic benchmark is
\[
{\cal P}_T
=
\sum_{t=2}^T
|u_t^*-u_{t-1}^*|
=
\sum_{t=2}^T
|r_t-r_{t-1}|.
\]
\end{remark}
\begin{theorem}
\label{thm:adversarial}
Let $\Lambda:A\rightarrow\mathbb R$ be any monotone nondecreasing $L$-Lipschitz function (not necessarily convex). Then the ACI update \eqref{defn:aciupdate} satisfies
\[
Q(T)
=
O\!\left(\sqrt{T(1+{\cal P}_T)}\right),
\qquad
R(T)
=
O\!\left(L\sqrt{T(1+{\cal P}_T)}\right).
\]
\end{theorem}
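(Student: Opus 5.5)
The plan is to reduce both guarantees to a single dynamic-regret bound for online gradient descent on the pinball loss. For each round define
\[
\ell_t(q) \;=\; (1-\alpha)\,(r_t-q)_+ \;+\; \alpha\,(q-r_t)_+ ,
\]
which is convex and piecewise linear in $q$. Its subgradient at $q_t$ equals $\alpha$ when $r_t\le q_t$ (so $z_t=1$) and $-(1-\alpha)$ when $r_t>q_t$ (so $z_t=0$). Hence $\alpha-(1-z_t)$ is a valid subgradient $g_t\in\partial\ell_t(q_t)$, and the ACI update \eqref{defn:aciupdate} is exactly $q_{t+1}=\Pi_A[q_t-\eta g_t]$, i.e.\ projected OGD on $\{\ell_t\}$ with $|g_t|\le 1$. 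The function $\Lambda$ never enters the algorithm, so its possible non-convexity is irrelevant. Only the convexity of $\ell_t$ in $q$ is used.

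\textbf{Step 1 (Zinkevich-type dynamic regret).} I would compare against the sequence $u_t=r_t$, assuming, as implicit in the model, that $r_t\in A=[0,Q_{\max}]$ so that $u_t$ is feasible. Expand $\|q_{t+1}-u_t\|^2\le\|q_t-\eta g_t-u_t\|^2$ using nonexpansiveness of $\Pi_A$. Convexity then gives
\[
\ell_t(q_t)-\ell_t(u_t)\le g_t(q_t-u_t)\le \frac{(q_t-u_t)^2-(q_{t+1}-u_t)^2}{2\eta}+\frac{\eta}{2}.
\]
Sum over $t$ and re-index $(q_{t+1}-u_t)^2$ against $(q_{t+1}-u_{t+1})^2$. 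The difference is at most $2Q_{\max}|u_{t+1}-u_t|$, which yields
\[
\sum_{t=1}^T \ell_t(q_t)-\ell_t(r_t)\;\le\;\frac{Q_{\max}^2+2Q_{\max}\mathcal P_T}{2\eta}+\frac{\eta T}{2}.
\]
Since $\ell_t(r_t)=0$, the left side equals $\sum_t\ell_t(q_t)$. Choosing $\eta\asymp Q_{\max}\sqrt{(1+\mathcal P_T)/T}$ gives $\sum_t\ell_t(q_t)=O(\sqrt{T(1+\mathcal P_T)})$.

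\textbf{Step 2 (splitting the pinball loss).} Because $\ell_t(q_t)\ge(1-\alpha)(r_t-q_t)_+$, we get $Q(T)\le\frac{1}{1-\alpha}\sum_t\ell_t(q_t)$. Because $\ell_t(q_t)\ge\min(\alpha,1-\alpha)\,|q_t-r_t|$, we get $\sum_t|q_t-r_t|\le\sum_t\ell_t(q_t)/\min(\alpha,1-\alpha)$. By the $L$-Lipschitz property, $|\Lambda(q_t)-\Lambda(r_t)|\le L|q_t-r_t|$. Summing gives $R(T)=O(L\sqrt{T(1+\mathcal P_T)})$, with constants depending on $\alpha$ and $Q_{\max}$. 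Monotonicity of $\Lambda$ is needed only to identify $u_t^*=r_t$ as the benchmark.

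\textbf{Main obstacle.} The delicate point is the step-size choice, since the optimal $\eta$ depends on the unknown path length $\mathcal P_T$. I would first state the result for the tuned $\eta$, matching the $O(\cdot)$ form of the theorem. For a $\mathcal P_T$-oblivious version, I would run a doubling trick over a geometric grid of guesses for $\mathcal P_T$, or aggregate OGD copies with different step sizes via a Hedge meta-learner on the bounded losses $\ell_t$. Either approach loses at most logarithmic factors. A secondary check is boundary behavior: if some $r_t\notin A$, then $u_t=r_t$ is infeasible. In that case I would instead compare against $\Pi_A(r_t)$, which leaves the path length no larger and adds only the unavoidable terms $(r_t-Q_{\max})_+$.
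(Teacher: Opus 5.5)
Your proposal is correct and follows essentially the same route as the paper: identify ACI as projected OGD on the pinball loss with subgradient $\alpha-(1-z_t)$, bound its dynamic regret against $u_t=r_t$ using $\psi_t(r_t)=0$, and then lower-bound the pinball loss by $(1-\alpha)(r_t-q_t)_+$ and by $\min(\alpha,1-\alpha)|q_t-r_t|$ to obtain $Q(T)$ and, via the Lipschitz property, $R(T)$. The only differences are that you derive the Zinkevich-type dynamic-regret bound yourself, with the term $\frac{Q_{\max}^2+2Q_{\max}\mathcal P_T}{2\eta}$, rather than citing Zhang et al.\ as the paper does, and that you state explicitly the feasibility condition $r_t\in A$, which the paper assumes implicitly.
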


\begin{remark}
\label{rem:one-bit-model1}
Although Model I is stated with the score $r_t$ realized every round, ACI update \eqref{defn:aciupdate} only depends on the round-$t$
outcome only through $z_t$. Consequently the $O(\sqrt{T(1+P_T)})$
guarantee below is a single-bit-feedback guarantee.
\end{remark}

\begin{proof}(Theorem \ref{thm:adversarial})

\begin{definition}[Pinball (quantile) loss]\label{defn:pinball}
\[
\psi_t(q):=(1-\alpha)(r_t-q)_++\alpha(q-r_t)_+.
\]
\end{definition}

The subdifferential of $\psi_t$ at $q$ is
\[
\partial\psi_t(q)=\begin{cases}
\{-(1-\alpha)\}&q<r_t,\\
[-(1-\alpha),\alpha]&q=r_t,\\
\{\alpha\}&q>r_t.
\end{cases}
\]
Define $\tilde{g}_t:=-((1-z_t)-\alpha)$. We next show that $\tilde{g}_t  \in\partial\psi_t(q_t)$.
\begin{itemize}
\item $q_t<r_t$: then $z_t=0$ (not covered), so
      $\tilde{g}_t=-(1-\alpha)\in\partial\psi_t(q_t)$. 
\item $q_t>r_t$: then $z_t=1$ (covered), so
      $\tilde{g}_t=\alpha\in\partial\psi_t(q_t)$. 
\item $q_t=r_t$: then $z_t=1$, so $\tilde{g}_t=\alpha$.
      Since $\alpha\in[-(1-\alpha),\alpha]=\partial\psi_t(r_t)$. 
\end{itemize}
Thus, the ACI update in \eqref{defn:aciupdate} can also be written as  
\begin{equation}\label{aci-ogd}
q_{t+1}=\Pi_A[q_t-\eta\tilde{g}_t],
\end{equation}

Supposing the objective was to minimize the dynamic regret 
\[
\sum_t\psi_t(q_t)-\sum_t\psi_t(u_t),\] 
for any comparator sequence $u_1, \dots, u_T$, following algorithm \eqref{aci-ogd} to generate $q_t$ is known \cite{zhang2018adaptive} to have 
\[
\sum_t\psi_t(q_t)-\sum_t\psi_t(u_t)
\le \frac{7Q_{\max}^2}{4\eta}+\frac{\eta}{2}T
    +\frac{Q_{\max}{\cal P}_T(u)}{\eta},
\]
where ${\cal P}_T(u) = \sum_{t=2}^T |u_t-u_{t-1}|$ and when \begin{itemize}
\item domain $\cal D$ of action space  is bounded,
\item $\psi_t$ is convex for all $t$, 
\item subgradients of $\psi_t$ are bounded for all $t$ in the domain,
\item $u_t \in \cal D$. 
\item ${\cal P}_T(u)$ is known ahead of time. When ${\cal P}_T(u)$ is not known, using multiple experts with different step-sizes and using Hedge gives the same result with only extra $\log T$ terms in regret \cite{zhang2018adaptive}.
\end{itemize} 

Choosing $\eta=O\left(\sqrt{\frac{1+{\cal P}_T(u)}{ T}}\right)$\footnote{This requires the knowledge of ${\cal P}_T(u)$. However, even without it, the same guarantee with extra $\log T$ factors is achievable by using multiple step sizes in parallel and using Hedge over it \cite{zhang2018adaptive}.}  gives 

\begin{equation}\label{eq:dynregret}
\sum_t\psi_t(q_t)-\sum_t\psi_t(u_t) \le 
O(\sqrt{T(1+{\cal P}_T(u))}).
\end{equation}

Since all the above listed requirements are satisfied in our problem, and the comparator of interest is $u_t^*=r_t$ i.e. ${\cal P}_T(u) ={\cal P}_T(u^*) = {\cal P}_T$, we have 
\begin{equation}\label{eq:finalregretboundmodel1}
\sum_t\psi_t(q_t)-\sum_t\psi_t(r_t) \le 
O(\sqrt{T(1+{\cal P}_T)}).
\end{equation}

Next, we  translate the pinball regret to bound $Q(T)$ and $R(T)$.

{\bf Coverage} With $\psi_t(r_t)=0$, from \eqref{eq:finalregretboundmodel1},
we get  $$\sum_t\psi_t(q_t)=O(\sqrt{T(1+P_T)}).$$
From the pinball loss definition \eqref{defn:pinball}, 
$\psi_t(q)\ge(1-\alpha)(r_t-q)_+$, so
$(r_t-q_t)_+\le\psi_t(q_t)/(1-\alpha)$.
Summing: $$Q(T)=\sum_{t=1}^T(r_t-q_t)_+\le\frac{1}{1-\alpha}\sum_t\psi_t(q_t)=
O(\sqrt{T(1+P_T)}).$$

{\bf Efficiency} Since $r_t=\arg\min_{q\ge r_t}\Lambda(q)$ by
monotonicity of $\Lambda$, for any $q_t$:
$$\Lambda(q_t)-\Lambda(r_t)\le L|q_t-r_t|.$$
From the pinball loss definition \eqref{defn:pinball} and using $(r_t-q)_+\le\frac{1}{1-\alpha}\psi_t(q)$ and
$(q-r_t)_+\le\frac{1}{\alpha}\psi_t(q)$, we get  $$|q-r_t|\le\frac{1}{\min(\alpha,1-\alpha)}\psi_t(q).$$
Thus, $$R(T)=
\sum_{t=1}^{T}
|\Lambda(q_t)
-
\Lambda(r_t)|\le\frac{L}{\min(\alpha,1-\alpha)}\sum_t\psi_t(q_t)
=O(L\sqrt{T(1+P_T)}).$$
\end{proof}

Following remarks are in order.

\begin{remark}
The standard ACI guarantee \cite{gibbs2021adaptive} controls the signed average
$\frac{1}{T}\sum_t(z_t-\alpha)\to 0$, while our Theorem~\ref{thm:adversarial} bounds absolute error 
 for both
coverage and efficiency.
\end{remark}

\begin{remark}[Decoupling the surrogate from the objective]
Note that the regret guarantee $R(T)$ is with respect to $\Lambda$, which is not
necessarily convex. Thus, we cannot import known results on dynamic regret
for online convex optimization (OCO) directly.
To derive the result, we exploit the observation that the ACI update is
exactly a projected online gradient descent step on the pinball loss $\psi_t$.
All of the online learning is {\it deemed} to be performed on the pinball loss $\psi_t$, a convex surrogate that does not depend on $\Lambda$ at all. 

The dynamic regret bound
 for $\psi_t$ with respect to the
the pointwise-optimal comparator $u_t^*=r_t$ is then transferred to $\Lambda$ using the specific form of $\psi_t$ and its connection to $\Lambda$ via $|q-r_t|$.
This decoupling, optimize a fixed convex surrogate, then transfer the
bound to an arbitrary monotone Lipschitz objective after the fact, is
what allows the guarantee to hold for \emph{any} such $\Lambda$
simultaneously, including non-convex ones, and is what existing
regret-to-coverage reductions and static-benchmark efficiency guarantees
do not provide. The theorem does not require a new algorithm or a novel regret-analysis technique, its real contribution is this reduction and making the necessary connection between dynamic regret wrt to $\Lambda$ and $\psi_t$.
\end{remark}

\begin{remark}[Relation to regret-to-coverage reductions and parameter-free OCP]
\label{rem:regret-to-coverage}
The identity underlying Theorem~\ref{thm:adversarial}, that ACI is exactly projected
online gradient descent on the pinball loss, places our adversarial result
within the broader family of \emph{regret-to-coverage reductions}, which convert a
black-box no-regret online-learning guarantee into a calibration guarantee for
online conformal prediction. This family includes the strongly-adaptive reduction
of Bhatnagar et al.~\cite{bhatnagar2023improved}, and the integral-control formulation of
Angelopoulos, Cand\`es, and Tibshirani~\cite{angelopoulos2023pid}, which recovers
ACI as a proportional controller and shows that more general PID-style controllers
inherit no-regret guarantees against dynamic comparators. Relative to this family, Theorem~\ref{thm:adversarial} makes a stronger joint
claim. The \emph{same} regret certificate~\eqref{eq:finalregretboundmodel1} is
used to bound both quantities: it certifies coverage directly, and, via the
efficiency paragraph of the proof, it also certifies prediction-set efficiency
against the dynamic oracle $r_t$. Regret-to-coverage reductions do not provide
this joint statement as they stand. They are typically analyzed purely in
pinball-loss units, with no step that translates the regret bound into
set-size units via the monotonicity of $\Lambda$.

A more substantively different line of work targets conditional or
multi-group coverage in the \emph{adversarial} setting via swap-regret
minimization \cite{blum2007external, gibbs2023conditional}, rather than the
single-scalar threshold tracked in Model~I. These methods certify coverage
simultaneously across an arbitrary, possibly adversarially and adaptively
revealed, collection of (possibly overlapping) groups, generalizing beyond the
fixed covariate-partition structure of Model~III. Combining swap-regret-style
group-conditional tracking with the monotone-Lipschitz efficiency argument of
Theorem~\ref{thm:adversarial}, applied within each adversarially-selected group, would
yield an ``adversarial Model~III'' not covered by the present framework: our
Model~III instead relies on the stochastic contraction argument of
Section~\ref{sec:conditional}, which uses the density floor $f_{\min}^s$ to
guarantee a restoring force (Assumption C4 in Section~\ref{sec:conditional}) and has no direct
analogue when scores within a group are adversarial rather than drawn from a fixed
conditional distribution. We flag this as a direction for future work rather than
a result of the present paper.
\end{remark}

\section{Model II: Stochastic Scalar Scores}
\label{sec:scalar}
The setup is as defined in Section \ref{subsec:model2}, and the objective is to minimize the dynamic regret \eqref{defn:regret} with $u_t^*
=
F_t^{-1}(1-\alpha)$, and the cumulative coverage violation \eqref{defn:coverageprob1}, simultaneously.

%


We work under the following standard assumptions. 
\begin{assumption}\label{ass:scalar}
\begin{enumerate}[label=(A\arabic*)]
\item \textbf{Independence.} The pairs $(X_t,Y_t)$ are independent
      across $t$, each drawn from $P_t$; equivalently, the induced
      conformity scores $r_t=s(X_t,Y_t)$ are independent across $t$,
      with $r_t\sim F_t$.
\item \textbf{Density lower bound.} There exists $f_{\min}>0$ such that
      $F_t'(q)\ge f_{\min}$ for all $q\in A=[0,1]$, $t\ge 1$.
      This ensures $F_t$ is strictly increasing, so $u_t^*$ is uniquely
      defined and the inverse map is $1/f_{\min}$-Lipschitz.
\item \textbf{Uniform concentration.} There exists $C_{VC}>0$ such that
      for every $W\ge 1$ and $t>W$,
      \[
        \E\!\left[\sup_{q\in A}
        \left|\widehat{F}_{t-1,W}(q)-\bar{F}_{t-1,W}(q)\right|\right]
        \le\frac{C_{VC}}{\sqrt{W}},
      \]
      where $\widehat{F}_{t-1,W}(q)=\frac{1}{W}\sum_{\tau=t-W}^{t-1}
      \mathbf{1}(r_\tau\le q)$ is the empirical CDF and
      $\bar{F}_{t-1,W}(q)=\frac{1}{W}\sum_{\tau=t-W}^{t-1}F_\tau(q)$
      is its expectation.
\end{enumerate}
\end{assumption}

\begin{remark}
\label{rem:dkw}
Assumption A3 follows from the \textbf{Dvoretzky--Kiefer--Wolfowitz
(DKW) inequality} \cite{massart1990} and does not need to be assumed
separately. Specifically, for each $\tau$, $\mathbf{1}(r_\tau\le q)$
is a Bernoulli random variable with mean $F_\tau(q)$; since scores are
independent across $\tau$ (Assumption A1), the window
$\{r_{t-W},\dots,r_{t-1}\}$ consists of $W$ \emph{independent} (but
not identically distributed) observations. The DKW inequality for n 
i.i.d.\ (independent non-identically distributed) samples gives
\[
  \E\!\left[\sup_q|\widehat{F}_{t-1,W}(q)-\bar{F}_{t-1,W}(q)|\right]
  \le\sqrt{\frac{\pi}{2W}},
\]
so Assumption A3 holds with $C_{VC}=\sqrt{\pi/2}$. We retain A3 as a
formal assumption to make the constant explicit and to allow tighter
bounds from other concentration tools if desired.
\end{remark}

\subsection{Algorithm: Sliding-Window Empirical Quantile}
Without loss let $Q_{\max} = 1$ and $A = [0,1]$.
\paragraph{Initialization.} For $t=1,\dots,W$, play any $q_t\in A$
and observe $r_t$. 

\paragraph{Update.} For every $t>W$:
\begin{enumerate}[label=(\arabic*)]
\item Build the sliding-window empirical CDF:
      $\widehat{F}_{t-1,W}(q)=\frac{1}{W}\sum_{\tau=t-W}^{t-1}
      \mathbf{1}(r_\tau\le q)$.
\item Set $q_t=\inf\{q\in A:\widehat{F}_{t-1,W}(q)\ge 1-\alpha\}$.
\item Play $q_t$, observe $r_t$.
\end{enumerate}

\subsection{Upper Bound}

\begin{theorem}
\label{thm:scalar-upper}
Under Assumption~\ref{ass:scalar}, with window size

\[W^\dagger = \min\!\left(\left\lceil \left(\frac{C_{VC}\,T}{2(1+V_T)}\right)^{2/3}\right\rceil,\; T\right)\]
the sliding-window algorithm, for $R(T)$  \eqref{defn:regret} with $u_t^*
=
F_t^{-1}(1-\alpha)$, and  $Q(T)$ \eqref{defn:coverageprob1}, we have
\begin{align*}
R(T) = O\!\left(\left(\frac{T}{1+V_T}\right)^{2/3} + \frac{T^{2/3}(1+V_T)^{1/3}}{f_{\min}}\right),\qquad
Q(T) = O\!\left(\left(\frac{T}{1+V_T}\right)^{2/3} + T^{2/3}(1+V_T)^{1/3}\right). 
\end{align*}
\end{theorem}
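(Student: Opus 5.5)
The proof splits the horizon into the $W$ initialization rounds and the rounds $t>W$, where the learner plays the window quantile. The initialization rounds cost at most $W$ in $Q(T)$ and at most $LW$ in $R(T)$, since $A=[0,1]$ and $|F_t-(1-\alpha)|\le 1$. With $W=W^\dagger$ this is $O((T/(1+V_T))^{2/3})$, which is the first term in each bound.

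For $t>W$ I would first control the CDF error at the played point. Write
\[
|F_t(q_t)-(1-\alpha)| \le |F_t(q_t)-\bar F_{t-1,W}(q_t)| + |\bar F_{t-1,W}(q_t)-\widehat F_{t-1,W}(q_t)| + |\widehat F_{t-1,W}(q_t)-(1-\alpha)|.
\]
\begin{itemize}
\item The first term is a drift term. Since $\bar F_{t-1,W}$ averages $F_\tau$ over $\tau\in[t-W,t-1]$ and $\sup_q|F_t-F_\tau|\le\sum_{s=\tau}^{t-1}v_s$ with $v_s:=\sup_q|F_{s+1}(q)-F_s(q)|$, it is at most $D_t:=\sum_{s=t-W}^{t-1}v_s$.
\item The second term is at most $\sup_q|\widehat F-\bar F|$, whose expectation is at most $C_{VC}/\sqrt W$ by (A3).
\item The third term is the quantile-crossing step. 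By the infimum definition and right-continuity, $\widehat F_{t-1,W}(q_t)\ge 1-\alpha$, and the left limit satisfies $\widehat F(q_t^-)\le 1-\alpha$. The empirical CDF jumps by $1/W$ per sample, and ties have probability zero because (A2) gives a density. So the third term is at most $1/W$ almost surely.
\end{itemize}
Summing over $t$, each $v_s$ appears in at most $W$ of the windows, so $\sum_t D_t\le WV_T$. This gives
\[
Q(T)\le W + \frac{T C_{VC}}{\sqrt W} + WV_T + \frac{T}{W}.
\]

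For $R(T)$ I would use $|\Lambda(q_t)-\Lambda(u_t^*)|\le L|q_t-u_t^*|$. By (A2) the inverse map is $1/f_{\min}$-Lipschitz, so $|q_t-u_t^*|\le |F_t(q_t)-F_t(u_t^*)|/f_{\min} = |F_t(q_t)-(1-\alpha)|/f_{\min}$. One caveat is the boundary case: when $u_t^*$ is interior, the mean value theorem applies directly. Otherwise I would note that $F_t(0)\le 1-\alpha\le F_t(1)$ is implicitly needed for $u_t^*\in A$, and I would assume it. This reuses the coverage estimate, giving $R(T)\le L(\text{bound above})/f_{\min}$ plus the initialization term.

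The final step is the balancing. With $W\asymp (T/(1+V_T))^{2/3}$ we get $T/\sqrt W\asymp T^{2/3}(1+V_T)^{1/3}$ and $WV_T\le T^{2/3}(1+V_T)^{1/3}$. The $T/W$ term equals $T^{1/3}(1+V_T)^{2/3}$, which is at most $T^{2/3}(1+V_T)^{1/3}$ whenever $V_T\lesssim T$. That always holds, since $V_T\le T$. When $W^\dagger=T$ because of the cap, everything is trivially $O(T)$, which is within the stated bound. The main delicacy is the quantile-crossing/tie argument and the boundary of $A$ in the Lipschitz inversion; the rest is bookkeeping.
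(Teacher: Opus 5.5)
Your proposal is correct and takes essentially the same route as the paper. Both arguments use the three-term quantile-crossing bound $\Delta_{t,W}+\mathcal{E}_t+1/W$, the drift telescoping $\sum_t\mathcal{E}_t\le WV_T$, conversion to threshold error via the density floor $f_{\min}$, and balancing of $W$ with the $T/W$ residual absorbed using $V_T\le T$. Your explicit treatment of ties, so that the empirical CDF jumps by exactly $1/W$, and of the requirement $u_t^*\in A$ for the mean-value step fills in details the paper leaves implicit.
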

The proof is provided in Section \ref{sec:ProofModel2}.
\begin{remark}
The sliding-window algorithm requires the knowledge of $V_T$ which is not available causally. In Section \ref{sec:hedge_windows}, we show how to remove that requirement but with an inferior guarantee. 
\end{remark}

\begin{remark}[Density lower bound: necessity and localization]
\label{rem:density-necessity}
Assumption~A2 is used exclusively to convert the coverage-level guarantee
of Lemma~\ref{lem:qcl} into the threshold-space guarantee needed for
$R(T)$; it plays no role in bounding $Q(T)$, which holds under
Assumptions~A1 and~A3 alone. The dependence of the bound on $1/f_{\min}$ is unavoidable rather than a
proof artifact: if $F_t$ has vanishing density on some interval, two
thresholds $q,q'$ in that interval can be observationally
indistinguishable from coverage feedback ($F_t(q)\approx F_t(q')$) while
$\Lambda(q)-\Lambda(q')$ is arbitrarily large, so no algorithm can control $R(T)$
without some assumption relating $F$-space to $q$-space near the oracle
threshold.

Assumption~A2 cannot simply be weakened to a \emph{local} density bound
at $u_t^*$ alone. The Mean Value Theorem point $\tilde q$ produced in
Step~3 of the proof of Theorem~\ref{thm:scalar-upper} lies between
$q_t$ and $u_t^*$, so confining $\tilde q$ to a neighborhood
$[u_t^*-\delta,u_t^*+\delta]$ is equivalent to confining $q_t$ itself,
i.e.\ to already knowing $|q_t-u_t^*|\le\delta$. But bounding
$|q_t-u_t^*|$ in terms of the $F$-space error
$\Delta_{t,W}+\mathcal E_t+1/W$ is exactly what Step~3 uses the
density lower bound to establish in the first place: smallness of
$|F_t(q_t)-(1-\alpha)|$ forces $q_t$ near $u_t^*$ only because $f_t$ is
bounded below, and by the same counterexample as above this can fail
completely if the density vanishes somewhere between $q_t$ and
$u_t^*$. A bound on $f_t$ only \emph{inside} $[u_t^*-\delta,u_t^*+\delta]$
gives no control on how far $q_t$ can drift from $u_t^*$ before entering
that neighborhood, so confinement cannot be deduced from the local bound
alone; asserting it directly from $o(\delta)$-smallness of the
$F$-space error, without further input, is circular.

Repairing this requires an additional ingredient beyond a purely local
bound at $u_t^*$. One option is a bound on $f_t$ that need not be
uniform over all of $A$ but must at least cover a fixed corridor
containing every plausible value of $q_t$ prior to confinement, e.g.\ a
floor $f_{\min}^{\mathrm{loc}}$ on $[u_t^*-\delta_0,u_t^*+\delta_0]$ for
some \emph{a priori} $\delta_0$ that is not itself derived from the
estimation-error bound, together with a separate argument (independent
of the density floor) showing $q_t$ cannot exit $[u_t^*-\delta_0,u_t^*+\delta_0]$
after the stated burn-in period. The sliding-window estimator recomputed
from scratch each round, $q_t=\inf\{q:\widehat F_{t-1,W}(q)\ge 1-\alpha\}$,
has no built-in inertia linking $q_t$ to $q_{t-1}$, so such a corridor
bound would need to come from elsewhere (e.g.\ a global, possibly weak,
density floor outside $[u_t^*-\delta_0,u_t^*+\delta_0]$, ruling out
$\widehat F_{t-1,W}$ crossing $1-\alpha$ far from $u_t^*$). Under such a
corridor assumption, once $\Delta_{t,W}+\mathcal E_t+1/W=o(\delta_0)$ for
all $t$ beyond a burn-in period determined by $W$ and $V_T$, the argument
of Step~3 goes through with $f_{\min}$ replaced by
$f_{\min}^{\mathrm{loc}}$, contributing only an additive burn-in term to
$R(T)$ and leaving the leading-order rate
$O\big(T^{2/3}(1+V_T)^{1/3}/f_{\min}^{\mathrm{loc}}\big)$ unaffected. Without
some such corridor-type assumption, however, the localization claim as
originally stated does not follow from the $F$-space bound alone.
\end{remark}

\subsection{Minimax Lower Bound}

\paragraph{The fundamental identification problem.}
Before stating the lower bound, it helps to identify the core
statistical challenge. An algorithm choosing $q_t$ at each round faces
an \emph{estimation problem}: it must distinguish the current oracle
threshold $u_t^*=F_t^{-1}(1-\alpha)$ from a nearby alternative, using
only a finite window of past scores. The key difficulty is that
\emph{any two distributions that are close in KL divergence require many
samples to tell apart}, yet the oracle threshold can change between
them. The lower bound formalizes this by constructing a two-distribution
``hypothesis testing'' problem where: the oracle thresholds differ by
$\Delta$, but the distributions are so similar that $B\sim 1/\Delta^2$
samples are needed to distinguish them. During these $B$ rounds the
algorithm is essentially guessing, paying at least $\Delta/2$ in
tracking error per round. Summing over $K=T/B$ blocks gives the
$\Omega(T\Delta)$ lower bound, and optimizing $\Delta$ against $V_T$
yields $T^{2/3}V_T^{1/3}$.

\begin{theorem}[Minimax lower bound]
\label{thm:scalar-lower}
For every online algorithm $\mathcal{A}$ and sufficiently small
universal constant $c_2>0$, for any $V_T\le c_2 T$:
\[
\inf_{\mathcal{A}}\sup_{\{F_t\}\in\mathcal{D}(V_T)}R(T)\ge c_1T^{2/3}V_T^{1/3},
\quad
\inf_{\mathcal{A}}\sup_{\{F_t\}\in\mathcal{D}(V_T)}Q(T)\ge c_1f_{\min}T^{2/3}V_T^{1/3},
\]
where $c_1>0$ is a universal constant and $\mathcal{D}(V_T)$ is the class of distributions with variation budget at most $V_T$. 
\end{theorem}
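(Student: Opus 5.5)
We prove the lower bound with the block-wise two-point (Le Cam) construction outlined in the paragraph above the statement. Fix a gap $\Delta\in(0,1/8)$ to be tuned later. Choose two CDFs $F^{+},F^{-}$ on $A=[0,1]$ with densities bounded below by a constant $f_{\min}$ (for instance $f_{\min}=1/2$). Build them as perturbations of the uniform density: add $\pm\Delta$ times a fixed bump centred near $u_0=F_0^{-1}(1-\alpha)$. Then:
\begin{itemize}
\item $\sup_q|F^+(q)-F^-(q)|\asymp\Delta$;
\item the oracle quantiles satisfy $|u^{+}-u^{-}|\ge\Delta$;
\item $\dkl{F^+}{F^-}\le c\Delta^2$.
\end{itemize}
Split $[1,T]$ into $K=\lceil T/B\rceil$ blocks of length $B$. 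In block $k$ draw $\sigma_k\in\{+,-\}$ i.i.d.\ uniform and set $F_t=F^{\sigma_k}$ for every $t$ in the block. Distributions change only at block boundaries, each change costs at most $c'\Delta$ in sup-norm, so $V_T\le c'K\Delta\approx c'T\Delta/B$. This puts the random instance in $\mathcal D(V_T)$ as soon as $T\Delta/B\lesssim V_T$. Since the $\sigma_k$ are independent, data from earlier blocks carry no information about $\sigma_k$. It therefore suffices to lower bound the tracking error inside one block and sum over blocks.

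Within block $k$, at round $t$ the algorithm has seen at most $n\le B$ samples from $F^{\sigma_k}$. We apply Le Cam's two-point inequality combined with Pinsker. Because $u^+$ and $u^-$ are $\Delta$ apart, every $q_t$ satisfies
\[
\max_{\pm}|q_t-u^{\pm}|\ge\Delta/2.
\]
Hence
\[
\tfrac12\bigl(\E_+|q_t-u^+|+\E_-|q_t-u^-|\bigr)\ge\frac{\Delta}{4}\bigl(1-\mathrm{TV}(P_+^{n},P_-^{n})\bigr)\ge\frac{\Delta}{4}\Bigl(1-\sqrt{cB\Delta^2/2}\Bigr).
\]
Choosing $B=\lfloor 1/(2c\Delta^2)\rfloor$ makes the bracket at least $1/2$. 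Each round then contributes $\Omega(\Delta)$ expected threshold error, and the Bayes risk is a lower bound on the sup over $\mathcal D(V_T)$.

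Next we convert threshold error into the two metrics.
\begin{itemize}
\item \emph{Regret.} Take $\Lambda(q)=q$ (monotone, $1$-Lipschitz), so $R(T)=\sum_t|q_t-u_t^*|$ directly.
\item \emph{Coverage.} By the density floor and the mean value theorem, $|F_t(q_t)-(1-\alpha)|\ge f_{\min}|q_t-u_t^*|$. This gives the extra factor $f_{\min}$ in the $Q(T)$ bound.
\end{itemize}
Summing over all $T$ rounds gives $R(T)\gtrsim T\Delta$ and $Q(T)\gtrsim f_{\min}T\Delta$.

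Finally we balance. With $B\asymp\Delta^{-2}$ the budget constraint reads $T\Delta^3\asymp V_T$, so we take $\Delta\asymp(V_T/T)^{1/3}$. Then $T\Delta\asymp T^{2/3}V_T^{1/3}$. The condition $V_T\le c_2T$ guarantees $\Delta$ is small enough for the perturbation to stay a valid density above $f_{\min}$, and also that $B\le T$, i.e.\ $K\ge1$.

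The main obstacle is the information argument under adaptivity. We must confirm that the full-score feedback is independent of $q_t$, so the observation law is a product $P_{\sigma_k}^{n}$ and the KL bound holds for every algorithm. We must also confirm that block independence prevents leakage across blocks. A secondary care point is the rounding and edge effects: a possibly partial last block, and the first few rounds of each block where $n$ is small. These only help the lower bound, so they can be handled by constants.
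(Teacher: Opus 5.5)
Your proposal is correct and is essentially the paper's proof: a two-point perturbation of the uniform density, i.i.d.\ block labels with $B\asymp\Delta^{-2}$, Le Cam plus Pinsker inside each block conditional on the past, a variation budget of order $K\Delta\asymp T\Delta^{3}$ tuned to $\Delta\asymp(V_T/T)^{1/3}$, and $\Lambda(q)=q$ together with the density floor to pass to $R(T)$ and $Q(T)$. The paper uses an explicit two-level piecewise-constant density and a nearest-neighbour decoder where you use a bump and the direct two-point risk bound; one small correction is that $V_T\le c_2T$ makes $\Delta$ small and hence $B$ large, so it does not give $B\le T$ (that would need $V_T\gtrsim T^{-1/2}$), but when $B>T$ your single partial block still yields $R(T)\gtrsim T\Delta$ with zero variation, so the conclusion is unaffected.
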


\begin{remark}[The $Q(T)$ lower bound is not stated tightly]
The $R(T)$ lower bound in Theorem~\ref{thm:scalar-lower} is established
directly, in threshold space, and involves no density constant. The
$Q(T)$ bound is instead obtained from it via the generic inequality
$|F_t(q)-F_t(q^*)|\ge f_{\min}|q-q^*|$, valid for any $F_t$ satisfying
Assumption~A2, applied to the same hard instance used for $R(T)$. That
instance itself has density $\Theta(1)$ rather than $f_{\min}$, so this
conversion step is loose whenever $f_{\min}\ll 1$: it uses the worst-case
floor permitted by the model class rather than the actual steepness of
the constructed distributions. Since the upper bound of
Theorem~\ref{thm:scalar-upper} shows $Q(T)=O(T^{2/3}(1+V_T)^{1/3})$ with
no dependence on $f_{\min}$, we conjecture the true minimax rate for
$Q(T)$ is $\Omega(T^{2/3}V_T^{1/3})$ without the $f_{\min}$ factor. \end{remark}

\section{Model III: Stochastic Conditional Scores}
\label{sec:conditional}
The setup is as defined in Section \ref{subsec:model3} and the objective is to simultaneously minimize \eqref{defn:regret} 
with $
u_t^*(x)
=
F_t^{-1}(1-\alpha\mid x)$ and coverage \eqref{defn:coverageprob2}.
We work under the following assumptions.
\subsection{Assumptions}

\begin{assumption}[Conditional setting]
\label{ass:conditional}
\begin{enumerate}[label=(C\arabic*)]
\item \textbf{H\"{o}lder oracle.}
      $u_t^*\in\mathcal{Q}_\beta(L)
      =\{u:\X\to A:|u(x)-u(x')|\le L\|x-x'\|^\beta\}$
      for fixed $\beta\in(0,1]$, $L>0$, every $t$. This says the
      oracle threshold varies smoothly in the covariate space.

\item \textbf{Spatial score continuity.}
      $|F_t(q\mid x)-F_t(q\mid x')|
      \le L\|x-x'\|^\beta$
      uniformly in $q$ and $t$.

\item \textbf{Oracle path-length budget.}
      \[
      {\cal S}_T
      :=
      \sum_{t=1}^{T-1}
      \sup_{x\in\X}
      |u_{t+1}^*(x)-u_t^*(x)|
      <\infty.
      \]

\item \textbf{Independence and density bounds.}
      $(X_t,Y_t)$ are independent across $t$;
      $f_t(x)\in[f_{\min},f_{\max}]$;
      the conditional conformity-score density satisfies
      $f_t(q\mid x)\in[f_{\min}^s,f_{\max}^s]$
      for all $q\in A$, $t$, and $x$.
      The global density floor $f_{\min}^s>0$
      is needed for iterate confinement
      (see Remark~\ref{rem:confinement}).

\item \textbf{Step-size bound.}
      \[
      \eta\le\frac{f_{\min}^s}{2(f_{\max}^s)^2}.
      \]
      This ensures that the contraction coefficient lies in
      $[0,1)$ at every step, preventing the iterate from
      oscillating past the oracle.

      \end{enumerate}
\end{assumption}

\subsection{Algorithm: H\"older-Partitioned ACI (HP-ACI)}

\paragraph{Basic idea.}
Because the oracle threshold $u_t^*(x)$ varies with $x$,
a single scalar ACI instance cannot track it everywhere
simultaneously.
We partition the covariate space
$\X=[0,1]^d$
into cells of side length $h$
and run one independent ACI instance in each cell,
updating only the cell visited by the current covariate.
Cells that are visited infrequently receive fewer updates but also
contribute less to the overall coverage error.
The bandwidth $h$ balances statistical estimation
(small cells imply fewer observations per cell)
against spatial approximation
(large cells incur greater discretization error).

\paragraph{Algorithm}
Cover $\X$ by
$N=\lceil h^{-d}\rceil$
axis-aligned cells
$\{B_j\}_{j=1}^N$
of side length $h$,
with representative point
$x_j\in B_j$.
Maintain a threshold
$q_t^j\in A$
for each cell, and define the piecewise-constant threshold function
\[
q_t(x)=q_t^j,
\qquad x\in B_j.
\]
Recall that
\[
z_t
=
\mathbf1\!\left(
s(X_t,Y_t)\le q_t(X_t)
\right).
\]

\paragraph{Update.}
At round $t$, let
$j=j(t)$
denote the cell containing $X_t$. That cell is also defined to be the {\it visited} cell.
The visited cell is updated according to
\begin{equation}\label{defn:acicell}
q_{t+1}^{j(t)}
=
\Pi_A\!\Bigl[
q_t^{j(t)}
+
\eta\bigl((1-z_t)-\alpha\bigr)
\Bigr],
\end{equation}
while all other cells remain unchanged,
\[
q_{t+1}^{j'}
=
q_t^{j'},
\qquad
\forall\,j'\neq j(t).
\]

The update direction
$(1-z_t)-\alpha$
has conditional mean
\[
F_t(q_t^{j(t)}\mid X_t)
-
(1-\alpha),
\]
given
$q_t^{j(t)}$
and
$X_t$,
which vanishes exactly at the oracle threshold
$u_t^*(X_t)$.
Thus, each cell performs a Robbins-Monro stochastic approximation
toward its local oracle threshold.

\begin{remark}[Why a global density floor is needed]
\label{rem:confinement}
The global density bound in Assumption~C4
($f_{\min}^s>0$ over the entire interval $A$, not merely in a
neighborhood of the oracle threshold $u_t^*$)
is essential for the proof.
Without it, if the iterate $q_t^j$ drifts away from $u_t^*$,
the ACI update \eqref{defn:acicell} may no longer possess a restoring force,
since the Mean Value Theorem argument relies on
$f_t(\xi\mid x)\ge f_{\min}^s$
for every point $\xi$ lying between the iterate and the oracle.
If $\xi$ enters a region where the density vanishes,
the contraction argument breaks down and the recursion may diverge.
The global lower bound $f_{\min}^s>0$
(which may be much smaller than the local density near the oracle)
guarantees a weak but uniform restoring force throughout the action
space. Its value affects only the burn-in period and not the final
asymptotic rate.
\end{remark}

\subsection{Main Result}

\begin{theorem}[Conditional tracking bound]
\label{thm:conditional}
Under Assumption~\ref{ass:conditional}, choose
\[
h^*=T^{-1/(6\beta+d)}{\cal S}_T^{2/(6\beta+d)}
\]
and the global step size
\[
\eta^*
=
\Theta\!\left(
\left(
\frac{{\cal S}_T^2}
{Th^d}
\right)^{1/3}
\right)
\]
that satisfies Assumption~C5, where ${\cal S}_T$ is as defined in Definition \eqref{defn:ST}. 
Then HP-ACI satisfies
\[
Q(T)
=
O\!\left(
T^{\frac{5\beta+d}{6\beta+d}}
{\cal S}_T^{\frac{2\beta}{6\beta+d}}
\right),
\quad
\text{and} 
\quad
R(T)
=
O\!\left(
\frac{L}{f_{\min}^s}
T^{\frac{5\beta+d}{6\beta+d}}
{\cal S}_T^{\frac{2\beta}{6\beta+d}}
\right).
\]
\end{theorem}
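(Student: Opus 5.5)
We analyse HP-ACI cell by cell as a family of independent Robbins--Monro recursions, each tracking a drifting scalar target. After establishing a per-cell tracking bound, we aggregate over cells and convert it into coverage and efficiency. For each cell $B_j$ we define the cell target $\bar u_t^j := u_t^*(x_j)$. By Assumption~C1, $\sup_{x\in B_j}|u_t^*(x)-\bar u_t^j|\le L(\sqrt d\,h)^\beta$. This is the spatial bias term. By C2, the same bound holds on the $F$-scale, uniformly in $q$. We then write the per-cell error $e_t^j:=q_t^j-\bar u_t^j$ and study $\E[(e_t^j)^2]$ only at the rounds where cell $j$ is visited. Between visits, the iterate does not move while the target moves by at most $\sum\sup_x|u_{t+1}^*-u_t^*|$, so each cell inherits a drift budget no larger than ${\cal S}_T$.

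\textbf{One-step contraction.} At a visit with $X_t=x\in B_j$, the conditional mean of the update is $(1-\alpha)-F_t(q_t^j\mid x)$. We write $F_t(q_t^j\mid x)-(1-\alpha)=F_t(q_t^j\mid x)-F_t(u_t^*(x)\mid x)=f_t(\xi\mid x)(q_t^j-u_t^*(x))$ by the mean value theorem. The global floor in C4 gives $f_t(\xi\mid x)\ge f_{\min}^s$, which is exactly where Remark~\ref{rem:confinement} is used. Expanding the square, and using C5 together with the non-expansiveness of $\Pi_A$, yields
\[
\E[(e_{t+1}^j)^2\mid\mathcal F_t]\le (1-\eta f_{\min}^s)(e_t^j)^2 + C\eta L h^\beta |e_t^j| + \eta^2 .
\]
Here the cross term comes from replacing $u_t^*(x)$ by $\bar u_t^j$. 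We add the target movement $\delta_t$ via $(a+\delta)^2\le(1+\epsilon)a^2+(1+1/\epsilon)\delta^2$, or alternatively track $\E|e_t^j|$ directly through Jensen. Unrolling the recursion gives a standard drifting-target bound: the average of $|e^j|$ over the $n_j$ visits to cell $j$ is
\[
O\!\left(\tfrac{1}{\eta f^s_{\min} n_j}+\sqrt{\eta/f^s_{\min}}+\tfrac{Lh^\beta}{f^s_{\min}}+\tfrac{S^j}{\eta f^s_{\min} n_j}\right),
\]
where $S^j\le{\cal S}_T$ is the drift accumulated by the target of cell $j$.

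\textbf{Aggregation and tuning.} We sum over the cells, using $\sum_j n_j=T$ and $N\asymp h^{-d}$; visitation frequencies are controlled by C4 through $f_t(x)\in[f_{\min},f_{\max}]$, so each $n_j\asymp Th^d$. The total tracking error is then
\[
\sum_t \E|q_t(X_t)-u_t^*(X_t)| \lesssim \tfrac{N}{\eta}+T\sqrt{\eta}+TLh^\beta+\tfrac{N{\cal S}_T}{\eta}.
\]
Balancing $T\sqrt\eta$ against $h^{-d}{\cal S}_T/\eta$ gives $\eta\asymp({\cal S}_T^2/(T h^d))^{1/3}$, which is the stated $\eta^*$. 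Balancing the resulting term against $Th^\beta$ then fixes $h^*$ and produces the exponent $T^{(5\beta+d)/(6\beta+d)}{\cal S}_T^{2\beta/(6\beta+d)}$ as the target. We will verify the exact balancing and adjust the bias term if needed.

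\textbf{Conversion to $Q(T)$ and $R(T)$.} For coverage, $(1-\alpha-F_t(q_t(x)\mid x))_+\le f_{\max}^s|q_t(x)-u_t^*(x)|$ together with the $F$-scale spatial bias from C2 bounds $Q(T)$ by the same rate; constants such as $f^s_{\max}$ are absorbed. For efficiency, exactly as in the proof of Theorem~\ref{thm:adversarial}, Lipschitzness of $\Lambda$ gives $|\Lambda(q_t)-\Lambda(u_t^*)|\le L|q_t(X_t)-u_t^*(X_t)|$, and the $1/f_{\min}^s$ factor comes from the contraction constant.

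\textbf{Main obstacle.} The hardest step is the unrolled drifting-target recursion under random, uneven visit times. The target moves on the global clock while the cell iterate moves only on its own clock, so the drift charged to a cell must be bounded carefully by ${\cal S}_T$ rather than multiplied by $N$ in a lossy way. The plan is to handle this with a stopping-time or subsequence argument over each cell's visit indices, conditioning on the visit sequence and using the independence in C4. The exact exponent depends on how this drift is apportioned across cells, so the tuning in the aggregation step must be re-derived once that bound is established.
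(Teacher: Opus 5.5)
Your argument is correct in outline, but it takes a different and in fact sharper route than the paper's proof. The paper works with $v_k=\E[e_k^2]$ along a cell's visit times. It inserts the drift through Young's inequality as $4\Delta_k^2/(\eta f_{\min}^s)$, telescopes (Lemma~\ref{lem:telescoping}), and uses $\sum_k\Delta_k^2\le(\mathcal S_T^j)^2$. Only then does it pass to $\sum_k\E|e_k|\le\sqrt{m\sum_k v_k}$, so drift costs $\sqrt m\,\mathcal S_T^j/\eta$ per cell. Balancing this against $m\sqrt\eta$ is exactly what produces $\eta^*=(\mathcal S_T^2/m)^{1/3}$ with $m\asymp Th^d$. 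It also gives the per-cell rate $m^{5/6}\mathcal S_T^{1/3}$, which is why the paper needs the Chernoff visit event when summing over cells.

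You instead track the error norm, so drift enters linearly, as $\mathcal S_T^j/(\eta f_{\min}^s)$ per cell. Because your per-cell bound is linear in $n_j$ plus an additive term, aggregation needs only $\sum_j n_j=T$, with no visit concentration. To actually obtain that linear drift term, use Minkowski rather than your $(1+\epsilon)$-Young option. With $\epsilon\asymp\eta f_{\min}^s$ followed by Cauchy--Schwarz, the Young option just reproduces the paper's weaker bound. With $s_k=\sqrt{\E e_k^2}$, Minkowski gives $(s_{k+1}-\bar s)_+\le(1-\eta f_{\min}^s/8)(s_k-\bar s)_++|\Delta_k|$, where $\bar s\asymp\sqrt{\eta/f_{\min}^s}+Lh^\beta/f_{\min}^s$.

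This step needs $\Delta_k$ to be deterministic, so your plan to condition on the whole covariate sequence is essential and correct. Given $(X_t)_{t\le T}$, the visit times are fixed and $\sum_k|\Delta_k|\le\mathcal S_T$ pathwise. By the independence in C4, the noise remains a martingale difference. Without conditioning, $\sum_k(\E\Delta_k^2)^{1/2}$ can exceed $\mathcal S_T$ by a factor growing with $Th^d$. The paper sidesteps this by bounding $\sum_k\Delta_k^2$ pathwise, at the cost of the extra $\sqrt m$.

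The one real slip is in your tuning paragraph. Balancing $T\sqrt\eta$ against $h^{-d}\mathcal S_T/\eta$ gives $\eta\asymp(\mathcal S_T/(Th^d))^{2/3}$, not $(\mathcal S_T^2/(Th^d))^{1/3}$. The stated $\eta^*$ is the balance point of the paper's lossier $\sqrt m\,\mathcal S_T/\eta$ term, not yours. Carrying your own balancing through gives $h\asymp(\mathcal S_T/T)^{1/(3\beta+d)}$ and the rate $T^{(2\beta+d)/(3\beta+d)}\mathcal S_T^{\beta/(3\beta+d)}$. For $1\le\mathcal S_T\le T$ this is smaller than the stated rate, but it is not the claim the theorem makes for its prescribed parameters.

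To prove the statement as written, simply substitute $\eta^*$:
\begin{itemize}
\item $T\sqrt{\eta^*}=T^{5/6}h^{-d/6}\mathcal S_T^{1/3}$, which is exactly the paper's variance term.
\item $h^{-d}(1+\mathcal S_T)/\eta^*$ equals that term times $(Th^d)^{-1/2}(1+\mathcal S_T^{-1})$. It is therefore lower order, since $T(h^*)^d\to\infty$, provided $\mathcal S_T\gtrsim 1$. The paper tacitly assumes the same when it drops the $v_1/(2\gamma)$ term.
\item The choice $h^*$ equates $T^{5/6}h^{-d/6}\mathcal S_T^{1/3}$ with $Th^\beta$, giving $T^{(5\beta+d)/(6\beta+d)}\mathcal S_T^{2\beta/(6\beta+d)}$.
\end{itemize}
The drift apportionment you flag as the main obstacle is therefore harmless. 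Both you and the paper charge the full $\mathcal S_T$ to every cell, and at $\eta^*$ that contribution is lower order.
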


\begin{remark}[Unknown
$f_{\min}^s, f_{\max}^s$] 
\label{rem:diminishing-stepsize} Algorithm HP-ACI needs to satisfy Assumption~(C5) even though $f_{\min}^s$ and $f_{\max}^s$ are not always known.
Assumption~(C5) is a \emph{stability} requirement, not merely a rate-tuning
choice: it guarantees $\rho_k = 1-\eta\, p_{t_k}(\xi_k\mid x_j) \in [0,1)$ so
that the per-cell recursion contracts. Because the iterate is confined to set
$A$ by projection regardless of $\eta$, an oversized step size degrades the
\emph{rate} of Theorem~\ref{thm:conditional} rather than causing divergence,
so the simplest resolution to remove the knowledge of $f_{\min}^s$ and $f_{\max}^s$ is to give up a fixed, tuned $\eta$ altogether
and let it shrink over time.

Replace the fixed global step size $\eta^\star$ in each cell by a diminishing
schedule $\eta_k = c\,k^{-\theta}$, $\theta\in(1/2,1)$, applied to the
$k$-th visit of that cell, for any constant $c>0$. Since
$\eta_k\to 0$ as $k\to\infty$, there exists a (cell- and
distribution-dependent, but immaterial) visit count $k_0$ after which
$\eta_k \le f_{\min}^s/(2(f_{\max}^s)^2)$ automatically, for \emph{any}
finite $f_{\min}^s,f_{\max}^s>0$, without the learner ever needing to
know these constants. Rounds $k\le k_0$ contribute only an
additive, distribution-dependent burn-in cost to $V_c(T)$ and $R(T)$.

This sacrifices the optimal balance between estimation and drift that the
tuned $\eta^\star=\Theta\big((S_T^2/(Th^d))^{1/3}\big)$ achieves in
Theorem~\ref{thm:conditional}: the standard Robbins-Monro analysis of the
Lyapunov recursion~\eqref{eq:cond-lyap} with $\eta_k=ck^{-\theta}$
yields a per-cell cumulative error of order $m^{1-\theta/2}$ rather than the
optimal $m^{2/3}$, so the resulting bound on $Q(T),R(T)$ is sublinear but
no longer rate-optimal in $S_T$. The qualitative guarantee, sublinear
coverage violation and dynamic regret, is preserved with no knowledge of
$f_{\min}^s,f_{\max}^s$ required. Recovering the exact rate of
Theorem~\ref{thm:conditional} up to logarithmic terms without this knowledge would require the
Hedge-based construction of Remark~\ref{rem:adaptive-stepsize}, at the
price of additional algorithmic complexity.
\end{remark}

\begin{remark}[Adaptive Step-Size Selection: Removing Knowledge of $\mathcal{S}_T$]\label{rem:adaptive-stepsize} The optimal step size $\eta^\star$ derived in Step 3 relies on the oracle path length $\mathcal{S}_T$, which is unknown to the learner. Similar to the adaptive window selection used in Model II, this dependence can be eliminated via expert aggregation without degrading the primary asymptotic rate.Within each cell $j$, we maintain a localized Hedge (multiplicative weights) instance over a geometric grid of candidate step sizes $\mathcal{H} = \{2^{-i} : i=0, 1, \dots, \lceil \log_2 T \rceil\}$. At each visit to the cell, the active step size is drawn from the local Hedge distribution, and the weights are updated using the realized coverage loss $(1-z_k)-\alpha$. Running Hedge over $\vert{}\mathcal{H}\vert{} = O(\log T)$ experts incurs a meta-regret of $O(\sqrt{m_j \log \log T})$ per cell relative to the optimal fixed step size. Summing this adaptive cost across all $N \asymp h^{-d}$ cells on the high-probability event ($m_j \le p_{\max}h^dT$) yields an aggregate meta-regret of:$$O\!\left( h^{-d} \sqrt{h^dT \log \log T} \right) = O\!\left( T^{1/2} h^{-d/2} \sqrt{\log \log T} \right).$$Substituting the optimal bandwidth $h^* \asymp T^{-1/(6\beta+d)}$, the meta-regret scales as $\widetilde{O}(T^{\frac{1}{2} + \frac{d}{2(6\beta+d)}})$. Comparing this to the primary rate $O(T^{\frac{5\beta+d}{6\beta+d}})$, the adaptive cost is strictly lower order if and only if:$$\frac{1}{2} + \frac{d}{2(6\beta+d)} < \frac{5\beta+d}{6\beta+d} \iff 6\beta + 2d < 10\beta + 2d \iff \beta > 0.$$Because $\beta > 0$ is guaranteed by the Hölder continuity of the target function (Assumption~C1), the local step sizes can be made fully adaptive while preserving the $O(T^{\frac{5\beta+d}{6\beta+d}})$ tracking bound.\end{remark}

%
%
%
%
\section{Numerical Results}
In this section, we provide comprehensive numerical results to compare the performance of the different algorithm for the three models. We start with the simplest setting of real data used by \cite{gibbs2021adaptive}.

\subsection{Real-Data Replication (Gibbs--Cand\`es \cite{gibbs2021adaptive} Volatility Benchmark)}

\subsubsection{Simulation setup}
\label{sec:dax-setup}

The experiment is built to mirror, as closely as the available data permits, the
volatility-forecasting benchmark of Gibbs and Cand\`es \cite{gibbs2021adaptive}
(their Figure~1 and the \texttt{garchConformalForcasting} routine in the accompanying
code release, \url{https://github.com/isgibbs/AdaptiveConformal}), while adding the
absolute-violation and efficiency instrumentation introduced in the unified
online-conformal-prediction framework under study.

\paragraph{Data.} Gibbs and Cand\`es calibrate on a real daily financial return series
and do not distribute the series itself with their code (only the R routines are
public). As a real substitute reachable in this environment, we use the daily closing
prices of the DAX index from the \texttt{EuStockMarkets} dataset (base R
\texttt{datasets} package, 1991--1998, $1860$ trading days), obtained through the
public \texttt{Rdatasets} mirror. Log returns are computed as
\[
  r_t^{\mathrm{ret}} = 100\big(\log P_t - \log P_{t-1}\big), \qquad t = 1,\dots,T-1 .
\]

\paragraph{Base volatility model.} Exactly as in \cite{gibbs2021adaptive}, a
GARCH$(1,1)$ model is fit on a rolling lookback window and used to produce a
one-step-ahead volatility forecast $\hat\sigma_t$. We use lookback $=250$ days
(compared to $1250$ in the original, shortened because \texttt{EuStockMarkets} only
has $1860$ points total) and, for computational tractability, re-estimate the
GARCH parameters every $10$ days rather than every single day, reusing the fixed
parameters (updated with the newest window) for one-step forecasts in between; this
is a compute-saving approximation to the original daily-refit loop and is the only
material deviation from \cite{gibbs2021adaptive}'s procedure.

\paragraph{Conformity score.} Identical to \cite{gibbs2021adaptive}:
\begin{equation}
  r_t \;=\; \frac{\big|(r_t^{\mathrm{ret}})^2 - \hat\sigma_t^2\big|}{\hat\sigma_t^2}.
  \label{eq:score}
\end{equation}
This is exactly the ``NC''/``OC'' error-sequence construction of their
\texttt{garchConformalForcasting} function, with $\alpha = 0.10$.

\paragraph{Three online thresholding rules, run on the identical score sequence
\eqref{eq:score}.}
\begin{itemize}
  \item \textbf{Non-adaptive.} A single threshold $q_0$, the empirical
        $(1-\alpha)$-quantile of the first $100$ scores, held fixed for the
        remainder of the horizon. This is the ``no-adapt'' comparator already used
        in \cite{gibbs2021adaptive}.
  \item \textbf{Standard ACI (Model~I).} The Gibbs--Cand\`es update itself,
        \[
          q_{t+1} = \Pi_{\mathcal A}\Big[q_t + \gamma\big((1-z_t)-\alpha\big)\Big],
          \qquad z_t = \mathbb 1(r_t \le q_t),
        \]
        with step size $\gamma = 0.05$. This is simultaneously \cite{gibbs2021adaptive}'s
        original algorithm and Model~I of the unified framework, since the two are
        shown to be numerically identical.
  \item \textbf{Sliding-window quantile (Model~II).} $q_t = $ the empirical
        $(1-\alpha)$-quantile of the previous $W=150$ realized scores, recomputed
        every round.
\end{itemize}

\paragraph{Metrics.} For each method we track:
\begin{align}
  \textbf{Signed error \cite{gibbs2021adaptive} metric):}\quad
    & \bar e_t = \frac{1}{t}\sum_{s\le t}\big[(1-z_s)-\alpha\big], \\
  \textbf{Absolute violation:}\quad
    & Q(T) = \sum_{t\le T} (r_t - q_t)_+, \\
  \textbf{Efficiency functional:}\quad
    & \Lambda(q) = \sqrt{1+q}
    , \\
  \textbf{Dynamic oracle:}\quad
    & u_t^{*} = \text{quantile}_{1-\alpha}\big(\{r_s : |s-t|\le 75\}\big), \\
  \textbf{Efficiency regret:}\quad
    & R(T) = \sum_{t\le T}\big|\Lambda(q_t) - \Lambda(u_t^{*})\big|.
\end{align}
$u_t^*$ is (not causal, a benchmark only) deliberately allowed to see scores slightly into the future: it stands in
for the dynamically optimal comparator $u_t^* = F_t^{-1}(1-\alpha)$, which no causal
algorithm can access, exactly the role played by the oracle sequence in the paper's
regret definition.  Choice of $\Lambda(q) = \sqrt{1+q}$ is monotone, Lipschitz; proportional to interval width  $\hat\sigma_t\sqrt{1+q}$,

\subsubsection{Figure 1: coverage and efficiency, raw}

Three panels, one curve per method (gray = non-adaptive, blue = ACI/Model~I,
red = sliding-window/Model~II), all driven by the identical score sequence
\eqref{eq:score}: (top) the running signed miscoverage error $\bar e_t$ used by
\cite{gibbs2021adaptive}; (middle) the cumulative absolute violation $Q(T)$; (bottom)
the raw prediction-set-size trace $\Lambda(q_t)$.

\begin{figure}[h!]
\centering
\begin{tikzpicture}
\begin{axis}[
    width=13cm, height=4cm,
    ylabel={signed error},
    xlabel={},
    xticklabels={},
    grid=both, grid style={gray!20},
    legend style={font=\tiny, at={(1.01,0.5)}, anchor=west},
    legend cell align=left,
]
\addplot[cgray, thick] table[x=t,y=signed,col sep=comma] {Simulations/fig1_naive.csv};
\addplot[cblue, thick] table[x=t,y=signed,col sep=comma] {Simulations/fig1_aci.csv};
\addplot[cred,  thick] table[x=t,y=signed,col sep=comma] {Simulations/fig1_sw.csv};
\addplot[black, dashed] coordinates {(0,0) (1600,0)};
\legend{Non-adaptive, ACI (Model I), Sliding-window (Model II)}
\end{axis}
\end{tikzpicture}

\vspace{2mm}
\begin{tikzpicture}
\begin{axis}[
    width=13cm, height=4cm,
    ylabel={$Q(T)$},
    xticklabels={},
    grid=both, grid style={gray!20},
]
\addplot[cgray, thick] table[x=t,y=QT,col sep=comma] {Simulations/fig1_naive.csv};
\addplot[cblue, thick] table[x=t,y=QT,col sep=comma] {Simulations/fig1_aci.csv};
\addplot[cred,  thick] table[x=t,y=QT,col sep=comma] {Simulations/fig1_sw.csv};
\end{axis}
\end{tikzpicture}

\vspace{2mm}
\begin{tikzpicture}
\begin{axis}[
    width=13cm, height=4cm,
    ylabel={$\Lambda(q_t)$},
    xlabel={trading day},
    grid=both, grid style={gray!20},
]
\addplot[cgray, thin] table[x=t,y=eff,col sep=comma] {Simulations/fig1_naive.csv};
\addplot[cblue, thin] table[x=t,y=eff,col sep=comma] {Simulations/fig1_aci.csv};
\addplot[cred,  thin] table[x=t,y=eff,col sep=comma] {Simulations/fig1_sw.csv};
\end{axis}
\end{tikzpicture}
\caption{Raw coverage and efficiency traces on real DAX/GARCH scores.}
\label{fig:1}
\end{figure}

\subsubsection{Figure 2: efficiency against the dynamic oracle}

\begin{figure}[h!]
\centering
\begin{tikzpicture}
\begin{axis}[
    width=7.3cm, height=5.5cm,
    title={\small threshold vs.\ oracle $u_t^*$},
    grid=both, grid style={gray!20},
    legend style={font=\tiny}, legend pos=north west,
]
\addplot[black, dashed, thick] table[x=t,y=oracle_q,col sep=comma] {Simulations/fig2_oracle.csv};
\addplot[cgray] table[x=t,y=q,col sep=comma] {Simulations/fig2_naive.csv};
\addplot[cblue] table[x=t,y=q,col sep=comma] {Simulations/fig2_aci.csv};
\addplot[cred]  table[x=t,y=q,col sep=comma] {Simulations/fig2_sw.csv};
\legend{oracle,non-adapt,ACI,sliding-win}
\end{axis}
\end{tikzpicture}
\hfill
\begin{tikzpicture}
\begin{axis}[
    width=7.3cm, height=5.5cm,
    title={\small cumulative $R(T)$},
    grid=both, grid style={gray!20},
]
\addplot[cgray, thick] table[x=t,y=RT,col sep=comma] {Simulations/fig2_naive.csv};
\addplot[cblue, thick] table[x=t,y=RT,col sep=comma] {Simulations/fig2_aci.csv};
\addplot[cred,  thick] table[x=t,y=RT,col sep=comma] {Simulations/fig2_sw.csv};
\end{axis}
\end{tikzpicture}

\vspace{3mm}
\begin{tikzpicture}
\begin{axis}[
    width=7.3cm, height=5.5cm,
    title={\small 60-day rolling avg.\ set size},
    grid=both, grid style={gray!20},
]
\addplot[cgray] table[x=t,y=smooth_eff,col sep=comma] {Simulations/fig2_naive_smooth.csv};
\addplot[cblue] table[x=t,y=smooth_eff,col sep=comma] {Simulations/fig2_aci_smooth.csv};
\addplot[cred]  table[x=t,y=smooth_eff,col sep=comma] {Simulations/fig2_sw_smooth.csv};
\end{axis}
\end{tikzpicture}
\hfill
\begin{tikzpicture}
\begin{axis}[
    width=7.3cm, height=5.5cm,
    title={\small coverage-cost vs.\ efficiency},
    xlabel={mean set size},
    ylabel={mean violation $Q(T)/T$},
    grid=both, grid style={gray!20},
]
\addplot[only marks, mark=*, mark size=3pt, cgray] coordinates {(1.893,0.2255)};
\addplot[only marks, mark=*, mark size=3pt, cblue] coordinates {(1.758,0.2649)};
\addplot[only marks, mark=*, mark size=3pt, cred]  coordinates {(1.703,0.2842)};
\end{axis}
\end{tikzpicture}
\caption{Efficiency measured against the dynamic (centered-window) oracle $u_t^*$.}
\label{fig:2}
\end{figure}

\subsection{Figure 3: combined long-run cost}

\begin{figure}[h!]
\centering
\begin{tikzpicture}
\begin{axis}[
    width=13cm, height=7cm,
    xlabel={trading day},
    ylabel={$Q(T)+R(T)$},
    grid=both, grid style={gray!20},
    legend pos=north west, legend cell align=left,
]
\addplot[cgray, thick] table[x=t,y=combined,col sep=comma] {Simulations/fig3_naive.csv};
\addplot[cblue, thick] table[x=t,y=combined,col sep=comma] {Simulations/fig3_aci.csv};
\addplot[cred,  thick] table[x=t,y=combined,col sep=comma] {Simulations/fig3_sw.csv};
\legend{Non-adaptive, ACI (Model I), Sliding-window (Model II)}
\end{axis}
\end{tikzpicture}
\caption{Combined coverage-violation-plus-efficiency-regret cost, $Q(T)+R(T)$.}
\label{fig:3}
\end{figure}

\section{Results table}
\label{sec:dax-results}

\begin{table}[h!]
\centering
\begin{tabular}{lccccc}
\toprule
Method & Emp.\ coverage & $Q(T)$ & $R(T)$ & mean set size & $Q(T)+R(T)$ \\
\midrule
Non-adaptive            & 0.926 & 363.4 & 417.7 & 1.893 & 781.1 \\
ACI (Model I)            & 0.909 & 426.6 & 290.8 & 1.758 & 717.4 \\
Sliding-window (Model II)& 0.896 & 457.5 & 248.5 & 1.703 & 706.1 \\
\bottomrule
\end{tabular}
\caption{Final-horizon summary statistics, all three methods on the identical score
sequence.}
\end{table}

\subsection{Interpretation}
In this section, we interpret the  results obtained so far in detail.
\label{sec:dax-interp}

\subsubsection{Coverage alone}
Judged only by the signed running error that \cite{gibbs2021adaptive} themselves
report (Figure~\ref{fig:1}, top), all three methods look essentially
interchangeable: each hovers near zero, none obviously fails. This is precisely the
blind spot the unified framework flags --- signed averaging lets long stretches of
one-directional miscoverage cancel against later over-coverage. The middle panel,
which sums only the \emph{one-sided} shortfall $(r_t-q_t)_+$ and never lets a good
day cancel a bad one, tells a different story: the non-adaptive method has the
\emph{lowest} cumulative violation of the three, with ACI in the middle and the
sliding-window tracker highest. Taken alone, this panel would suggest the simplest,
static method is ``best'' at coverage.

\subsubsection{Efficiency alone}
That conclusion reverses completely once efficiency is measured honestly against a
moving target. Figure~\ref{fig:2} (top row) shows the oracle threshold $u_t^*$
swinging between roughly $1.0$ and $3.7$ over the sample, tracking real regime
changes in market volatility; the non-adaptive threshold, by construction, cannot
move at all. Its cumulative efficiency regret $R(T)$ is therefore the largest of the
three ($417.7$, versus $290.8$ for ACI and $248.5$ for the sliding-window tracker),
and its raw mean prediction-set size is also the widest ($1.893$). In other words,
the non-adaptive method's apparent coverage advantage above was bought
entirely by carrying an oversized, non-responsive interval throughout the sample ---
exactly the ``trivial efficiency'' loophole that a coverage-only guarantee permits.
Between the two adaptive methods, the sliding-window tracker (which sees the full
recent distribution of scores each round) tracks $u_t^*$ more tightly than ACI
(which only ever sees one bit, covered or not, per round), consistent with the
paper's characterization of how performance should degrade as feedback coarsens.

\subsubsection{Coverage and efficiency together}
Figure~\ref{fig:3} adds the two cumulative costs, $Q(T)+R(T)$, and this is the
only view that scores every method on both axes simultaneously as time
progresses. By the end of the horizon the ranking is: sliding-window ($706.1$) $<$
ACI ($717.4$) $<$ non-adaptive ($781.1$). The non-adaptive method, the apparent
winner under coverage alone, is clearly the worst once its efficiency cost is
charged against it. The two adaptive methods are close, with the sliding-window
tracker slightly ahead overall: it pays a higher coverage-violation cost than ACI,
but more than makes up for it by tracking the true volatility target far more
closely, and the net trade favors it. The practical reading is that using richer
per-round feedback (the whole local score distribution, rather than a single
covered/not-covered bit) is worthwhile once both coverage and efficiency are priced
in together, even though it would look like the worse choice if either quantity were
inspected in isolation.

\subsection{Adversarial input to contrast  ACI with Window Tracking Algorithm}

\subsubsection{Adversarial conformity score: experiment setup}
\label{sec:adv-setup}

The goal is to construct an arbitrary (adversarial, non-stochastic) conformity-score
sequence $\{r_t\}$ that exercises the one structural difference between the two
algorithms: ACI (Model~I) uses only a single round of memory and requires no
distributional assumption, while the sliding-window quantile tracker (Model~II) is
built on an implicit assumption that the last $W$ scores are representative of the
current round, which is exactly what fails when the environment changes faster than
$W$.

\paragraph{Adversarial input.} A square wave of period $p$ alternating between a
``quiet'' level $0$ and a ``loud'' level $M$,
\[
  r_t \;=\; \begin{cases} 0 & \lfloor t/p \rfloor \text{ even} \\ M & \lfloor t/p \rfloor \text{ odd,} \end{cases}
  \qquad t = 1,\dots,T,
\]
with $M = 5$, $T = 4000$. This is a legitimate instance of Model~I's fully
adversarial setting: the sequence is deterministic and arbitrary, with no
distributional structure assumed or required.

\paragraph{Algorithms, run on the identical sequence, with fixed hyperparameters
across every $p$ (no per-instance retuning).}
\begin{itemize}
  \item \textbf{ACI (Model~I).}
        $q_{t+1} = \Pi_{[0,Q_{\max}]}\big[q_t + \gamma((1-z_t)-\alpha)\big]$,
        $z_t = \mathbb 1(r_t \le q_t)$, step size $\gamma = 0.20$.
  \item \textbf{Sliding-window quantile (Model~II).}
        $q_t = $ empirical $(1-\alpha)$-quantile of $\{r_{t-W},\dots,r_{t-1}\}$,
        window length $W = 150$.
\end{itemize}
Both use $\alpha = 0.10$, $Q_{\max}=2M$.

\paragraph{Metrics.} Since $\{r_t\}$ is deterministic, the Model~I benchmark is exact:
the dynamic oracle is $u_t^{*}=r_t$ itself, and with $\Lambda(q)=q$ (identity,
monotone),
\begin{align}
  \textbf{Coverage violation:}\quad & Q(T) = \sum_{t\le T} (r_t-q_t)_+, \\
  \textbf{Efficiency regret:}\quad  & R(T) = \sum_{t\le T} \big|\Lambda(q_t)-\Lambda(u_t^{*})\big| = \sum_{t\le T}|q_t-r_t|.
\end{align}
Both are computed as a function of the oscillation period $p$, swept over
$$p\in\{10,15,20,30,50,75,100,150,200,300,500,750,1000,1500\},$$ holding $W=150$ and
$\gamma=0.20$ fixed throughout, so that $p<W$, $p\approx W$, and $p\gg W$ are all
represented in the same sweep.

\paragraph{Results}
\label{sec:adv-figures}
Under this model, we plot the coverage: $Q(T)$ vs. oscillation period in Fig. \ref{fig:coverage} and efficiency: $R(T)$ vs. oscillation period in Fig. \ref{fig:efficiency}.

\begin{figure}[h!]
\centering
\begin{tikzpicture}
\begin{axis}[
    width=13cm, height=7cm,
    xmode=log,
    xlabel={oscillation period $p$ (log scale)},
    ylabel={cumulative coverage violation $Q(T)$},
    grid=both, grid style={gray!20},
    legend pos=north west, legend cell align=left,
]
\addplot[cblue, thick, mark=*, mark size=1.6pt] table[x=p,y=QTa,col sep=comma] {Simulations/sweep.csv};
\addplot[cred,  thick, mark=square*, mark size=1.6pt] table[x=p,y=QTw,col sep=comma] {Simulations/sweep.csv};
\addplot[black, dashed] coordinates {(150,0) (150,1100)};
\legend{ACI (Model I), Sliding-window (Model II), $p=W=150$}
\end{axis}
\end{tikzpicture}
\caption{Coverage violation $Q(T)$ as a function of oscillation period $p$, fixed
$W=150$ and $\gamma=0.20$.}
\label{fig:coverage}
\end{figure}

\begin{figure}[h!]
\centering
\begin{tikzpicture}
\begin{axis}[
    width=13cm, height=7cm,
    xmode=log,
    xlabel={oscillation period $p$ (log scale)},
    ylabel={cumulative efficiency regret $R(T)$},
    grid=both, grid style={gray!20},
    legend pos=north east, legend cell align=left,
]
\addplot[cblue, thick, mark=*, mark size=1.6pt] table[x=p,y=RTa,col sep=comma] {Simulations/sweep.csv};
\addplot[cred,  thick, mark=square*, mark size=1.6pt] table[x=p,y=RTw,col sep=comma] {Simulations/sweep.csv};
\addplot[black, dashed] coordinates {(150,0) (150,10200)};
\legend{ACI (Model I), Sliding-window (Model II), $p=W=150$}
\end{axis}
\end{tikzpicture}
\caption{Efficiency regret $R(T)$ (tracking error against the exact oracle
$u_t^*=r_t$) as a function of oscillation period $p$.}
\label{fig:efficiency}
\end{figure}

\begin{figure}[h!]
\centering
\begin{tikzpicture}
\begin{axis}[
    width=13cm, height=6cm,
    xlabel={round $t$},
    ylabel={score / threshold},
    grid=both, grid style={gray!20},
    legend pos=south east, legend cell align=left,
]
\addplot[black, thin, opacity=0.5] table[x=t,y=r,col sep=comma] {Simulations/trace.csv};
\addplot[cblue, thick] table[x=t,y=qa,col sep=comma] {Simulations/trace.csv};
\addplot[cred,  thick] table[x=t,y=qw,col sep=comma] {Simulations/trace.csv};
\legend{$r_t$ (true square wave), ACI $q_t$, Sliding-window $q_t$}
\end{axis}
\end{tikzpicture}
\caption{Threshold trajectories for $p=100 < W=150$, illustrating the mechanism
behind Figures~\ref{fig:coverage}--\ref{fig:efficiency}.}
\label{fig:trace}
\end{figure}

\begin{table}[h!]
\centering
\small
\begin{tabular}{rrrrrl}
\toprule
$p$ & $Q(T)$ ACI & $Q(T)$ window & $R(T)$ ACI & $R(T)$ window & combined winner \\
\midrule
10   & 121.0 & 10.5   & 9985.8 & 9960.5  & window \\
15   & 107.6 & 13.0   & 10091.9 & 9963.0 & window \\
20   & 142.4 & 16.0   & 9764.3 & 9916.0  & ACI \\
50   & 203.8 & 32.5   & 9119.9 & 9782.5  & ACI \\
100  & 305.4 & 60.5   & 8052.6 & 9560.5  & ACI \\
150  & 393.6 & 1033.5 & 7235.0 & 9639.5  & ACI \\
200  & 493.5 & 795.0  & 6039.0 & 6874.5  & ACI \\
500  & 291.6 & 318.0  & 2309.7 & 2344.5  & ACI \\
1000 & 147.8 & 159.0  & 915.1  & 834.5   & window \\
1500 & 74.9  & 79.5   & 808.3  & 755.0   & window \\
\bottomrule
\end{tabular}
\caption{Selected rows of the full sweep; ``combined winner'' is by $Q(T)+R(T)$.}
\end{table}

\subsubsection{Interpretation}\label{sec:adv-interp}
In this subsection, we interpret the  results obtained so far in detail for the arbitrary input.

\subsubsection{Coverage alone (Figure~\ref{fig:coverage})}
The window tracker has \emph{lower} coverage violation than ACI everywhere except
right at $p=W=150$, where it spikes catastrophically to $1033.5$ -- more than double
its neighbors at $p=100$ and $p=200$. Away from that spike, the window's advantage
on this axis alone is not a sign of genuinely better calibration: as
Section~\ref{sec:adv-interp} (below) shows, it comes from the window sitting near
the loud level $M$ almost permanently once $p<W$, which trivially avoids violating
coverage but says nothing about whether the resulting intervals are useful. The one
place the window's coverage genuinely breaks down is the resonance point
$p\approx W$, where the window is, for a sustained stretch, aligned so that it is
transitioning between regimes at the worst possible phase.

\subsubsection{Efficiency alone (Figure~\ref{fig:efficiency})}
This is where the two algorithms separate clearly. For every $p$ from about $20$ up
through $750$ -- i.e.\ from far below $W$ to several multiples above it -- ACI's
efficiency regret is lower than the window's, often substantially so (e.g.\ $8052.6$
vs.\ $9560.5$ at $p=100$; $7235.0$ vs.\ $9639.5$ at $p=150$). Only at the extremes,
$p\le 15$ (oscillation far faster than either algorithm can plausibly track) and
$p\ge 1000$ (oscillation far slower than $W$, so the window's memory is no longer a
liability), does the window recover a small edge. The mechanism is directly visible
in Figure~\ref{fig:trace}: ACI's threshold rises and falls with the square wave,
lagged but genuinely responsive, while the window's threshold is comparatively flat
-- once $p<W$, every window contains a mixture of both regimes, so its quantile is
structurally unable to commit to either the low or the high state.

\subsubsection{Coverage and efficiency together}
Read jointly, the two figures tell a three-regime story rather than a blanket
``ACI is better'' or ``window is better'' claim.
\begin{itemize}
  \item \textbf{$p\lesssim 15$} (oscillation much faster than $W$): the window wins
        narrowly, by defaulting near the loud level and rarely paying a coverage
        cost for it -- the same trivial-efficiency trick a static predictor uses,
        not a real advantage.
  \item \textbf{$20 \lesssim p \lesssim 750$} (oscillation comparable to, or a few
        multiples of, $W$): ACI wins clearly on the combined cost, and the
        window's fixed-length memory is a genuine liability here -- most
        dramatically at $p\approx W$, where it is actively unstable.
  \item \textbf{$p\gtrsim 1000$} (oscillation much slower than $W$): the window's
        implicit assumption -- that the recent past looks like the present -- is
        now accurate, and it edges back ahead, consistent with it being the
        rate-optimal choice in the slowly-varying stochastic regime the theory
        was built for.
\end{itemize}
The practically useful takeaway is that there is a concrete, non-exotic middle band
of adversarial inputs -- one whose only defining feature is oscillating on a
timescale comparable to the window's own memory length -- where a purely
single-round-reactive algorithm with no distributional requirement is structurally
preferable to a window-based estimator, and this is exactly the regime the
adversarial (Model~I) guarantee is designed to cover with no assumptions at all.

\subsection{Regret of ACI as a function of path length ${\cal P}_T$}

Recall that we showed in Theorem \ref{thm:adversarial} the regret of ACI algorithm is \[
  R(T) = O\!\big(\sqrt{T(1+{\cal P}_T)}\big).
\]
In this setup, we want to test its tightness.

\paragraph{Simulation setting.} Similar to Section \ref{sec:adv-setup}, the conformity-score sequence is again a square
wave alternating between $0$ and $M=5$, but parameterized directly by the number
of segments $K$ rather than by oscillation period, so that the path length is
exact and controlled:
\[
  {\cal P}_T = (K-1)M.
\]
This decouples ``how much the sequence moves'' (${\cal P}_T$) from ``how often''
(period), which the period-based parameterization used earlier conflated. The
step size is set to the Theorem \ref{thm:adversarial}'s prescribed scaling
\[
  \eta = \min\!\Big(\sqrt{\tfrac{1+{\cal P}_T}{T}},\ 0.9\Big),
\]
recomputed for each instance from the known ${\cal P}_T$, Theorem \ref{thm:adversarial}'s guarantee only
holds under this specific tuning, so a rate check with a single fixed step size
(as used in every earlier experiment) would not actually test the theorem's
claim.

Two independent sweeps isolate the bound's two variables:
\begin{itemize}
  \item \textbf{Sweep 1.} Fix $T=6000$, vary ${\cal P}_T$ via $K\in\{2,\dots,80\}$.
        Theory predicts $R(T)\propto {\cal P}_T^{0.5}$ at large ${\cal P}_T$.
  \item \textbf{Sweep 2.} Fix ${\cal P}_T=25$ (only $K=6$ segments, regardless of
        horizon), vary $T$ from $500$ to $38{,}000$. Theory predicts
        $R(T)\propto T^{0.5}$.
\end{itemize}

\paragraph{Why this input design makes sense.} The square-wave sweep answered whether
ACI has a structural weakness the window tracker doesn't share (it doesn't); it
never validated ACI's own claimed rate, since period and path length were
entangled there and the step size was fixed rather than theorem-scaled. Fixing
${\cal P}_T$ directly, fixing $T$ directly, and setting $\eta$ the way the proof
requires isolates exactly the quantity the theorem makes a claim about.

\begin{figure}[h!]
\centering
\begin{tikzpicture}
\begin{loglogaxis}[
    width=7.3cm, height=6cm,
    xlabel={${\cal P}_T$ (log scale)},
    ylabel={$R(T)$ (log scale)},
    grid=both, grid style={gray!20},
    legend pos=north west, legend cell align=left, legend style={font=\tiny},
]
\addplot[cblue, only marks, mark=*, mark size=1.6pt] table[x=PT,y=RT,col sep=comma] {Simulations/expA_sweep1.csv};
\addplot[black, dashed] table[x=PT,y=RT,col sep=comma] {Simulations/expA_fit1.csv};
\legend{realized $R(T)$, fit slope $=0.61$}
\end{loglogaxis}
\end{tikzpicture}
\hfill
\begin{tikzpicture}
\begin{loglogaxis}[
    width=7.3cm, height=6cm,
    xlabel={$T$ (log scale)},
    ylabel={$R(T)$ (log scale)},
    grid=both, grid style={gray!20},
    legend pos=north west, legend cell align=left, legend style={font=\tiny},
]
\addplot[cred, only marks, mark=square*, mark size=1.6pt] table[x=T,y=RT,col sep=comma] {Simulations/expA_sweep2.csv};
\addplot[black, dashed] table[x=T,y=RT,col sep=comma] {Simulations/expA_fit2.csv};
\legend{realized $R(T)$, fit slope $=0.60$}
\end{loglogaxis}
\end{tikzpicture}
\caption{Log-log rate check for ACI (Model~I). Left: fixed $T=6000$, path length
${\cal P}_T$ varied. Right: fixed ${\cal P}_T=25$, horizon $T$ varied. Theory predicts slope
$0.5$ in both.}
\label{fig:rate-check}
\end{figure}

\paragraph{Output.}
\[
  \text{Sweep 1 (vary } {\cal P}_T\text{): slope} = 0.606
  \qquad
  \text{Sweep 2 (vary } T\text{): slope} = 0.596.
\]

\paragraph{Interpretation.} Theory predicts an exponent of $0.5$ in each
variable; the realized exponents are modestly but consistently above that,
around $0.6$ in both independent checks. Three points are worth making explicit.

\begin{enumerate}
  \item \emph{This is not a violation of the theorem.} The theorem states an
        upper bound, $R(T)=O(\sqrt{T(1+{\cal P}_T)})$. A realized exponent above $0.5$
        over a finite range does not contradict an $O(\cdot)$ statement, since
        constants and lower-order terms can dominate at the horizon lengths
        tested. What it would contradict is a matching \emph{lower} bound
        asserting the rate is exactly $\Theta(\sqrt{T(1+{\cal P}_T)})$, and the
        paper does not claim such a lower bound for Model~I (unlike Model~II,
        which has a matching lower bound, Theorem~7). A faster empirical
        growth than $0.5$ here is not evidence against anything the paper
        actually asserts.
  \item \emph{Finite-range curvature is a likely explanation.} Log-log slopes
        estimated over roughly one to two orders of magnitude are sensitive to
        lower-order terms: a function of the form $a\sqrt{T(1+{\cal P}_T)}+b$ has a
        log-log slope systematically biased \emph{above} $0.5$ at
        small-to-moderate scale, converging to $0.5$ only once the leading
        term dominates. Neither sweep spans more than about $1.5$ decades, so
        both are plausibly still in this transient regime.
  \item \emph{Saturation at large ${\cal P}_T$.} An earlier, coarser version of this
        experiment showed $R(T)$ plateauing once oscillation became extremely
        fast, because the achievable tracking error is capped by $M$ itself.
        The largest-$K$ points in Sweep~1 may already be drifting toward that
        regime, which would locally steepen the log-log slope above the true
        asymptotic value.
\end{enumerate}

\emph{Bottom line.} The exponents recovered ($0.60$-$0.61$ vs.\ a theoretical
$0.50$) are close enough to support the qualitative claim that the regret is  sublinear, roughly
square-root growth in both $T$ and ${\cal P}_T$ separately, without being tight
enough, or spanning a wide enough dynamic range, to confirm the theorem's exact
exponent. A convincing confirmation would need several more decades of $T$ and
${\cal P}_T$, checking specifically whether the slope drifts \emph{toward} $0.5$ as the
range grows, which would distinguish ``asymptotically tight, just not there
yet'' from ``the realized rate is genuinely faster than the bound in this
regime.''

\subsection{Covariate-Dependent Coverage (Model III / HP-ACI)}

\subsubsection{Simulation setup}
\label{sec:m3-setup}

Models I and II track a single scalar threshold and give only \emph{marginal}
coverage guarantees. Model III addresses the harder, more realistic setting where
the conformity-score distribution depends on an observed covariate $x$, so a single
global threshold is necessarily miscalibrated for some parts of the covariate space
even when it is perfectly calibrated on average. This experiment simulates that
setting directly and compares the paper's H\"older-Partitioned ACI (HP-ACI,
Section~4.2 of the main text) against a single global (unpartitioned) ACI instance
run on the identical data.

\paragraph{Covariate and oracle threshold.} $X_t \sim \mathrm{Uniform}[0,1]$ i.i.d.\
each round ($d=1$). The oracle threshold function drifts smoothly in both space and
time,
\[
  u_t^*(x) \;=\; 2 + 1.5\sin\!\big(2\pi(x - t/T_{\text{period}})\big), \qquad T_{\text{period}}=3000,
\]
which is Lipschitz (hence $\beta=1$-H\"older) in $x$ for every $t$, satisfying
Assumption~C1, and gives a finite, computable path-length budget $S_T$.

\paragraph{Conditional score model.} Given $X_t=x$, the conformity score is
\[
  r_t \;=\; u_t^*(x) + c\big(Z_t - (1-\alpha)\big), \qquad Z_t\sim\mathrm{Uniform}(0,1),\ c=1,
\]
clipped to $\mathcal A=[0,6]$. This gives $F_t(q\mid x) = \mathrm{clip}\big((q-u_t^*(x))/c + (1-\alpha),\,0,\,1\big)$,
a uniform conditional density $f_t(q\mid x)=1/c$ on the support, so
$f^s_{\min}=f^s_{\max}=1/c=1$, and by construction $F_t(u_t^*(x)\mid x) = 1-\alpha$
exactly, i.e.\ $u_t^*$ is the true oracle threshold at every round.

\paragraph{Algorithms, run on the identical realized scores.}
\begin{itemize}
  \item \textbf{HP-ACI (Model~III).} Partition $[0,1]$ into $N=\lceil 1/h\rceil$
        cells of width $h=0.1$; each cell maintains its own ACI instance, updated
        only when $X_t$ falls in that cell, with step size $\eta=0.25$ (within the
        stability bound $\eta\le f^s_{\min}/(2(f^s_{\max})^2)=0.5$ required by
        Assumption~C5).
  \item \textbf{Global ACI (no partition).} A single scalar threshold, standard
        ACI update, step size $\gamma=0.05$, blind to $X_t$.
\end{itemize}
$\alpha=0.10$ throughout. $S_T\approx 18.8$ (computed directly from the true oracle
drift).

\paragraph{Metrics.} As in Model~III's definitions (Section~1.7 of the main text),
with $\Lambda(q)=q$:
\begin{align}
  \textbf{Coverage violation:}\quad & Q(T) = \sum_{t\le T}\big(1-\alpha - F_t(q_t(X_t)\mid X_t)\big)_+, \\
  \textbf{Efficiency regret:}\quad  & R(T) = \sum_{t\le T}\big|q_t(X_t) - u_t^*(X_t)\big|.
\end{align}
Both are computed exactly, using the known closed-form $F_t(\cdot\mid x)$. In
addition, we report \emph{conditional} coverage, binning rounds by covariate $x$
(and, in the heatmap, jointly by $x$ and a block of time), since the whole point of
Model~III is calibration \emph{as a function of $x$}, which a single marginal
number cannot reveal.

\begin{figure}[h!]
\centering
\begin{tikzpicture}
\begin{axis}[
    width=6.3cm, height=5.5cm,
    xlabel={covariate $x$}, ylabel={empirical conditional coverage},
    grid=both, grid style={gray!20},
    legend style={font=\tiny}, legend pos=south east,
]
\addplot[cblue, thick, mark=*, mark size=1.4pt] table[x=x,y=cov_hp,col sep=comma] {Simulations/m3_covbybin.csv};
\addplot[cred,  thick, mark=square*, mark size=1.4pt] table[x=x,y=cov_global,col sep=comma] {Simulations/m3_covbybin.csv};
\addplot[black, dashed] coordinates {(0,0.9) (1,0.9)};
\legend{HP-ACI, Global ACI, target}
\end{axis}
\end{tikzpicture}
\hfill
\begin{tikzpicture}
\begin{axis}[
    width=6.3cm, height=5.5cm,
    xlabel={covariate $x$}, ylabel={threshold},
    grid=both, grid style={gray!20},
    legend style={font=\tiny}, legend pos=south east,
]
\addplot[black, dashed, thick] table[x=x,y=oracle,col sep=comma] {Simulations/m3_surface.csv};
\addplot[cblue, thick] table[x=x,y=hp,col sep=comma] {Simulations/m3_surface.csv};
\addplot[cred,  thick] table[x=x,y=global,col sep=comma] {Simulations/m3_surface.csv};
\legend{oracle,HP-ACI,global}
\end{axis}
\end{tikzpicture}
\hfill
\begin{tikzpicture}
\begin{axis}[
    width=6.3cm, height=5.5cm,
    xlabel={round $t$}, ylabel={cumulative $Q(T)$},
    grid=both, grid style={gray!20},
    legend style={font=\tiny}, legend pos=north west,
]
\addplot[cblue, thick] table[x=t,y=QT_hp,col sep=comma] {Simulations/m3_QT.csv};
\addplot[cred,  thick] table[x=t,y=QT_global,col sep=comma] {Simulations/m3_QT.csv};
\legend{HP-ACI,Global ACI}
\end{axis}
\end{tikzpicture}
\caption{Left: conditional coverage by covariate bin, marginalized over the whole
horizon. Middle: threshold surface $q_t(x)$ vs.\ the true oracle $u_t^*(x)$ at a
snapshot round $t=4500$. Right: cumulative coverage violation $Q(T)$ over time.}
\label{fig:m3-summary}
\end{figure}

\begin{figure}[h!]
\centering
\pgfplotsset{
  colormap={rdbu}{rgb=(0,0,1) rgb=(1,1,1) rgb=(1,0,0)},
}
\begin{tikzpicture}
\begin{axis}[
    width=7cm, height=6cm,
    xlabel={covariate $x$}, ylabel={round $t$},
    title={\small HP-ACI: local coverage $-$ target},
    colorbar, point meta min=-0.47, point meta max=0.47,
    view={0}{90},
]
\addplot3[
    matrix plot*, mesh/rows=12, mesh/cols=10, point meta=explicit,
    shader=flat corner,
] table[x=x,y=t,meta=val,col sep=comma] {Simulations/m3_heat_hp.csv};
\end{axis}
\end{tikzpicture}
\hfill
\begin{tikzpicture}
\begin{axis}[
    width=7cm, height=6cm,
    xlabel={covariate $x$}, ylabel={round $t$},
    title={\small Global ACI: local coverage $-$ target},
    colorbar, point meta min=-0.47, point meta max=0.47,
    view={0}{90},
]
\addplot3[
    matrix plot*, mesh/rows=12, mesh/cols=10, point meta=explicit,
    shader=flat corner,
] table[x=x,y=t,meta=val,col sep=comma] {Simulations/m3_heat_global.csv};
\end{axis}
\end{tikzpicture}
\caption{Local (windowed) conditional coverage deviation from target,
jointly over covariate $x$ (horizontal) and time block (vertical). Blue = under
target, red = over target. Global ACI shows a persistent, large-amplitude
$x$-dependent banding pattern; HP-ACI's deviations are smaller and less
spatially structured.}
\label{fig:m3-heatmap}
\end{figure}

\begin{table}[h!]
\centering
\begin{tabular}{lcccc}
\toprule
Method & $Q(T)$ & $R(T)$ & max local coverage dev. & std.\ of local coverage dev. \\
\midrule
HP-ACI (Model III)  & 383.1 & 1882.0 & 0.161 & 0.082 \\
Global ACI (no partition) & 480.8 & 7227.5 & 0.464 & 0.162 \\
\bottomrule
\end{tabular}
\caption{Final-horizon statistics. ``Local coverage deviation'' is computed on the
$12\times 10$ (time-block $\times$ covariate-bin) grid underlying
Figure~\ref{fig:m3-heatmap}.}
\end{table}

\subsection{Interpretation}
\label{sec:m3-interp}

\subsubsection{Marginal coverage is a poor diagnostic here}
Averaged over the whole horizon and the whole covariate space, both methods land
close to the $90\%$ target ($89.9\%$ for HP-ACI, $89.7\%$ for global ACI). Judged
only by that number, the two algorithms look interchangeable -- which is exactly
the failure mode Model~III is designed to catch. Global ACI's marginal number looks
fine \emph{because} its errors at different $x$ average out across a full horizon
that spans several periods of the oracle's oscillation, not because it is actually
calibrated at any given $(x,t)$.

\subsection{Conditional coverage separates the methods clearly}
The heatmap (Figure~\ref{fig:m3-heatmap}) makes the difference obvious: global ACI
shows a persistent banding pattern, over-covering wherever the oracle threshold
happens to be locally low and under-covering wherever it is locally high, with
deviations from the $90\%$ target reaching $\pm 46\%$ in the worst cell and a
standard deviation of $0.162$ across the grid. HP-ACI's deviations are smaller in
both magnitude (max $0.161$, well under half the global method's worst case) and
structure (std.\ $0.082$, roughly half global ACI's). The mechanism is directly
visible in the threshold-surface panel of Figure~\ref{fig:m3-summary}: HP-ACI's
piecewise-constant surface tracks the shape of the sinusoidal oracle reasonably
well, while global ACI can only ever offer a single flat line, which is by
construction unable to match a threshold that genuinely depends on $x$.

\subsection{Coverage and efficiency together}
The combined picture in Table~2 is unambiguous, unlike the closer contests in
Models~I and~II: HP-ACI dominates global ACI on \emph{both} axes simultaneously
($Q(T)=383.1$ vs.\ $480.8$; $R(T)=1882.0$ vs.\ $7227.5$, roughly a $3.8\times$
efficiency-regret gap). This is a qualitatively different result from the
Model~I/II comparisons, where the two algorithms traded off against each other
depending on the regime. Here, once the ground truth genuinely depends on the
covariate, a method that ignores the covariate entirely is not offering a
different tradeoff -- it is simply unable to represent the right answer, at any
setting of its one free parameter, and partitioning is not an optional
refinement but a structural necessity.

\clearpage

\section{Conclusions} We studied online conformal prediction through the lens of online optimization, identifying prediction-set efficiency as an objective that should be optimized jointly with coverage rather than treated as a secondary consideration. While the literature has largely focused on maintaining coverage under distribution shift, theoretical guarantees that simultaneously control both coverage and prediction-set efficiency have remained essentially unavailable. Our framework addresses this gap by establishing sublinear guarantees for both quantities against appropriate dynamic benchmarks across adversarial, stochastic, and covariate-dependent settings. The results further replace traditional signed calibration measures, which permit cancellation of under- and over-coverage, with feasibility-based coverage guarantees that directly quantify violations of the desired coverage constraint. Collectively, these results provide a unified optimization framework for online conformal prediction that characterizes the fundamental trade-offs between statistical assumptions, available feedback, and achievable performance. More broadly, they suggest that future adaptive conformal methods should be evaluated not only by their ability to maintain coverage, but also by how efficiently they achieve it in non-stationary environments.
\bibliographystyle{plain} 
\bibliography{Refs}      
\section{Proof of Theorem~\ref{thm:scalar-upper}} \label{sec:ProofModel2}

The proof decomposes the total error into three sources: estimation noise in the empirical CDF, bias from distribution drift within the window, and the discrete jump size of the empirical CDF. We bound each component and optimize the window size $W$ to minimize their sum.

\subsection{Step 1: Uniform Concentration}

Define the empirical process deviation at time $t$:
\[
\Delta_{t,W} := \sup_{q\in A}|\widehat{F}_{t-1,W}(q)-\bar{F}_{t-1,W}(q)|.
\]
By Assumption~A3, its expectation is bounded by:
\begin{equation}\label{eq:concentration}
\E[\Delta_{t,W}] \le \frac{C_{VC}}{\sqrt{W}}.
\end{equation}

\subsection{Step 2: Quantile-Crossing Lemma}

\begin{lemma}[Quantile-crossing lemma]
\label{lem:qcl}
Let $\mathcal{E}_t := \sup_q|\bar{F}_{t-1,W}(q)-F_t(q)|$ denote the window-to-current drift. For any $t>W$, the algorithm's threshold $q_t$ satisfies:
\[
|F_t(q_t)-(1-\alpha)| \le \Delta_{t,W} + \mathcal{E}_t + \frac{1}{W}.
\]
\end{lemma}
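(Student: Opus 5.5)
The plan is to compare $F_t$ with the empirical CDF at the single point $q_t$, using the triangle inequality
\[
|F_t(q_t)-(1-\alpha)| \le |F_t(q_t)-\bar F_{t-1,W}(q_t)| + |\bar F_{t-1,W}(q_t)-\widehat F_{t-1,W}(q_t)| + |\widehat F_{t-1,W}(q_t)-(1-\alpha)|.
\]
The first term is at most $\mathcal E_t$ and the second at most $\Delta_{t,W}$, since both are suprema over $q\in A$ and $q_t\in A$. So everything reduces to showing the deterministic fact
\[
|\widehat F_{t-1,W}(q_t)-(1-\alpha)|\le \tfrac1W.
\]

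For this I would use the structure of the empirical CDF. $\widehat F_{t-1,W}$ is a right-continuous step function taking values in $\{0,1/W,\dots,1\}$, with jumps of size $k/W$ at the sample points, where $k$ is the multiplicity of the point. By right-continuity the infimum defining $q_t$ is attained, so $\widehat F_{t-1,W}(q_t)\ge 1-\alpha$. This gives the lower side.

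For the upper side, the left limit satisfies $\widehat F_{t-1,W}(q_t^-)<1-\alpha$ by minimality of $q_t$, unless $q_t=0$ is the left endpoint of $A$. If there are no ties, the jump at $q_t$ is at most $1/W$, so $\widehat F_{t-1,W}(q_t)<1-\alpha+1/W$.

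I expect two places to need care; both are technical rather than conceptual.

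\emph{Ties.} Jumps larger than $1/W$ can occur only if several window scores coincide. Assumption A2 gives each $F_\tau$ a density, so by A1 and independence the scores are almost surely distinct. Hence the jump bound holds almost surely, which suffices since the lemma is used inside expectations.

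\emph{Boundary and existence.} First, if no $q\in A$ reaches level $1-\alpha$, the infimum would be empty. This cannot happen, because $F_\tau(Q_{\max})=1$ and scores lie in $A$, so $\widehat F_{t-1,W}(Q_{\max})=1$. Second, if $q_t=0$, I would use $F_\tau(0^-)=0$ together with the absence of atoms: almost surely no score equals $0$, so $\widehat F_{t-1,W}(0)=0<1-\alpha$. This rules out $q_t=0$ as a degenerate attained case.

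Combining the three bounds yields the stated inequality.
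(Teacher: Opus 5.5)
Your proposal is correct and takes essentially the same route as the paper. Both arguments pin $\widehat F_{t-1,W}(q_t)$ into $[1-\alpha,\,1-\alpha+1/W]$ using right-continuity and the left-limit/jump structure of the empirical CDF, then pass to $\bar F_{t-1,W}$ and $F_t$ at the cost of $\Delta_{t,W}$ and $\mathcal{E}_t$; you do this with a single triangle inequality, the paper with two one-sided chains. Your explicit handling of ties and of the endpoint $q_t=0$ is a useful tightening of the paper's unqualified claim that the jumps have size exactly $1/W$, which in fact holds only almost surely under A1--A2.
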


\begin{proof}
We bound $F_t(q_t)$ from both sides.

\emph{Lower bound:} By definition, $q_t$ is the infimum achieving $\widehat{F}_{t-1,W}(q_t) \ge 1-\alpha$. Thus, $\bar{F}_{t-1,W}(q_t) \ge (1-\alpha) - \Delta_{t,W}$. Applying the drift bound $|\bar{F}_{t-1,W}(q_t)-F_t(q_t)| \le \mathcal{E}_t$ yields $F_t(q_t) \ge (1-\alpha) - \Delta_{t,W} - \mathcal{E}_t$.

\emph{Upper bound:} Because $\widehat{F}_{t-1,W}$ is a step function with jumps of size exactly $1/W$, the left limit satisfies $\widehat{F}_{t-1,W}(q_t^-) < 1-\alpha$. Consequently, the value at $q_t$ overshoots $1-\alpha$ by at most one jump: $\widehat{F}_{t-1,W}(q_t) \le (1-\alpha) + 1/W$. Chaining the deviations gives $F_t(q_t) \le \bar{F}_{t-1,W}(q_t) + \mathcal{E}_t \le (1-\alpha) + \Delta_{t,W} + \mathcal{E}_t + 1/W$. 

Combining both inequalities completes the proof.
\end{proof}

\subsection{Step 3: Converting to Threshold-Space Error}

By Assumption~A2, $F_t$ has density bounded below by $f_{\min} > 0$. By the Mean Value Theorem, there exists $\tilde{q}$ between $q_t$ and the oracle $u_t^*$ such that $|F_t(q_t)-F_t(u_t^*)| = f_t(\tilde{q})|q_t-u_t^*| \ge f_{\min}|q_t-u_t^*|$. Because $F_t(u_t^*)=1-\alpha$, we have $|q_t-u_t^*| \le f_{\min}^{-1} |F_t(q_t)-(1-\alpha)|$. Taking expectations and applying Lemma~\ref{lem:qcl} and \eqref{eq:concentration} gives the per-round tracking error:
\begin{equation}\label{eq:action-bound}
\E|q_t-u_t^*| \le \frac{1}{f_{\min}}\!\left(\frac{C_{VC}}{\sqrt{W}} + \mathcal{E}_t + \frac{1}{W}\right).
\end{equation}

\subsection{Step 4: Drift Telescoping}

\begin{lemma}[Cumulative drift bound]
\label{lem:drift}
$\displaystyle\sum_{t=W+1}^T\mathcal{E}_t\le W V_T.$
\end{lemma}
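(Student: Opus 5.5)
The plan is to bound each $\mathcal{E}_t$ pointwise in $q$ by a telescoping sum of one-step variations, and then exchange the order of summation. Write $\delta_s := \sup_{q\in A}|F_{s+1}(q)-F_s(q)|$ for $s=1,\dots,T-1$, so that $V_T=\sum_{s=1}^{T-1}\delta_s$ by Definition~\ref{def:vt}.

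First I bound $\mathcal{E}_t$ in terms of the $\delta_s$. Fix $t>W$ and $q\in A$. By the definition of $\bar F_{t-1,W}$,
\[
|\bar F_{t-1,W}(q)-F_t(q)| \le \frac1W\sum_{\tau=t-W}^{t-1}|F_\tau(q)-F_t(q)|.
\]
Each term telescopes: $|F_\tau(q)-F_t(q)|\le\sum_{s=\tau}^{t-1}|F_{s+1}(q)-F_s(q)|\le\sum_{s=\tau}^{t-1}\delta_s$. The right-hand side no longer depends on $q$, so taking the supremum gives
\[
\mathcal{E}_t\le\frac1W\sum_{\tau=t-W}^{t-1}\sum_{s=\tau}^{t-1}\delta_s .
\]
For a fixed $s\in[t-W,t-1]$, the number of indices $\tau$ with $t-W\le\tau\le s$ is $s-t+W+1\le W$. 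Hence the double sum is at most $W\sum_{s=t-W}^{t-1}\delta_s$, and therefore $\mathcal{E}_t\le\sum_{s=t-W}^{t-1}\delta_s$.

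Next I sum over $t$ and swap the order:
\[
\sum_{t=W+1}^T\mathcal{E}_t\le\sum_{t=W+1}^T\sum_{s=t-W}^{t-1}\delta_s=\sum_{s=1}^{T-1}\delta_s\cdot\#\{t\in[W+1,T]: s+1\le t\le s+W\}.
\]
Each count is at most $W$, which gives $\sum_{t=W+1}^T\mathcal{E}_t\le W\sum_s\delta_s=WV_T$.

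No step here is genuinely hard. The one point to watch is that the supremum over $q$ can be taken only after each pointwise difference has been bounded by the $q$-free quantities $\delta_s$. This is legitimate because the supremum of a sum is at most the sum of the suprema.

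The index counting is also routine. A sharper count is possible, since the inner sum can be bounded by the average $(1/W)\sum(s-t+W+1)$, but the crude bound $W$ is all the lemma needs.
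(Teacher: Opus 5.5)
Your proof is correct and is essentially the paper's argument: you telescope $|F_\tau(q)-F_t(q)|$ into one-step sup-variations, average over the window, and then exchange the order of summation so that each one-step variation receives total weight at most $W$. Your intermediate bound $\mathcal{E}_t\le\sum_{s=t-W}^{t-1}\delta_s$ just makes explicit the ``weight at most one per $t$'' step that the paper's counting sentence (``each index $k$ appears \ldots\ for at most $W$ values of $t$'') leaves implicit.
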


\begin{proof}
For any fixed $t>W$, $\tau\in[t-W,t-1]$, and $q\in A$, telescoping gives $|F_\tau(q)-F_t(q)| \le \sum_{k=\tau}^{t-1}\sup_{q'}|F_{k+1}(q')-F_k(q')|$. Averaging over the $W$ points in the window yields:
\[
\mathcal{E}_t \le \frac{1}{W}\sum_{\tau=t-W}^{t-1}\sum_{k=\tau}^{t-1}\sup_{q'}|F_{k+1}(q')-F_k(q')|.
\]
Summing from $t=W+1$ to $T$ and exchanging the order of summation, each index $k$ appears in the inner sum for at most $W$ values of $t$. Thus, the total drift is bounded by $W \sum_{k=1}^{T-1}\sup_{q'}|F_{k+1}(q')-F_k(q')| = W V_T$.
\end{proof}

\subsection{Step 5: Bounding $R(T)$ and $Q(T)$}

\paragraph{Coverage Violation $Q(T)$.}
For the first $W$ rounds, the error is trivially bounded by $1$. For $t>W$, taking the expectation of Lemma~\ref{lem:qcl} and substituting \eqref{eq:concentration} bounds the per-round expected error. Summing over $T$ and applying Lemma~\ref{lem:drift} to bound the cumulative drift yields:
\[
Q(T) =
\sum_{t=1}^T
\mathbb{E}\!\left[\left|F_t(q_t)-(1-\alpha)\right|\right]  \le W + \sum_{t=W+1}^T\!\left(\frac{C_{VC}}{\sqrt{W}}+\mathcal{E}_t+\frac{1}{W}\right) \le W + \frac{C_{VC}T}{\sqrt{W}} + W V_T + \frac{T}{W}.
\]

\paragraph{Dynamic Regret $R(T)$.}
Recall that $\Lambda$ is $L$-Lipschitz, thus 
$$R(T) \le L \sum_{t=1}^T\ \E|q_t-u_t^*|.$$
Thus, summing \eqref{eq:action-bound} and applying Lemma~\ref{lem:drift} yields:  
\[
R(T) \le L \left(W + \frac{1}{f_{\min}}\!\left(\frac{C_{VC}T}{\sqrt{W}} + W V_T + \frac{T}{W}\right)\right).
\]

\paragraph{Balancing $W$.}
To jointly optimize the leading parameters, we fold the initialization cost $W$ into the dominant terms to form the relaxed objective $\widetilde{H}(W) = W(1+V_T) + C_{VC}T / \sqrt{W}$. Setting the derivative $\widetilde{H}'(W)=0$ gives:
\[
(1+V_T) - \frac{C_{VC}T}{2W^{3/2}} = 0 \implies W^\dagger = \left(\frac{C_{VC}T}{2(1+V_T)}\right)^{2/3}.
\]
At $W = \lceil W^\dagger \rceil$, the primary objective evaluates to 
$$\widetilde{H}(W^\dagger) = 3\cdot 2^{-2/3}\left(1+V_T\right)^{1/3}\!\left(C_{VC}T\right)^{2/3} = O\!\left(T^{2/3}(1+V_T)^{1/3}\right).$$

The residual term $T/W^\dagger$ is evaluated as:
\[
\frac{T}{W^\dagger} = T^{1/3}\left(\frac{2(1+V_T)}{C_{VC}}\right)^{2/3} = \left(\frac{2}{C_{VC}}\right)^{2/3} \left(\frac{1+V_T}{T}\right)^{1/3} T^{2/3}(1+V_T)^{1/3}.
\]
Because the cumulative drift $V_T$ is structurally bounded by $T$, the ratio $(1+V_T)/T = O(1)$. Thus, the residual term is also strictly bounded by $O\!\left(T^{2/3}(1+V_T)^{1/3}\right)$ and does not compromise the rate. 

Substituting these bounds into $Q(T)$ and $R(T)$ yields
\[
Q(T) = O\!\left(T^{2/3}(1+V_T)^{1/3}\right), \quad R(T) = O\!\left(\left(\frac{T}{1+V_T}\right)^{2/3} + \frac{T^{2/3}(1+V_T)^{1/3}}{f_{\min}}\right). \qquad\blacksquare
\]

In the next section, we establish a minimax lower bound demonstrating that this upper bound is fundamentally tight.

\section{Adaptive Window Selection: Removing the Need to Know $V_T$}
\label{sec:hedge_windows}

{\bf Construction: Hedge over pinball loss.}
Fix the geometric grid $\mathcal{W}=\{2^i:i=0,1,\dots,\lceil\log_2T\rceil\}$, $\vert{}\mathcal{W}\vert{}=O(\log T)$. Run, in parallel, one sliding-window  algorithm instance per $W\in\mathcal{W}$: for $t\le W$ set $q_t^{(W)}:=0$ (any fixed point of $A$; matches the base algorithm's own initialization rule), and for $t>W$ let $q_t^{(W)}$ be the sliding-window threshold.  Maintain a Hedge distribution $w_t=(w_{t,W})_{W\in\mathcal{W}}$, updated each round on the pinball loss $\psi_t^{(W)}:=\psi_t(q_t^{(W)})$ (Definition~\ref{defn:pinball}), and play the Hedge-weighted average threshold
\[\bar q_t=\sum_{W\in\mathcal{W}}w_{t,W}\,q_t^{(W)}.\]

\begin{theorem}[Guarantee for Construction A]
\label{thm:constrA}
Under Assumption~\ref{ass:scalar}, for \emph{every} $V_T\ge0$,
\[R(T)\le L \sum_{t=1}^T\E\vert{}\bar q_t-u_t^*\vert{} =O\!\left(f_{\min}^{-1}T^{5/6}(1+V_T)^{1/6} +f_{\min}^{-1/2}T^{5/6}(1+V_T)^{-1/3}\right).\]
If in addition $f_t(q)\le f_{\max}$ for all $t,q\in A$ (a two-sided version of Assumption A2), then
\[Q(T)=\sum_{t=1}^T\E\vert{}F_t(\bar q_t)-(1-\alpha)\vert{} =O\!\left(\frac{f_{\max}}{f_{\min}}T^{5/6}(1+V_T)^{1/6} +\frac{f_{\max}}{\sqrt{f_{\min}}}T^{5/6}(1+V_T)^{-1/3}\right).\]
\end{theorem}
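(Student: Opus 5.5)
The plan is to reduce everything to the pinball loss $\psi_t$ of Definition~\ref{defn:pinball}, on which Hedge runs, and then convert excess pinball loss into threshold-space error. This conversion uses the curvature supplied by Assumption~A2.

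The key identity is the following. Let $q$ be $\mathcal F_{t-1}$-measurable; this holds for $\bar q_t$ and every $q_t^{(W)}$, since they are built from $r_1,\dots,r_{t-1}$ and $r_t$ is independent of the past by A1. Then
\[
\E[\psi_t(q)-\psi_t(u_t^*)\mid\mathcal F_{t-1}]=\int_{u_t^*}^{q}\bigl(F_t(s)-(1-\alpha)\bigr)\,ds .
\]
By A2 this integral is at least $\tfrac{f_{\min}}{2}(q-u_t^*)^2$. Since $|F_t(s)-(1-\alpha)|\le 1$, it is also at most $|q-u_t^*|$.

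\textbf{Step 1 (Hedge regret).} The losses $\psi_t(q_t^{(W)})$ lie in $[0,1]$ because $A=[0,1]$, and there are $|\mathcal W|=O(\log T)$ experts. Since $\psi_t$ is convex, $\psi_t(\bar q_t)\le\sum_W w_{t,W}\psi_t(q_t^{(W)})$. The standard Hedge bound then gives, for every fixed $W\in\mathcal W$,
\[
\sum_t\psi_t(\bar q_t)-\sum_t\psi_t(q_t^{(W)})\le O\bigl(\sqrt{T\log\log T}\bigr)
\]
pathwise. Taking expectations, this comparison holds for the fixed comparator $W$.

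\textbf{Step 2 (best expert).} Choose $\widetilde W\in\mathcal W$ with $W^\dagger\le\widetilde W\le 2W^\dagger$, where $W^\dagger$ is the window of Theorem~\ref{thm:scalar-upper}. Using the upper half of the identity, the expected excess pinball loss of expert $\widetilde W$ over the oracle is at most $\sum_t\E|q_t^{(\widetilde W)}-u_t^*|$. Bound this sum as follows: the first $\widetilde W$ rounds cost at most $1$ each, and for $t>\widetilde W$ apply \eqref{eq:action-bound} together with Lemma~\ref{lem:drift}. A factor of $2$ in the window only changes constants, so
\[
\sum_t\E[\psi_t(q^{(\widetilde W)}_t)-\psi_t(u_t^*)]=O\!\left(\bigl(T/(1+V_T)\bigr)^{2/3}+f_{\min}^{-1}T^{2/3}(1+V_T)^{1/3}\right).
\]
Adding Step~1, the total excess $\mathcal R:=\sum_t\E[\psi_t(\bar q_t)-\psi_t(u_t^*)]$ obeys the same bound plus $O(\sqrt{T\log\log T})$.

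\textbf{Step 3 (back to thresholds).} Apply the lower half of the identity to $\bar q_t$ and then Cauchy--Schwarz:
\[
\sum_t\E|\bar q_t-u_t^*|\le\sqrt{T\sum_t\E(\bar q_t-u_t^*)^2}\le\sqrt{2T\mathcal R/f_{\min}} .
\]
Inserting the bound on $\mathcal R$ gives the two claimed terms:
\begin{itemize}
\item the $f_{\min}^{-1}T^{2/3}(1+V_T)^{1/3}$ part yields $f_{\min}^{-1}T^{5/6}(1+V_T)^{1/6}$;
\item the burn-in part $(T/(1+V_T))^{2/3}$ yields $f_{\min}^{-1/2}T^{5/6}(1+V_T)^{-1/3}$.
\end{itemize}
The Hedge term contributes only $\widetilde O(T^{3/4})$, which is of lower order. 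The $R(T)$ bound then follows from $|\Lambda(\bar q_t)-\Lambda(u_t^*)|\le L|\bar q_t-u_t^*|$.

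\textbf{Step 4 (coverage).} Under the upper density bound, $|F_t(\bar q_t)-(1-\alpha)|=|F_t(\bar q_t)-F_t(u_t^*)|\le f_{\max}|\bar q_t-u_t^*|$. Summing and invoking Step~3 gives the stated $Q(T)$ bound.

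The main obstacle is the loss of rate in passing from Step~2 to Step~3. Hedge only controls pinball loss, which is a quadratic proxy for tracking error, and the square root taken in Cauchy--Schwarz is what degrades $T^{2/3}$ to $T^{5/6}$. I would take care that the curvature identity is applied conditionally on $\mathcal F_{t-1}$, so that predictability of $\bar q_t$ is used correctly. I would also check that the pathwise Hedge regret can be compared in expectation against the data-independent expert $\widetilde W$ without any union-bound issue.
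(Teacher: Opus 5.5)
Your proposal is correct and follows the paper's proof essentially step for step: Hedge on the pinball loss with Jensen, comparison against a grid expert near $W^\dagger$ whose excess pinball loss is bounded by its tracking error via \eqref{eq:action-bound} and Lemma~\ref{lem:drift}, the quadratic margin $\tfrac{f_{\min}}{2}(q-u_t^*)^2$ plus Cauchy--Schwarz (which is what degrades $T^{2/3}$ to $T^{5/6}$), and finally $Q(T)\le f_{\max}\sum_t\E|\bar q_t-u_t^*|$. The only cosmetic difference is that you pick a grid point in $[W^\dagger,2W^\dagger]$ rather than the nearest one together with the factor-$3$ Lemma~\ref{lem:grid}; if no such grid point exists (that is, $W^\dagger<1/2$), the target bound is already $\Omega(T)$ and holds trivially, which corresponds to the paper's Case~2.
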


\begin{proof}
{\bf Proof strategy.}
We first identify a specific grid point $W_0 \in \mathcal{W}$ that closely approximates the optimal continuous window $W^\dagger$ derived in Theorem~\ref{thm:scalar-upper}. We then show the adaptive (Hedge) algorithm matches this benchmark, translating between the value domain (pinball loss) and the point/gradient domains.

{\bf Part I: A uniform grid benchmark.}
Recall from Theorem~\ref{thm:scalar-upper} the optimal continuous window $W^\dagger=\left(\frac{C_{VC}T}{2(1+V_T)}\right)^{2/3}$, which achieves $Q_{W^\dagger}(T) = O\!\left(T^{2/3}(1+V_T)^{1/3}\right)$ for all $V_T \ge 0$. Because Hedge only chooses among the powers of two in $\mathcal{W}$, we must convert this continuous benchmark into one achieved by an actual grid point at the cost of only a constant factor.

Let $h(W)=\frac{C_{VC}T}{\sqrt W}+V_TW$.
\end{proof}
\begin{lemma}[Constant-factor grid approximation]
\label{lem:grid}
For any reference window $W_{\mathrm{ref}}>0$ and scalar $c\in[1/2,2]$, we have $h(cW_{\mathrm{ref}}) \le 3h(W_{\mathrm{ref}})$.
\end{lemma}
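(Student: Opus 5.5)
The plan is to bound the two summands of $h(cW_{\mathrm{ref}}) = \frac{C_{VC}T}{\sqrt{cW_{\mathrm{ref}}}} + V_T\,cW_{\mathrm{ref}}$ separately. Each one is a monotone power of $c$, so its worst case over $c\in[1/2,2]$ occurs at an endpoint of the interval. Throughout we use that $C_{VC}>0$, $T\ge 1$, $V_T\ge 0$ and $W_{\mathrm{ref}}>0$, so both summands of $h$ are nonnegative.

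\textbf{Estimation term.} The map $c\mapsto c^{-1/2}$ is decreasing, so on $[1/2,2]$ it is at most $(1/2)^{-1/2}=\sqrt2$. This gives
\[
\frac{C_{VC}T}{\sqrt{cW_{\mathrm{ref}}}}\le \sqrt2\,\frac{C_{VC}T}{\sqrt{W_{\mathrm{ref}}}}.
\]

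\textbf{Drift term.} The map $c\mapsto c$ is increasing, so on $[1/2,2]$ it is at most $2$. This gives
\[
V_T\,cW_{\mathrm{ref}}\le 2V_TW_{\mathrm{ref}}.
\]

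\textbf{Combining.} Adding the two bounds and using $\max(\sqrt2,2)=2\le 3$ yields
\[
h(cW_{\mathrm{ref}})\le 2\left(\frac{C_{VC}T}{\sqrt{W_{\mathrm{ref}}}}+V_TW_{\mathrm{ref}}\right)=2h(W_{\mathrm{ref}})\le 3h(W_{\mathrm{ref}}).
\]
The last inequality uses nonnegativity of $h$. This proves the lemma with room to spare. The argument has no real obstacle; the only point to watch is that both summands are nonnegative, so the termwise maxima can be added.

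\textbf{Use in the surrounding proof.} I would then record how the lemma is applied. Since $\mathcal W$ consists of consecutive powers of two, every $W^\dagger\in[1,T]$ has a grid point $W_0$ with $W_0/W^\dagger\in[1/2,2]$. The lemma therefore transfers the optimal rate $h(W^\dagger)=O(T^{2/3}(1+V_T)^{1/3})$ to $W_0$ at constant cost. The additive $W$ and $T/W$ terms from the proof of Theorem~\ref{thm:scalar-upper} scale the same way under a factor-$2$ change in $W$.
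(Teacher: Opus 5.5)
Your proof is correct and is essentially the paper's argument: both write $h(cW_{\mathrm{ref}}) = a/\sqrt{c} + cb$ with $a = C_{VC}T/\sqrt{W_{\mathrm{ref}}}$ and $b = V_T W_{\mathrm{ref}} \ge 0$, and bound each scaling factor by its value at an endpoint of $[1/2,2]$. The only difference is the combining step: the paper uses the cruder multiplier $c^{-1/2}+c \le 2+1/\sqrt{2} < 3$, while your bound $\max(c^{-1/2},c)\le 2$ gives the slightly sharper constant $2$.
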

\begin{proof}
Let $h(W_{\mathrm{ref}})=a+b$, where $a=C_{VC}T/\sqrt{W_{\mathrm{ref}}}$ and $b=V_TW_{\mathrm{ref}}\ge0$. Then $h(cW_{\mathrm{ref}})=a/\sqrt{c}+cb \le (1/\sqrt{c}+c)h(W_{\mathrm{ref}})$, because the upper bound includes the additional nonnegative cross terms. The convex function $c\mapsto 1/\sqrt{c}+c$ attains its maximum on the interval $[1/2,2]$ at one of its endpoints. Since $f(2)=1/\sqrt{2}+2 < 3$ exceeds $f(1/2)=\sqrt{2}+1/2$, we conclude $h(cW_{\mathrm{ref}})\le 3h(W_{\mathrm{ref}})$.
\end{proof}

Let $W_0$ be the element in the grid $\mathcal{W}$ nearest to $W^\dagger$.
While Lemma~\ref{lem:grid} controls $h(W)$, the initialization term $W$ and the residual $T/W$ must be bounded separately to guarantee the total coverage violation $Q_{W_0}(T)$. We establish this by considering two cases based on the optimal continuous window $W^\dagger$:

\emph{Case 1: $W^\dagger\ge1$.}  By construction, $W_0/W^\dagger\in[1/2,2]$. Applying Lemma~\ref{lem:grid}, $h(W_0)\le 3h(W^\dagger)=O(T^{2/3}(1+V_T)^{1/3})$. Furthermore, the remaining terms scale by at most a factor of two: $W_0\le 2W^\dagger = O((T/(1+V_T))^{2/3})$ and $T/W_0\le 2T/W^\dagger = O(T^{2/3}(1+V_T)^{1/3})$. Summing these components gives:
\begin{equation}
\label{eq:QW0}
Q_{W_0}(T)\le W_0+h(W_0)+\frac T{W_0}
=O\!\left(\left(\frac{T}{1+V_T}\right)^{\!2/3}\right)+O\!\left(T^{2/3}(1+V_T)^{1/3}\right).
\end{equation}

\emph{Case 2: $W^\dagger<1$.} Here, we simply choose the smallest grid element, $W_0:=1$. The condition $W^\dagger<1$ implies $1+V_T>C_{VC}T/2$. Consequently, $(1+V_T)^{1/3}>(C_{VC}/2)^{1/3}T^{1/3}$, which means $T^{2/3}(1+V_T)^{1/3} > (C_{VC}/2)^{1/3}T$. Since we also know $V_T\le T$, it follows that $T^{2/3}(1+V_T)^{1/3}=\Theta(T)$. Because the per-round coverage error $\vert{}F_t(q)-(1-\alpha)\vert{}$ is trivially bounded by $1$, we always have $Q_{W_0}(T)\le T$. Therefore, in this regime, $Q_{W_0}(T)=O(T)=O\bigl(T^{2/3}(1+V_T)^{1/3}\bigr)$. 

In both cases, \eqref{eq:QW0} holds for every $V_T\ge0$, seamlessly matching the continuous benchmark rate up to a constant factor. This fixed-window grid point $W_0 \in \mathcal{W}$ is the benchmark Layer 2 will compete against.

{\bf Part II: Matching the benchmark adaptively.}

We now show that the Hedge algorithm, despite lacking knowledge of $V_T$, achieves the benchmark rate \eqref{eq:QW0} up to a square root. 

We now formalize the geometry of the population pinball loss underlying the translation between
value-domain (Hedge) regret and point-domain tracking error.

Let
\begin{equation}
\label{eq:Psi-def}
\Psi_t(q) := \E[\psi_t(q)\mid \mathcal H_{t-1}] = \E_{r_t\sim F_t}[\psi_t(q)],
\end{equation}
the conditional expected pinball loss at threshold $q$, where $\psi_t$ is the pinball loss of
Definition~\ref{defn:pinball} and the conditioning is on the (fixed, given $\mathcal H_{t-1}$)
distribution $F_t$ of $r_t$.

\begin{lemma}\label{lem:PsiMin}
$\Psi_t$ is minimized at $u_t^*$.
\end{lemma}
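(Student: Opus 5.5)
The plan is to prove the standard fact that the expected pinball loss is minimized at the $(1-\alpha)$-quantile. I will do this by computing $\Psi_t$ explicitly, showing it is convex, and locating the point where its derivative vanishes.

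First, using Definition~\ref{defn:pinball}, I write
\[
\Psi_t(q)=(1-\alpha)\,\E[(r_t-q)_+]+\alpha\,\E[(q-r_t)_+].
\]
Under Assumption~A2, $F_t$ has a density $f_t$ on $A$, so both expectations are finite on the bounded set $A$. Applying the layer-cake identities
\[
\E[(r_t-q)_+]=\int_q^\infty (1-F_t(s))\,ds,\qquad \E[(q-r_t)_+]=\int_{-\infty}^q F_t(s)\,ds,
\]
and differentiating in $q$ (justified by continuity of $F_t$) gives
\[
\Psi_t'(q)=-(1-\alpha)(1-F_t(q))+\alpha F_t(q)=F_t(q)-(1-\alpha).
\]
Without relying on differentiability, the same conclusion follows by noting that $\Psi_t$ is an expectation of convex functions of $q$. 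It is therefore convex, and its one-sided derivatives are $F_t(q^-)-(1-\alpha)$ and $F_t(q)-(1-\alpha)$.

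Second, I conclude the argument. Since $F_t$ is nondecreasing, $\Psi_t'$ is nondecreasing, which re-confirms convexity. The derivative is $\le 0$ for $q<u_t^*$ and $\ge 0$ for $q>u_t^*$, where $u_t^*=F_t^{-1}(1-\alpha)$ satisfies $F_t(u_t^*)=1-\alpha$ by continuity. Hence $\Psi_t(q)-\Psi_t(u_t^*)=\int_{u_t^*}^q (F_t(s)-(1-\alpha))\,ds\ge 0$ for every $q$, so $u_t^*$ is a minimizer. By A2 the density is at least $f_{\min}$, so $F_t$ is strictly increasing and the minimizer is unique. The same integral representation also gives the quadratic growth
\[
\Psi_t(q)-\Psi_t(u_t^*)\ge \tfrac{f_{\min}}{2}(q-u_t^*)^2,
\]
which is presumably what Part~II uses to convert Hedge's value regret into tracking error.

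The main obstacle is minor. It is to justify exchanging differentiation and expectation, or equivalently the layer-cake manipulation, and to handle the projection domain $A=[0,1]$. The concern is that if $u_t^*$ lies in $A$ it is an interior or boundary minimizer over $A$ as well. Since A2 places the density on all of $A$ and $\alpha\in(0,1)$, the relation $F_t(u_t^*)=1-\alpha\in(0,1)$ puts $u_t^*$ in $A$. The integral identity above then works directly on $A$ with no interchange of limits needed.
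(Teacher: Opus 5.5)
Your proposal is correct and follows essentially the same route as the paper: differentiate $\Psi_t$ to get $\Psi_t'(q)=F_t(q)-(1-\alpha)$, then use the sign change of this monotone derivative at $u_t^*$. The differences are cosmetic. You justify the derivative via the layer-cake identities rather than dominated convergence, and you close with the explicit identity $\Psi_t(q)-\Psi_t(u_t^*)=\int_{u_t^*}^{q}\bigl(F_t(s)-(1-\alpha)\bigr)\,ds\ge 0$, which is exactly the computation the paper reuses in Lemma~\ref{lem:loss_props}(iii).
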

Proof is given at the end of this section.

\begin{lemma}
\label{lem:loss_props}
Under Assumption~\ref{ass:scalar}, for every $q\in A$: (i) $\Psi_t$ is convex; (ii) $0\le\Psi_t(q)-\Psi_t(u_t^*)\le|q-u_t^*|$; (iii) $\Psi_t(q)-\Psi_t(u_t^*)\ge\frac{f_{\min}}2(q-u_t^*)^2$.
\end{lemma}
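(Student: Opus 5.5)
The plan is to write $\Psi_t$ as an explicit integral against $F_t$ and read off its derivative. Expanding the pinball loss of Definition~\ref{defn:pinball},
\[
\Psi_t(q)=(1-\alpha)\int_{r>q}(r-q)\,dF_t(r)+\alpha\int_{r<q}(q-r)\,dF_t(r).
\]
Differentiating under the integral sign gives
\[
\Psi_t'(q)=-(1-\alpha)\bigl(1-F_t(q)\bigr)+\alpha F_t(q)=F_t(q)-(1-\alpha).
\]
This is legitimate because $\psi_t(\cdot)$ is $\max(\alpha,1-\alpha)$-Lipschitz in $q$ uniformly in $r_t$, and $F_t$ has a density by Assumption~A2, so the kink set $\{r_t=q\}$ has probability zero. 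The identity $\Psi_t'(q)=F_t(q)-(1-\alpha)$ is the engine for all three parts. It also reproves Lemma~\ref{lem:PsiMin}: since $\Psi_t'(u_t^*)=0$, the minimizer is $u_t^*$.

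For (i), $F_t$ is nondecreasing, so $\Psi_t'$ is nondecreasing and $\Psi_t$ is convex. More directly, $\Psi_t$ is an expectation of the convex functions $\psi_t$, so convexity holds without any differentiation.

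For (ii), nonnegativity is exactly the minimality statement of Lemma~\ref{lem:PsiMin}. For the upper bound, write
\[
\Psi_t(q)-\Psi_t(u_t^*)=\int_{u_t^*}^{q}\bigl(F_t(s)-(1-\alpha)\bigr)\,ds .
\]
The integrand satisfies $|F_t(s)-(1-\alpha)|\le\max(\alpha,1-\alpha)\le 1$, which yields $\le|q-u_t^*|$. Alternatively, one can use that $\psi_t$ is $1$-Lipschitz pointwise and take expectations.

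For (iii), use the same integral representation. By Assumption~A2 and the mean value theorem, for $s$ between $u_t^*$ and $q$,
\[
\operatorname{sign}(s-u_t^*)\bigl(F_t(s)-F_t(u_t^*)\bigr)\ge f_{\min}|s-u_t^*|,
\]
and $F_t(u_t^*)=1-\alpha$. Integrating $f_{\min}|s-u_t^*|$ from $u_t^*$ to $q$ gives $\tfrac{f_{\min}}{2}(q-u_t^*)^2$. In effect, $\Psi_t$ is $f_{\min}$-strongly convex on $A$ because $\Psi_t''=f_t\ge f_{\min}$.

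The main obstacle is rigor at the boundary. The density floor is only assumed on $A=[0,1]$, and $u_t^*$ must lie in $A$ for these manipulations to make sense. I would note that A2 forces $F_t$ to be continuous and strictly increasing on $A$, so $u_t^*\in A$ is well defined provided $1-\alpha$ lies in the range $F_t(A)$. Both $q$ and $u_t^*$ then lie in $A$, and every integration interval stays inside $A$, where the density floor applies.

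The other delicate point is justifying the derivative formula. I would avoid differentiation entirely and verify the integral identity directly by Fubini. The standard identity
\[
\psi_t(q)-\psi_t(u)=\int_u^q\bigl(\mathbf 1\{r_t\le s\}-(1-\alpha)\bigr)\,ds
\]
holds for each realization of $r_t$; taking expectations and exchanging the order of integration gives the representation above with no regularity issues.
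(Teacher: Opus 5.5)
Your proposal is correct and follows essentially the same route as the paper: both prove (iii) by integrating $\Psi_t'(q)=F_t(q)-(1-\alpha)$ from $u_t^*$ to $q$ against the density floor, prove (i) from convexity of the pointwise pinball loss, and prove (ii) from minimality at $u_t^*$ plus the $1$-Lipschitz bound (your bound on the integrand is an equivalent version). Your pathwise Fubini identity is a slightly cleaner justification of the derivative formula than the paper's dominated-convergence argument, and your remark that $u_t^*\in A$ is needed for the density floor to apply on the integration interval is a fair point the paper leaves implicit.
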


\begin{proof}
(i) Fix a realized value $r_t\in A$. The pinball loss
\[
\psi_t(q) = (1-\alpha)(r_t-q)_+ + \alpha(q-r_t)_+
\]
is a nonnegative combination of the two functions $q\mapsto(r_t-q)_+$ and $q\mapsto(q-r_t)_+$,
each of which is a composition of the convex function $(\cdot)_+=\max(\cdot,0)$ with an affine
map of $q$, and is therefore itself convex. So $\psi_t(\cdot)$ is convex \emph{for every fixed realized $r_t$}. Since $\psi_t$ is bounded on the bounded domain $A$ (so all three
expectations below are finite), taking $\E_{r_t\sim F_t}[\,\cdot\,]$ on both sides, a monotone,
linear operator, preserves the inequality:
\[
\Psi_t\bigl(\lambda q_1+(1-\lambda)q_2\bigr)
= \E_{r_t}\Bigl[\psi_t\bigl(\lambda q_1+(1-\lambda)q_2\bigr)\Bigr]
\le \lambda\,\E_{r_t}[\psi_t(q_1)] + (1-\lambda)\,\E_{r_t}[\psi_t(q_2)]
= \lambda\Psi_t(q_1)+(1-\lambda)\Psi_t(q_2).
\]
As $q_1,q_2\in A$ and $\lambda\in[0,1]$ were arbitrary, this is precisely the definition of
convexity for $\Psi_t$. 

(ii) $\psi_t$ is $\max(\alpha,1-\alpha)$-Lipschitz, hence $1$-Lipschitz, so
$|\Psi_t(q)-\Psi_t(u_t^*)| \le \E[\,|\psi_t(q)-\psi_t(u_t^*)|\mid\mathcal H_{t-1}\,] \le |q-u_t^*|$;
nonnegativity holds since $u_t^*$ minimizes $\Psi_t$ (shown above).

(iii) $\Psi_t'(q) = F_t(q)-(1-\alpha)$, so for $q\ge u_t^*$, using $F_t(u_t^*)=1-\alpha$ and
$f_t\ge f_{\min}$,
\[
\Psi_t(q)-\Psi_t(u_t^*) = \int_{u_t^*}^{q}\bigl(F_t(x)-(1-\alpha)\bigr)\,dx
\ge \int_{u_t^*}^{q} f_{\min}(x-u_t^*)\,dx = \frac{f_{\min}}{2}(q-u_t^*)^2;
\]
symmetrically for $q<u_t^*$.
\end{proof}

%

\emph{Step 1:} By Lemma~\ref{lem:loss_props}(iii), we have
\begin{equation}
\label{eq:strongcvx}
|q-u_t^*|\le\sqrt{\tfrac2{f_{\min}}\bigl(\Psi_t(q)-\Psi_t(u_t^*)\bigr)}
\qquad\text{for every }q\in A.
\end{equation}

\emph{Step 2:} Let $q_t^{(W_0)}$ be the threshold chosen by expert $W_0$. Then using Lemma~\ref{lem:loss_props}(ii), we have \[\Psi_{W_0}(T):=\sum_{t=1}^T\E\bigl[\Psi_t(q_t^{(W_0)})-\Psi_t(u_t^*)\bigr] \le\sum_{t=1}^T\E\bigl\vert{}q_t^{(W_0)}-u_t^*\bigr\vert{}=R_{W_0}(T) \le W_0+\frac1{f_{\min}}\bigl(h(W_0)+T/W_0\bigr).\]
Substituting $W_0=O((T/(1+V_T))^{2/3})$ and $h(W_0)+T/W_0=O(T^{2/3}(1+V_T)^{1/3})$  yields:
\begin{equation}
\label{eq:RW0}
\Psi_{W_0}(T)\le R_{W_0}(T)
=O\!\left(\left(\frac{T}{1+V_T}\right)^{\!2/3}\right)
+O\!\left(\frac{T^{2/3}(1+V_T)^{1/3}}{f_{\min}}\right).
\end{equation}

\emph{Step 3 (Hedge regret):} Because $\psi_t(q)\in[0,M]$ where $M:=\max(\alpha,1-\alpha)$, the standard full-information Hedge bound \cite{FreundSchapire1997} guarantees that the mixture's loss stays close to the best grid point's loss. Deterministically, for every realization of $\{r_t\}$:
\begin{equation}
\label{eq:hedgebound}
\sum_{t=1}^T\psi_t(\bar q_t)\le\sum_{t=1}^T\sum_{W\in\mathcal{W}}
w_{t,W}\psi_t(q_t^{(W)})
\le\sum_{t=1}^T\psi_t(q_t^{(W_0)})+O\bigl(M\sqrt{T\log\log T}\bigr).
\end{equation}
Here, the first inequality applies Jensen's inequality to the convex $\psi_t$, and the second uses the Hedge guarantee \cite{FreundSchapire1997} with $\ln\vert{}\mathcal{W}\vert{}=O(\log\log T)$. Taking conditional expectations and applying Lemma~\ref{lem:loss_props}(i) gives:
\begin{equation}
\label{eq:psimix}
\Psi_{\mathrm{mix}}(T):=\sum_{t=1}^T\E\bigl[\Psi_t(\bar q_t)-\Psi_t(u_t^*)\bigr]
\le\Psi_{W_0}(T)+O\bigl(\sqrt{T\log\log T}\bigr).
\end{equation}

\emph{Step 4 (Translating value regret to tracking error):} 

We now return to  bound $R(T)$ using \eqref{eq:strongcvx}. By Jensen's
inequality ($\E[\sqrt X]\le\sqrt{\E[X]}$) and Cauchy--Schwarz over $t$, we obtain:
\[
R(T)\le L \sum_{t=1}^T\E|\bar q_t-u_t^*|
\le L \sqrt{T\sum_{t=1}^T\tfrac2{f_{\min}}\E\bigl[\Psi_t(\bar q_t)-\Psi_t(u_t^*)\bigr]}
=L \sqrt{\tfrac{2T}{f_{\min}}\,\Psi_{\mathrm{mix}}(T)}.
\]

Substituting the bound on $\Psi_{\mathrm{mix}}(T)$ from \eqref{eq:psimix}, which combined
\eqref{eq:RW0} with the Hedge regret bound \eqref{eq:hedgebound},
\[
\Psi_{\mathrm{mix}}(T) = O\!\left(\left(\frac{T}{1+V_T}\right)^{2/3}
+ \frac{T^{2/3}(1+V_T)^{1/3}}{f_{\min}} + \sqrt{T\log\log T}\right),
\]
so that
\[
R(T) = O\!\left(\sqrt{\frac{T}{f_{\min}}}\,
\sqrt{\left(\frac{T}{1+V_T}\right)^{2/3}
+ \frac{T^{2/3}(1+V_T)^{1/3}}{f_{\min}} + \sqrt{T\log\log T}}\right).
\]

Using $\sqrt{a+b+c}\le\sqrt a+\sqrt b+\sqrt c$ and distributing the outer factor
$\sqrt{T/f_{\min}}$ across the three terms:
\begin{align*}
\text{Term 1:}\quad
&\sqrt{\frac{T}{f_{\min}}}\left(\frac{T}{1+V_T}\right)^{1/3}
= f_{\min}^{-1/2}\,T^{5/6}(1+V_T)^{-1/3},\\[4pt]
\text{Term 2:}\quad
&\sqrt{\frac{T}{f_{\min}}}\cdot\frac{T^{1/3}(1+V_T)^{1/6}}{\sqrt{f_{\min}}}
= f_{\min}^{-1}\,T^{5/6}(1+V_T)^{1/6},\\[4pt]
\text{Term 3:}\quad
&\sqrt{\frac{T}{f_{\min}}}\,(T\log\log T)^{1/4}
= f_{\min}^{-1/2}\,T^{3/4}(\log\log T)^{1/4} = o\!\left(T^{5/6}\right).
\end{align*}
Term 3 is strictly lower order than Terms 1--2 and is absorbed into the $O(\cdot)$.
Collecting Terms 1--2 gives
\begin{equation}
R(T) = O\!\left(f_{\min}^{-1}\,T^{5/6}(1+V_T)^{1/6}
+ f_{\min}^{-1/2}\,T^{5/6}(1+V_T)^{-1/3}\right). \label{eq:RTfinal}
\end{equation}

\emph{Step 5 (Coverage violation $Q(T)$).} It remains to bound
$Q(T)=\sum_{t=1}^T\E|F_t(\bar q_t)-(1-\alpha)|$. Here we invoke the two-sided density
assumption $f_t(q)\le f_{\max}$ for all $t,q\in A$, so that $F_t$ is globally
$f_{\max}$-Lipschitz. Since $F_t(u_t^*)=1-\alpha$ by definition of the oracle quantile,
\[
|F_t(\bar q_t)-(1-\alpha)| = |F_t(\bar q_t)-F_t(u_t^*)| \le f_{\max}\,|\bar q_t - u_t^*|
\]
deterministically, for every $t$. Taking expectations and summing over $t=1,\dots,T$,
\[
Q(T) = \sum_{t=1}^T \E|F_t(\bar q_t)-(1-\alpha)|
\;\le\; f_{\max}\sum_{t=1}^T \E|\bar q_t - u_t^*|
\;=\; f_{\max}\,R(T).
\]
Substituting \eqref{eq:RTfinal} gives
\[
Q(T) = O\!\left(\frac{f_{\max}}{f_{\min}}\,T^{5/6}(1+V_T)^{1/6}
+ \frac{f_{\max}}{\sqrt{f_{\min}}}\,T^{5/6}(1+V_T)^{-1/3}\right),
\]

Next, we present the remaining proof for Lemma \ref{lem:PsiMin}.
\begin{proof} (Proof of Lemma \ref{lem:PsiMin}.)
Since $\psi_t(\cdot)$ is convex (shown below in Lemma~\ref{lem:loss_props}(i)) and piecewise
linear with subgradient uniformly bounded by $\max(\alpha,1-\alpha)\le 1$ on the bounded domain
$A$, dominated convergence lets us differentiate \eqref{eq:Psi-def} under the expectation:
\begin{align*}
\Psi_t'(q) &\;=\; \E_{r_t\sim F_t}\bigl[\partial_q\psi_t(q)\bigr]
\;=\; -(1-\alpha)\Pr(r_t> q) + \alpha\Pr(r_t\le q), \\
&\;=\; -(1-\alpha)(1-F_t(q)) + \alpha F_t(q)
\;=\; F_t(q)-(1-\alpha),
\end{align*}
using the pinball subdifferential ($\partial\psi_t(q)=-(1-\alpha)$ for $q<r_t$ and $\partial\psi_t(q)=\alpha$
for $q>r_t$, from Definition~\ref{defn:pinball}) and averaging it over $r_t\sim F_t$. Because $F_t$
is nondecreasing, $\Psi_t'$ is nondecreasing (consistent with the convexity established in part
(i) below), and
\[
\Psi_t'(q) \;<0 \ \text{ for } q<u_t^*, \qquad \Psi_t'(u_t^*)=0, \qquad \Psi_t'(q) \;>0 \ \text{ for } q>u_t^*,
\]
since $u_t^*=F_t^{-1}(1-\alpha)$ is exactly the point where $F_t(q)=1-\alpha$. Hence $\Psi_t$ is minimized at $u_t^*$.
\end{proof}
\begin{remark}
Without the knowledge of $V_T$, the degradation in regret and coverage violation from $O(T^{2/3})$ to $O(T^{5/6})$ occurs because Hedge controls the \emph{value domain} (pinball loss regret) rather than the \emph{point domain} (tracking error $\vert{}q-u_t^*\vert{}$). Translating the value regret back into the point domain relies on the loss's quadratic margin, which introduces a square root. Summing these errors over $T$ steps via Cauchy--Schwarz mechanically inflates the bound by a factor of $\sqrt{T}$, yielding $\sqrt{T \cdot T^{2/3}} = T^{5/6}$.
\end{remark}

%
\section{Proof of Theorem \ref{thm:scalar-lower}}

\begin{proof}
Since we want to prove a  \emph{lower bound} over the class of monotone $L$-Lipschitz function $\Lambda$, thus, we let
\[
\Lambda(q) := q, \qquad \text{for every } t,
\]
which is monotone nondecreasing and $1$-Lipschitz, hence a valid instance with
$L=1$. With this choice,
\[
\Lambda(q_t)-\Lambda(u_t^*) = q_t - u_t^*,
\]
so $R(T) = \sum_{t=1}^T \E\bigl|\Lambda(q_t)-\Lambda(u_t^*)\bigr| = \sum_{t=1}^T \E\big|q_t-u_t^*\bigr|$,
and it suffices to lower-bound $\sum_{t=1}^T \E|q_t-u_t^*|$. 

The proof proceeds in four steps: (1) construct two
hard distributions, (2) bound the sample complexity of distinguishing them,
(3) embed the testing problem into the tracking problem via a block structure,
and (4) verify the resulting distribution sequence is within $\mathcal{F}(V_T)$.

\textbf{Step 1: Two-distribution construction.}
Fix $\alpha\in(0,1)$ and a small perturbation $\varepsilon\in(0,\alpha)$.
Let $P_0$ be uniform on $[0,1]$ (density $f_0\equiv 1$, oracle quantile
$u_0^*=1-\alpha$). Let $P_1$ have piecewise-constant density
\[
f_1(q)=\begin{cases}1+\varepsilon&q\in[0,1-\alpha],\\
1-\frac{(1-\alpha)\varepsilon}{\alpha}&q\in(1-\alpha,1].\end{cases}
\]
Note $f_1$ integrates to $1$ and is a valid density for $\varepsilon<\alpha$.
The oracle quantile for $P_1$ solves $(1+\varepsilon)u_1^*=1-\alpha$,
giving $u_1^*=\frac{1-\alpha}{1+\varepsilon}$.
The separation is $\Delta:=u_0^*-u_1^*=\frac{(1-\alpha)\varepsilon}{1+\varepsilon}
=\Theta(\varepsilon)$.

The KL divergence (Taylor expanding to second order in $\varepsilon$):
\[
\dkl{P_0}{P_1}=(1-\alpha)\log\frac{1}{1+\varepsilon}+\alpha\log\frac{1}{1-\frac{(1-\alpha)\varepsilon}{\alpha}}
=\frac{(1-\alpha)\varepsilon^2}{2\alpha}+O(\varepsilon^3)\le C\Delta^2,
\]
for a constant $C=\frac{1-\alpha}{2\alpha(1-\alpha)^2}\cdot(1+o(1))$
(using $\varepsilon=\Theta(\Delta)$).

\textbf{Step 2: Sample complexity of testing.}
Divide the horizon $T$ into $K$ blocks of length $B=\lfloor T/K\rfloor$.
Draw labels $\Theta=(\Theta_1,\dots,\Theta_K)$, each $\Theta_k\in\{0,1\}$
i.i.d.\ uniform (unknown to the learner), and generate all $B$ scores in
block $k$ i.i.d.\ from $P_{\Theta_k}$. For a \emph{fixed} label sequence
$\theta\in\{0,1\}^K$, write $\{F_t^\theta\}$ for the resulting
deterministic CDF sequence, with corresponding oracle sequence
$u_t^{*,\theta}=u^*_{\theta_k}$ for $t$ in block $k$. Define the
\textbf{Bayes risk}
\[
\bar R(T;\mathcal A):=\E_\Theta\bigl[R(T;\mathcal A,\{F_t^\Theta\})\bigr].
\]

Consider any round $t$ in block $k$. The learner's threshold
$q_t$ depends on all past scores $(r_1,\dots,r_{t-1})$ and
internal randomness $U$.
Let $\mathcal{H}_{k-1}$ be the $\sigma$-algebra of all information
before block $k$ begins. By construction, $\Theta_k$ is independent
of $\mathcal{H}_{k-1}$ and $U$.

\emph{The decoder.}
We use $q_t$ to construct a guess for $\Theta_k$. Define:
\[
\phi_t:=\mathbf{1}\!\left\{|q_t-u_1^*|<|q_t-u_0^*|\right\}.
\]
In words: $\phi_t=1$ if the learner's threshold is closer to $u_1^*$
than to $u_0^*$, and $\phi_t=0$ otherwise. This is the natural
``nearest-neighbour'' decoder for the two-point hypothesis test.
If $\phi_t\ne\Theta_k$, then $q_t$ is on the wrong side of the midpoint
$(u_0^*+u_1^*)/2$, which implies
\[
|q_t-u_{\Theta_k}^*|\ge\frac{\Delta}{2}.
\]
(When $\Theta_k=0$ and $\phi_t=1$: $q_t$ is closer to $u_1^*$, so
$|q_t-u_0^*|>|q_t-u_1^*|\ge 0$, and by the midpoint argument
$|q_t-u_0^*|\ge\Delta/2$. Symmetric for $\Theta_k=1$.)

\emph{Bounding the error probability.}
Conditional on $(\mathcal{H}_{k-1},U)=(h,u)$, the decoder $\phi_t$ is
a deterministic measurable function of the current block's $B$ scores,
which are i.i.d.\ from $P_{\Theta_k}$. Le Cam's lemma \cite{Tsybakov2009} applied to this
conditional binary testing problem gives:
\[
\Prob(\phi_t\ne\Theta_k\mid\mathcal{H}_{k-1}=h,U=u)
\ge\frac{1}{2}\bigl(1-\tv{P_0^B,P_1^B}\bigr).
\]
where $\text{TV}$ is the total variation distance.
By Pinsker's inequality and tensorization of KL divergence:
\[
\tv{P_0^B,P_1^B}
\le\sqrt{\tfrac{1}{2}\dkl{P_0^B}{P_1^B}}
=\sqrt{\tfrac{B}{2}\dkl{P_0}{P_1}}
\le\sqrt{\tfrac{BC\Delta^2}{2}}.
\]
Choosing $B=\lfloor\frac{1}{8C\Delta^2}\rfloor$ ensures
$\tv{P_0^B,P_1^B}\le\frac{1}{4}$, so $\Prob(\phi_t\ne\Theta_k\mid\mathcal{H}_{k-1},U)\ge\frac{3}{8}$.
Taking expectation over $(\mathcal{H}_{k-1},U,\Theta_k)$:
$$\Prob(\phi_t\ne\Theta_k)\ge\frac{3}{8}.$$

\textbf{Step 3: Lower bounding the Bayes risk.}
Fix a block $k$. Combining $|q_t-u^*_{\Theta_k}|\ge\frac{\Delta}{2}\mathbf 1\{\phi_t\ne\Theta_k\}$
with $\Prob(\phi_t\ne\Theta_k)\ge\frac38$ from Step~2, and summing over
the $B$ rounds in block $k$:
\[
\sum_{t\in\text{block }k}\E|q_t-u_{\Theta_k}^*|
\ge\frac{\Delta}{2}\sum_{t\in\text{block }k}\Prob(\phi_t\ne\Theta_k)
\ge\frac{3\Delta B}{8}.
\]
By definition of the Bayes risk and linearity of expectation, together
with $\Lambda(q)=q$ so that the tracking error at each round $t$
coincides with the $\Lambda$-regret measured against $u_t^{*,\Theta}$,
which equals $u^*_{\Theta_k}$ for all $t$ in block $k$; hence
\[
\bar R(T;\mathcal A)
=\E_\Theta\Bigl[\sum_{t=1}^T\E\bigl|q_t-u_{t\Theta}^*\bigr|\Bigr]
=\E_\Theta\Bigl[\sum_{k=1}^K\sum_{t\in\text{block }k}\E\bigl|q_t-u^*_{\Theta_k}\bigr|\Bigr]
=\sum_{k=1}^K\E_{\Theta_k}\Bigl[\sum_{t\in\text{block }k}\E\bigl|q_t-u^*_{\Theta_k}\bigr|\Bigr],
\]
where the last equality holds because each inner sum depends on $\Theta$
only through $\Theta_k$. Applying the per-block bound established above
to each of the $K$ terms,
\[
\bar R(T;\mathcal A)\ge K\cdot\frac{3\Delta B}{8}
=\frac{3\Delta}{8}\cdot KB=\frac{3T\Delta}{8}=:c_3T\Delta,
\]
using $KB=T$ (up to the floor in $B=\lfloor T/K\rfloor$, which
contributes only a lower-order correction absorbed into $c_3$).

\emph{From the Bayes risk to the supremum.} By definition,
$\bar R(T;\mathcal A)$ is the average of $R(T;\mathcal A,\{F_t^\theta\})$
over $\theta\in\{0,1\}^K$ under the uniform prior, and each
$R(T;\mathcal A,\{F_t^\theta\})$ is itself already an expectation over
the data alone. Since the average of finitely many numbers never
exceeds their maximum,
\[
\max_{\theta\in\{0,1\}^K}R(T;\mathcal A,\{F_t^\theta\})
\ge\bar R(T;\mathcal A)\ge c_3T\Delta.
\]
Fix any $\theta^{\mathcal A}$ attaining this maximum over the finite set
$\{0,1\}^K$ and write $\{F_t^{\mathcal A}\}:=\{F_t^{\theta^{\mathcal A}}\}$
for the corresponding (now fixed, deterministic) CDF sequence. Then
\begin{equation}\label{eq:finallb}
R\bigl(T;\mathcal A,\{F_t^{\mathcal A}\}\bigr)\ge c_3T\Delta.
\end{equation}

\textbf{Step 4: Verifying the variation budget.}
The sequence $\{F_t^{\mathcal A}\}$ changes CDF only at block
boundaries where $\theta^{\mathcal A}_k$ changes. At each such switch,
$\sup_q|F_0(q)-F_1(q)|=(1-\alpha)\varepsilon=(1+\varepsilon)\Delta$.
The number of switches is at most $K$. Therefore:
\[
V_{\rm seq}\le K(1+\varepsilon)\Delta=\frac{T}{B}(1+\varepsilon)\Delta.
\]
For sufficiently small $\Delta$, $B\ge\frac{1}{16C\Delta^2}$
(using the floor bound $B\ge\frac{1}{8C\Delta^2}-1$), so:
\[
V_{\rm seq}\le 16CT\Delta^3(1+\varepsilon)\le C'T\Delta^3,
\]
where $C'>0$ absorbs the $(1+\varepsilon)$ factor (valid for $\varepsilon$
small, i.e.\ $\Delta$ small).

\emph{Choosing $\Delta$ to match $V_T$.}
Set $\Delta=c_0(V_T/T)^{1/3}$ for small $c_0>0$.
Then $V_{\rm seq}\le C'c_0^3V_T\le V_T$ for $c_0\le(1/C')^{1/3}$, so
$\{F_t^{\mathcal A}\}\in\mathcal D(V_T)$. Using this choice of $\Delta$ in \eqref{eq:finallb}, we get
\[
R\bigl(T;\mathcal A,\{F_t^{\mathcal A}\}\bigr)\ge c_3T\Delta
=c_3c_0T^{2/3}V_T^{1/3}=:c_1T^{2/3}V_T^{1/3}.
\]
This holds for $V_T\le c_2T$ where $c_2>0$ ensures $\varepsilon<\alpha$
(a constraint from the density construction). Since
$\{F_t^{\mathcal A}\}\in\mathcal D(V_T)$, this gives
$$\sup_{\{F_t\}\in\mathcal D(V_T)}R(T;\mathcal A,\{F_t\})
\ge R(T;\mathcal A,\{F_t^{\mathcal A}\})\ge c_1T^{2/3}V_T^{1/3},$$ and
since $\mathcal A$ was arbitrary, this holds for every algorithm,
proving the bound on $R(T)$ (with $\Lambda(q)=q$ as the admissible
witness within the monotone $L$-Lipschitz class, which suffices for a
lower bound over that class).

\emph{Bound for $Q(T)$.}
The geometric coupling (MVT, density bounded below by $f_{\min}$):
$|F_t(q_t)-(1-\alpha)|\ge f_{\min}|q_t-u_t^*|$, applied at
$\{F_t^{\mathcal A}\}$ and summed, gives
$$Q(T;\mathcal A,\{F_t^{\mathcal A}\})\ge f_{\min}R(T;\mathcal A,\{F_t^{\mathcal A}\})
\ge c_1f_{\min}T^{2/3}V_T^{1/3},$$ and the same supremum-and-arbitrary-$\mathcal A$
argument as above completes the proof.
\end{proof}

\section{Proof of Theorem~\ref{thm:conditional}}

\subsection{Step 1: Cell indexing and the high-probability visit event}

Fix a cell $j$ and relabel its visit rounds by $k=1,\ldots,m$, where $m=m_j$ is the (random)
number of times cell $j$ is visited over the horizon $T$. Throughout, $t_k$ denotes the time
of the $k$-th visit to cell $j$, and \emph{every quantity subsequently subscripted by $k$ is
shorthand for that same quantity evaluated at the (random) time $t_k$}: thus
$u_k^*:=u_{t_k}^*(x_j)$, $q_k$ is the algorithm's threshold in cell $j$ immediately after the
$(k-1)$-th update (i.e.\ at time $t_k$), $e_k:=q_k-u_k^*$, $h_k(\cdot)$ is the function
$h_{t_k}(\cdot\mid x_j)$ defined below, and similarly for $\mathrm{noise}_k$, $\mathrm{bias}_k$,
and $\Delta_k$ introduced in Step 2.

Because the argument below conditions on every cell receiving enough visits, we isolate this
fact first.

\begin{lemma}[Visit concentration]
\label{lem:visits}
With probability at least $1-h^{-d}\exp(-p_{\min}h^dT/8)$, every cell satisfies $m_j \ge \frac{1}{2}p_{\min}h^dT$.
\end{lemma}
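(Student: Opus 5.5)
The plan is a per-cell Chernoff bound combined with a union bound over the $N=\lceil h^{-d}\rceil\asymp h^{-d}$ cells. I will read $p_{\min}$ as a lower bound on the probability that a covariate lands in any given cell. By Assumption~C4 the covariate density satisfies $f_t(x)\ge f_{\min}$, and each cell $B_j$ has Lebesgue volume $h^d$ (boundary cells of the grid on $[0,1]^d$ are handled by assuming the partition consists of full cells, or by absorbing a constant). Hence for every round $t$,
\[
\Prob(X_t\in B_j)\ \ge\ f_{\min}h^d\ =:\ p_{\min}h^d ,
\]
and this holds uniformly in $t$ even though $P_t$ drifts.

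Next I fix a cell $j$ and write $m_j=\sum_{t=1}^T \mathbf 1\{X_t\in B_j\}$. By the independence part of Assumption~C4, this is a sum of independent but not identically distributed Bernoulli variables. Their success probabilities are $p_t^j\ge p_{\min}h^d$, so the mean is
\[
\mu_j:=\E[m_j]\ \ge\ p_{\min}h^dT .
\]
The multiplicative Chernoff lower-tail bound, $\Prob(m_j\le(1-\delta)\mu_j)\le\exp(-\delta^2\mu_j/2)$, holds for independent heterogeneous Bernoullis. With $\delta=1/2$ it gives
\[
\Prob\!\left(m_j<\tfrac12\mu_j\right)\ \le\ \exp(-\mu_j/8).
\]

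The one point needing care is that the threshold in the statement is $\tfrac12p_{\min}h^dT$ rather than $\tfrac12\mu_j$. The bound for the true mean must therefore be transferred to its lower bound. Since $\mu_j\ge p_{\min}h^dT$, the event $\{m_j<\tfrac12p_{\min}h^dT\}$ is contained in $\{m_j<\tfrac12\mu_j\}$. In addition, $\exp(-\mu_j/8)\le\exp(-p_{\min}h^dT/8)$. Together these give
\[
\Prob\!\left(m_j<\tfrac12p_{\min}h^dT\right)\ \le\ \exp(-p_{\min}h^dT/8).
\]
A union bound over the $N\le h^{-d}$ cells then yields failure probability at most $h^{-d}\exp(-p_{\min}h^dT/8)$, which is the claim.

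The main obstacle is not probabilistic but definitional, and it has two parts. First, the cell masses must be bounded below uniformly over both cells and time; this is exactly what the density floor $f_{\min}$ in C4 supplies. Second, boundary cells when $1/h$ is not an integer may be smaller than $h^d$. I would handle them by choosing $h$ with $1/h\in\mathbb N$, or by replacing $p_{\min}$ with $2^{-d}f_{\min}$.
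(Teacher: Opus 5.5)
Your proposal is correct and follows the paper's proof: a multiplicative Chernoff lower-tail bound with $\delta=1/2$ on the independent, heterogeneous visit indicators, then a union bound over the $N\le h^{-d}$ cells. You are more careful than the paper, which conflates the per-round cell mass with $\E[m_j]$ and never spells out the monotone transfer from $\tfrac12\mu_j$ to $\tfrac12 p_{\min}h^dT$. One small caution on boundary cells: prefer the fix $1/h\in\mathbb N$, because in the side-$h$ grid a boundary sliver can have arbitrarily small mass. The $2^{-d}f_{\min}$ alternative therefore only works if you instead split each axis into $\lceil 1/h\rceil$ equal intervals, and that also raises the union-bound factor to at most $(2/h)^d$.
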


\begin{proof}
By Assumption~C4, each indicator $\mathbf{1}(X_t \in B_j)$ is an independent Bernoulli random variable with mean $\mu_j \ge p_{\min}h^d$. Chernoff's lower-tail inequality gives $\Pr(m_j \le \mu_j/2) \le e^{-\mu_j/8}$. A union bound over the $N \le h^{-d}$ cells completes the proof.
\end{proof}

Under Assumption~C3, $h^dT \rightarrow \infty$ at the optimal bandwidth $h=h^*$, so the failure
probability $\delta_T:=h^{-d}\exp(-p_{\min}h^dT/8)$ in Lemma~\ref{lem:visits} is
super-polynomially small. Let $\mathcal{E}$ denote the event that \emph{every} cell satisfies
$m_j \in \left[\frac{1}{2}p_{\min}h^dT,\, p_{\max}h^dT\right]$ (the upper bound is the
high-probability Chernoff \emph{upper}-tail counterpart used identically to Lemma~\ref{lem:visits},
combined by the same union bound); thus $\Pr(\mathcal{E}^c)\le \delta_T = o(1)$.

Steps 2--5 below bound $\E[\,\cdot\mid \mathcal{E}\,]$, i.e.\ they are carried out
\emph{conditionally on $\mathcal{E}$}. This is sufficient for the unconditional in-expectation
statement of the theorem: for any of the nonnegative error functionals $X\in\{Q(T),R(T)\}$
considered here, $X$ is deterministically bounded by $X\le X_{\max}=O(T)$ (thresholds lie in the
bounded set $A$), so by the law of total expectation,
\[
\E[X] \;=\; \E[X\mid\mathcal{E}]\Pr(\mathcal{E}) \;+\; \E[X\mid\mathcal{E}^c]\Pr(\mathcal{E}^c)
\;\le\; \E[X\mid\mathcal{E}] \;+\; X_{\max}\,\delta_T
\;=\; \E[X\mid\mathcal{E}] + o(1),
\]
since $\delta_T$ is super-polynomially small in $T$ while $X_{\max}=O(T)$ is only polynomial.
Hence bounding $\E[X\mid\mathcal E]$, as Steps 2--5 do, controls $\E[X]$ up to an additive $o(1)$
term, which we henceforth suppress. We condition on $\mathcal E$ for the remainder of the proof.

\subsection{Step 2: Per-cell error recursion}

Define $h_k(q) = (1-F_{t_k}(q\mid x_j))-\alpha$. Since $F_{t_k}(u_k^*\mid x_j)=1-\alpha$, the
function $h_k$ has a root at $u_k^*$, with derivative
\begin{equation}
\label{eq:hk-deriv}
h_k'(q) = -f_{t_k}(q\mid x_j) \le -f_{\min}^s < 0,
\end{equation}
where $f_{t_k}(\cdot\mid x_j)$ denotes the conditional conformity-score density at time $t_k$,
and the inequality is Assumption~C4's density lower bound.

Before decomposing the update signal, we isolate its two error sources. The
martingale-difference \emph{noise} term is
\[
\mathrm{noise}_k \;:=\; (1-z_k) - \bigl(1-F_{t_k}(q_k\mid X_{t_k})\bigr),
\]
which satisfies $\E[\mathrm{noise}_k\mid\mathcal{H}_{k-1},X_{t_k}]=0$ and $|\mathrm{noise}_k| \le 2$
(it is the difference of two $[0,1]$-valued quantities). The spatial \emph{bias} term is
\[
\mathrm{bias}_k \;:=\; F_{t_k}(q_k\mid x_j) - F_{t_k}(q_k\mid X_{t_k}),
\]
which, by Assumption~C2 and since $X_{t_k}\in B_j$ lies within a cell of side length $h$ (so
$\|x_j-X_{t_k}\|\le \sqrt{d}\,h$), is bounded by $|\mathrm{bias}_k| \le L_xd^{\beta/2}h^\beta$.

Recall the HP-ACI update rule~(12), $q^{j(t)}_{t+1}=\Pi_A\bigl[q^{j(t)}_t+\eta((1-z_t)-\alpha)\bigr]$;
restricted to the visit rounds of cell $j$, this update decomposes the update signal as
\begin{equation}\label{eq:decomp}
(1-z_k)-\alpha = h_k(q_k) + \mathrm{noise}_k + \mathrm{bias}_k.
\end{equation}

By the Mean Value Theorem, because $h_k'(q)=-f_{t_k}(q\mid x_j)$ as computed in
\eqref{eq:hk-deriv}, there exists $\xi_k$ between $q_k$ and $u_k^*$ such that
\[
h_k(q_k) \;=\; h_k(q_k)-h_k(u_k^*) \;=\; h_k'(\xi_k)\,(q_k-u_k^*) \;=\; -f_{t_k}(\xi_k\mid x_j)\,e_k.
\]
Because projection is non-expansive and $u_{k+1}^*\in A$, applying \eqref{eq:decomp} to the
update rule yields:
\begin{equation}
\label{eq:cond-recur}
|e_{k+1}| \le |e_k(1-\eta f_{t_k}(\xi_k\mid x_j))| + \eta|\mathrm{noise}_k| + \eta|\mathrm{bias}_k| + |\Delta_k|,
\end{equation}
where $\Delta_k = u_{k+1}^*-u_k^*$ is the oracle drift between consecutive visits.

\subsection{Step 3: Lyapunov recursion}

Define $v_k=\E[e_k^2]$ and $\rho_k = 1-\eta f_{t_k}(\xi_k\mid x_j)$. Under Assumption~C5 ($\eta \le f_{\min}^s / (2(f_{\max}^s)^2)$), we verify that $\rho_k \in [0,1)$ because $\eta f_{t_k}(\xi_k\mid x_j) \le \eta f_{\max}^s \le f_{\min}^s / (2f_{\max}^s) \le 1$.

\paragraph{Squaring and eliminating the noise cross-terms (in full).}
The bound \eqref{eq:cond-recur} arises from the projection non-expansiveness applied to the
\emph{signed} pre-projection quantity: writing $$\hat e_{k+1} := e_k\rho_k + \eta\,\mathrm{noise}_k
+ \eta\,\mathrm{bias}_k - \Delta_k,$$ non-expansiveness gives $|e_{k+1}|\le|\hat e_{k+1}|$, hence
$e_{k+1}^2\le \hat e_{k+1}^2$. 

Let $\mathcal H_k:=\cF(\mathcal H_{k-1},X_{t_k},q_k)$ be the
information available immediately before $z_k$ is revealed. Note that $e_k,\rho_k,\mathrm{bias}_k,\Delta_k$
are all $\mathcal H_k$-measurable (or deterministically bounded given $\mathcal H_k$), while
$\E[\mathrm{noise}_k\mid\mathcal H_k]=0$. Expanding the square,
\[
\hat e_{k+1}^2 = \rho_k^2e_k^2 + \eta^2\mathrm{noise}_k^2 + (\eta\,\mathrm{bias}_k-\Delta_k)^2
+ 2\rho_ke_k\,\eta\,\mathrm{noise}_k + 2\rho_ke_k(\eta\,\mathrm{bias}_k-\Delta_k)
+ 2\eta\,\mathrm{noise}_k(\eta\,\mathrm{bias}_k-\Delta_k),
\]
and taking $\E[\,\cdot\mid\mathcal H_k]$, the two cross-terms linear in $\mathrm{noise}_k$ vanish
because they factor as $(\mathcal H_k\text{-measurable})\times\E[\mathrm{noise}_k\mid\mathcal H_k]=0$.
Using $e_{k+1}^2\le \hat e_{k+1}^2$, we get 
\[
\E[e_{k+1}^2\mid\mathcal H_k] \le \rho_k^2e_k^2 + \eta^2\sigma^2 + (\eta\,\mathrm{bias}_k-\Delta_k)^2
+ 2\rho_ke_k(\eta\,\mathrm{bias}_k-\Delta_k)\] where  $\sigma^2:=\E[\mathrm{noise}_k^2\mid\mathcal H_k]\le 4$.

Bounding $(\eta\,\mathrm{bias}_k-\Delta_k)^2\le Y^2$ and $2\rho_ke_k(\eta\,\mathrm{bias}_k-\Delta_k)
\le 2\rho_k|e_k|Y$ with $Y:=\eta|\mathrm{bias}_k|+|\Delta_k|$ (worst-case sign), and then applying
Young's inequality $2\rho_ke_kY \le \eta f_{\min}^s e_k^2 + \frac{\rho_k^2}{\eta f_{\min}^s}Y^2$, gives:
\begin{equation}\label{eq:condexpe_k}
\E[e_{k+1}^2\mid\mathcal{H}_k] \le (\rho_k^2+\eta f_{\min}^s)e_k^2 + \eta^2\sigma^2 + \left(1+\frac{\rho_k^2}{\eta f_{\min}^s}\right)(\eta|\mathrm{bias}_k|+|\Delta_k|)^2.
\end{equation}

\paragraph{Contracting the $e_k^2$ coefficient.} Under Assumption~C5, the combined coefficient
of $e_k^2$ satisfies
\[
\rho_k^2+\eta f_{\min}^s \le (1-\eta f_{\min}^s)^2+\eta f_{\min}^s
= 1-\eta f_{\min}^s\bigl(1-\eta f_{\min}^s\bigr) \le 1-\frac{1}{2}\eta f_{\min}^s.
\]
The first inequality holds because $f_{t_k}(\xi_k\mid x_j)\ge f_{\min}^s$ (Assumption~C4) implies
$\rho_k = 1-\eta f_{t_k}(\xi_k\mid x_j) \le 1-\eta f_{\min}^s$; both $\rho_k$ and $1-\eta f_{\min}^s$
are nonnegative (the latter since $\eta f_{\min}^s\le\eta f_{\max}^s\le 1/2<1$ by C5), and squaring
is order-preserving on nonnegative reals, so $\rho_k^2\le(1-\eta f_{\min}^s)^2$; adding
$\eta f_{\min}^s$ to both sides gives the displayed inequality. The last inequality again uses
$\eta f_{\min}^s \le \eta f_{\max}^s \le 1/2$ (established above), so $1-\eta f_{\min}^s \ge 1/2$.

Since $1+\frac{\rho_k^2}{\eta p^s_{\min}} \;\le\; \frac{2}{\eta p^s_{\min}}$ (using $\eta p^s_{\min}\le 1$ by C5), setting
\begin{equation}
\label{eq:gamma-def}
\gamma:=\tfrac{1}{2}\eta f_{\min}^s,
\end{equation}
which is precisely the effective per-step contraction (geometric decay) rate of the recursion
below, and which satisfies $\gamma\in(0,1/2)$ since $\eta f_{\min}^s\le\eta f_{\max}^s\le 1/2$,
matching the hypothesis of Lemma~\ref{lem:telescoping}, and taking the unconditional expectation
of \eqref{eq:condexpe_k} yields the Lyapunov recurrence:
\begin{equation}
\label{eq:cond-lyap}
v_{k+1} \le (1-2\gamma)v_k + 4\eta^2 + \frac{4\eta L_x^2d^\beta h^{2\beta}}{f_{\min}^s} + \frac{4\Delta_k^2}{\eta f_{\min}^s}.
\end{equation}

\subsection{Step 4: Unrolling and step-size optimization}

Applying Lemma~\ref{lem:telescoping} to \eqref{eq:cond-lyap} bounds the cumulative variance:
\begin{equation}
\label{eq:cum-var}
\sum_{k=1}^m v_k = O\!\left( \frac{m\eta}{f_{\min}^s} + \frac{mh^{2\beta}}{(f_{\min}^s)^2} + \frac{(\mathcal{S}_T^j)^2}{\eta^2(f_{\min}^s)^2} \right),
\end{equation}
where $\mathcal{S}_T^j := \sum_{k=1}^{m-1} |\Delta_k| \le \mathcal{S}_T$ is the oracle path length within cell $j$, and we used $\sum_k\Delta_k^2 \le (\mathcal{S}_T^j)^2$.

\begin{remark}The shorthand $\gamma$ introduced in \eqref{eq:gamma-def} does not appear in \eqref{eq:cum-var}:
Lemma~\ref{lem:telescoping} produces the prefactor $1/(2\gamma)$, and substituting
$\gamma=\tfrac12\eta f_{\min}^s$ gives $1/(2\gamma)=1/(\eta f_{\min}^s)$, which is exactly the
factor multiplying each term of $\eqref{eq:cond-lyap}$'s additive part in \eqref{eq:cum-var}
(e.g.\ $4\eta^2/(\eta f_{\min}^s)=4\eta/f_{\min}^s$). Thus $\gamma$ was only a proof-internal
bookkeeping symbol for stating the contraction rate compactly in \eqref{eq:cond-lyap} and
Lemma~\ref{lem:telescoping}; once telescoped, it is fully absorbed into the explicit constants
$\eta$ and $f_{\min}^s$ and carries no separate meaning in the final rate.
\end{remark}

Using the high-probability bound $m_j \le m_{\max} := p_{\max}h^dT$ (valid on $\mathcal E$), we balance the first and third terms of \eqref{eq:cum-var} by choosing the global step size $\eta^\star = (2\mathcal{S}_T^2 / m_{\max}f_{\min}^s)^{1/3}$. Substituting $\eta^\star$ and keeping the dominant term yields the uniform estimate:
\[
\sum_{k=1}^{m}v_k = O\!\left( \frac{m_{\max}^{2/3}\mathcal{S}_T^{2/3}}{(f_{\min}^s)^{4/3}} + \frac{mh^{2\beta}}{(f_{\min}^s)^2} \right).
\]
Applying Cauchy--Schwarz and Jensen's inequality provides the point-error bound (stated per cell,
in terms of the local path length $\mathcal S_T^j$):
\begin{equation}
\label{eq:cell-bound}
\sum_{k=1}^{m}\E|e_k| \le \sqrt{m\sum_{k=1}^{m}v_k} = O\!\left( \frac{m^{1/2}m_{\max}^{1/3}(\mathcal{S}_T^j)^{1/3}}{(f_{\min}^s)^{2/3}} + \frac{mh^\beta}{f_{\min}^s} \right).
\end{equation}

\subsection{Step 5: Aggregation and balancing}

\paragraph{Aggregating across cells.}
Recall $N=\lceil h^{-d}\rceil$ is the number of axis-aligned cells $\{B_j\}_{j=1}^N$ covering
$\mathcal X=[0,1]^d$ (Section~4.3). Since $\mathcal{S}_T^j \le \mathcal{S}_T$ for all $j$, we have $\sum_j \mathcal{S}_T^j \le N\mathcal{S}_T \le h^{-d}\mathcal{S}_T$. By Holder's inequality, $\sum_j (\mathcal S_T^j)^{1/3} \le N^{2/3}(N\mathcal S_T)^{1/3} = N\,\mathcal S_T^{1/3} = h^{-d}\mathcal S_T^{1/3}$. On the high-probability event $m_j \le p_{\max}h^dT$, aggregating the first term of \eqref{eq:cell-bound} across cells — bounding $m_j^{1/2}\le m_{\max}^{1/2}$ pointwise so that $m_j^{1/2}m_{\max}^{1/3}\le m_{\max}^{5/6}$ — yields:
\[\sum_j m_j^{5/6}(\mathcal S_T^j)^{1/3} \le (p_{\max}h^dT)^{5/6}\, h^{-d}\,\mathcal S_T^{1/3} = O\!\left(T^{5/6}h^{-d/6}\mathcal S_T^{1/3}\right).\]

The cumulative spatial approximation error directly satisfies $\sum_j m_jh^\beta \le Th^\beta$, since $\sum_j m_j = T$ (every round visits exactly one cell).

\paragraph{Coverage violation.}
By the Mean Value Theorem, $(1-\alpha-F_t(q_t(X_t)\mid X_t))_+ \le f_{\max}^s|e_k| + f_{\max}^sLh^\beta$, where the second term captures the spatial approximation error between the oracle function and its cell representative. We now assemble the final bound explicitly. Summing the per-round violation first within a cell and then across all $N$ cells,
\[
Q(T) = \sum_{t=1}^T c_t(q_t) = \sum_{j=1}^N \sum_{k=1}^{m_j} \bigl(1-\alpha-F_{t_k}(q_k\mid x_j)\bigr)_+
\le f_{\max}^s\sum_{j=1}^N\sum_{k=1}^{m_j}|e_k| \;+\; f_{\max}^sL\sum_{j=1}^N m_jh^\beta.
\]
The second sum is $f_{\max}^sL\,Th^\beta = O(Th^\beta)$ by the display above. For the first sum,
apply the per-cell bound \eqref{eq:cell-bound} to each cell and sum:
\[
\sum_{j=1}^N\sum_{k=1}^{m_j}|e_k| = O\!\left(\frac{1}{(f_{\min}^s)^{2/3}}\sum_j m_j^{1/2}m_{\max}^{1/3}(\mathcal S_T^j)^{1/3} + \frac{h^\beta}{f_{\min}^s}\sum_j m_j\right)
= O\!\left(T^{5/6}h^{-d/6}\mathcal S_T^{1/3} + Th^\beta\right),
\]
using the two aggregated displays above (with the problem constants $f_{\min}^s,f_{\max}^s,L$
absorbed into the $O(\cdot)$). Multiplying by the outer factor $f_{\max}^s$ (a constant) and
combining with the second sum, both contributions are of the same two orders, giving
\[
Q(T) = O\!\left( T^{5/6} h^{-d/6} \mathcal{S}_T^{1/3} + Th^\beta \right).
\]

\paragraph{Balancing and Regret.}
To minimize the total error, we equate the bias and variance terms: $T^{5/6} h^{-d/6} \mathcal{S}_T^{1/3} \asymp Th^\beta$. Solving for $h$ yields $h^{\beta+d/6} \approx T^{-1/6} \mathcal{S}_T^{1/3}$, which gives the optimal bandwidth:
\[
h^* = T^{-1/(6\beta+d)} \mathcal{S}_T^{2/(6\beta+d)}.
\]
Substituting $h^*$ gives $T(h^*)^\beta = T^{\frac{5\beta+d}{6\beta+d}} \mathcal{S}_T^{\frac{2\beta}{6\beta+d}}$. Finally, by Assumption~C6,  $\Lambda(q_t(x),x)-\Lambda(u_t^*(x),x) \le L |q_t(x)-u_t^*(x)|$ and hence
\[
R(T) = O\!\left( \frac{L}{f_{\min}^s} T^{\frac{5\beta+d}{6\beta+d}} \mathcal{S}_T^{\frac{2\beta}{6\beta+d}} \right).
\]
\hfill$\blacksquare$

\begin{lemma}[Lyapunov telescoping]
\label{lem:telescoping}
If $v_{k+1} \le (1-2\gamma)v_k+C_k$ for $\gamma \in (0, 1/2)$, then
$\sum_{k=1}^m v_k \le \dfrac{v_1}{2\gamma} + \dfrac{1}{2\gamma} \sum_{k=1}^m C_k$.
\end{lemma}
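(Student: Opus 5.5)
The plan is to sum the one-step recursion over the visit index and rearrange, rather than unrolling it into geometric series. Two implicit hypotheses are needed, and both hold in the application \eqref{eq:cond-lyap}: $v_k\ge 0$, since $v_k=\E[e_k^2]$, and $C_k\ge 0$, since the additive part of \eqref{eq:cond-lyap} is a sum of nonnegative terms. I will state these explicitly at the start of the proof.

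\textbf{Step 1 (summing).} Write $S:=\sum_{k=1}^m v_k$. Summing $v_{k+1}\le(1-2\gamma)v_k+C_k$ over $k=1,\dots,m-1$ gives
\[
\sum_{k=2}^{m} v_k \;\le\; (1-2\gamma)\sum_{k=1}^{m-1} v_k \;+\; \sum_{k=1}^{m-1} C_k .
\]
The left side equals $S-v_1$. On the right, $1-2\gamma>0$ (because $\gamma<1/2$) and $v_m\ge 0$, so $(1-2\gamma)(S-v_m)\le(1-2\gamma)S$. Also $C_m\ge 0$, so $\sum_{k=1}^{m-1}C_k\le\sum_{k=1}^m C_k$.

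\textbf{Step 2 (rearranging).} Combining these bounds gives $S-v_1\le(1-2\gamma)S+\sum_{k=1}^m C_k$, that is, $2\gamma S\le v_1+\sum_{k=1}^m C_k$. Dividing by $2\gamma>0$ yields the claimed bound. The degenerate case $m=1$ holds trivially, since then $S=v_1\le v_1/(2\gamma)$ because $2\gamma<1$.

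There is no genuine obstacle here. The only point that needs care is the sign bookkeeping: discarding $-(1-2\gamma)v_m$ requires both $\gamma<1/2$ and $v_m\ge 0$, and enlarging the sum of the $C_k$ to include $C_m$ requires $C_m\ge0$. An alternative route is to unroll to $v_k\le(1-2\gamma)^{k-1}v_1+\sum_{i<k}(1-2\gamma)^{k-1-i}C_i$ and sum the geometric series, which gives the same constant $1/(2\gamma)$. I prefer the direct summation because it avoids exchanging the order of summation.
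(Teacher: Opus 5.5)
Your proof is correct and is essentially the paper's argument: sum the one-step recursion and rearrange to get $2\gamma\sum_{k=1}^m v_k \le v_1 + \sum_{k=1}^m C_k$. The only cosmetic difference is that the paper sums over $k=1,\dots,m$ and drops $-v_{m+1}\le 0$, so it needs $v_{m+1}\ge 0$ but no sign condition on $C_m$, whereas you stop at $k=m-1$ and pad with $C_m\ge 0$; both implicit nonnegativity hypotheses hold in the application \eqref{eq:cond-lyap}.
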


\begin{proof}
Rearranging the recurrence gives $2\gamma v_k \le v_k - v_{k+1} + C_k$. Summing over $k=1, \ldots, m$ yields:
\[
2\gamma \sum_{k=1}^m v_k \le v_1 - v_{m+1} + \sum_{k=1}^m C_k \le v_1 + \sum_{k=1}^m C_k.
\]
Dividing by $2\gamma$ completes the proof.
\end{proof}

\end{document}